\documentclass[11pt]{elsarticle}

\usepackage[margin=1in,papersize={8.5in,11in}]{geometry}

\usepackage{times}

\usepackage{amssymb,amsfonts,amsmath,mathrsfs,mathtools}
\usepackage{bm}
\usepackage{amsthm}
\usepackage{nicefrac}

\usepackage{graphicx}
\usepackage{epsfig,subfigure}
\usepackage{svg}
\usepackage{rotating}

\usepackage{multirow}
\usepackage{booktabs}
\usepackage{array}

\usepackage{algorithm}
\usepackage{algorithmicx}
\usepackage{algpseudocode}

\usepackage{listings}
\usepackage{verbatim}

\usepackage{float}
\usepackage{enumitem}
\usepackage{textcomp}
\usepackage{manyfoot}
\usepackage[title]{appendix}

\usepackage{xcolor}
\definecolor{r}{rgb}{0.0, 0.0, 0.0}

\usepackage{url}
\usepackage{hyperref}

\newenvironment{keywords}{%
  \par\vspace{0.5\baselineskip}\noindent
  \textbf{Keywords: }%
}{\par}

\newtheorem{theorem}{Theorem}[]
\newtheorem{proposition}[theorem]{Proposition}

\newtheorem{lemma}[theorem]{Lemma}

\newtheorem{definition}{Definition}

\journal{Research in the Mathematical Sciences}


\begin{document}
\begin{frontmatter}

\title{Malliavin Calculus for Score-based Diffusion Models}
\author[ucsc-cs]{Ehsan Mirafzali\corref{correspondingAuthor}}
\cortext[correspondingAuthor]{Corresponding author}
\ead{smirafza@ucsc.edu}
\author[ucsc-cs]{Utkarsh Gupta}
\ead{utgupta@ucsc.edu}
\author[ucsc-am]{Patrick Wyrod}
\ead{pwyrod@ucsc.edu}
\author[oslo]{Frank Proske}
\ead{proske@math.uio.no}
\author[ucsc-am]{Daniele Venturi}
\ead{venturi@ucsc.edu}
\author[ucsc-cs]{Razvan Marinescu}
\ead{ramarine@ucsc.edu}
\address[ucsc-cs]{Department of Computer Science, University of California, Santa Cruz, 1156 High St, Santa Cruz, CA 95064, USA}
\address[oslo]{Department of Mathematics, University of Oslo, Problemveien 11, Oslo, 0313, Norway}
\address[ucsc-am]{Department of Applied Mathematics, University of California, Santa Cruz, 1156 High St, Santa Cruz, CA 95064, USA}

\begin{abstract}
We introduce a new mathematical framework based on Malliavin calculus to derive exact analytical expressions for the score function $\nabla \log p_t(x)$, i.e., the gradient of the log-density associated with the solution to stochastic differential equations (SDEs). Our approach combines classical integration-by-parts techniques with modern stochastic analysis tools, such as Bismut's formula and Malliavin calculus, and it can be applied to both linear and nonlinear SDEs. This allows us to establish a rigorous connection between the Malliavin derivative, its adjoint, the Malliavin divergence (Skorokhod integral), and generative diffusion models, thereby providing a systematic method for computing $\nabla \log p_t(x)$. In the linear case, we present a detailed analysis showing that our formula coincides with the analytical score function widely used in diffusion generative models such as DDPM. For nonlinear SDEs with state-independent diffusion coefficients, we derive a closed-form expression for $\nabla \log p_t(x)$. We test the proposed new mathematical framework across multiple generative tasks and find that its performance is comparable to state-of-the-art methods. Our results can be generalised to broader classes of SDEs, paving the way for new score-based generative models that go beyond traditional forward/backward diffusion processes.
\end{abstract}
\end{frontmatter}
\begin{keywords}
Score function; Bismut formula; Malliavin calculus; stochastic differential equations; generative modelling
\end{keywords}

\section{Introduction}

Generative diffusion models have become a standard tool for data generation. From a mathematical perspective, these models are built upon a ``forward'' stochastic process that drives an unknown target distribution,  represented, for instance, by an ensemble of images or audio recordings, towards a tractable prior distribution (typically a high-dimensional isotropic Gaussian), together with a corresponding ``reverse'' stochastic process that denoises samples from this prior to recover realisations from the original target distribution.
Perhaps one of the most effective and conceptually simple generative diffusion models is the Denoising Diffusion Probabilistic Model (DDPM) \cite{ho2020denoising}. In DDPM, the forward process is linear and discretised into a finite sequence of time steps governed by a prescribed noise schedule that progressively degrades structured information into randomness. The reverse process is parameterised by a neural network trained to iteratively remove noise and reconstruct samples from the target distribution. DDPM has achieved state-of-the-art performance in image generation and now serves as a benchmark for a wide range of generative modelling approaches.

A broader class of generative diffusion models which includes DDPM and applies to continuous-time formulations and nonlinear stochastic processes is the so-called {\em  score matching} \cite{song2021scorebased}. In these models, the generative task relies on a simulation of a reverse-time stochastic differential equation (SDE), whose coefficients depend on the score function, \(\nabla_x \log p_t(x)\), i.e., the gradient of the log probability density of the forward process. The score function is usually approximated using neural networks trained via appropriate cost functions \cite{song2020improved, Kong2020DiffWaveAV}. Generalising score-based diffusion models to nonlinear SDEs introduces a number of theoretical and numerical challenges \cite{10.1162/NECO_a_00142}. The linear case benefits from the tractability of the corresponding Fokker-Planck equation \cite{https://doi.org/10.1002/andp.19143480507, planck1917satz}, yielding Gaussian marginals with analytical denoising targets. However, nonlinear SDEs yield non-Gaussian transition densities with no closed analytical forms. 

In this paper we develop  a new comprehensive analysis of score-based diffusion models through the lenses of Malliavin calculus (Appendix \ref{app:malliavin_calculus}), and establish a unifying framework for advancing the broader field of nonlinear score-based diffusion generative models. Our approach combines classical integration-by-parts techniques with modern stochastic analysis tools, such as Bismut's formula, and it can be applied to both linear and nonlinear SDEs. This allows us to establish a rigorous connection between the Malliavin derivative, its adjoint, the Malliavin divergence (Skorokhod integral), and generative diffusion models, thereby providing a systematic method for computing $\nabla \log p_t(x)$ via closed-form expressions. In the linear case, we present a detailed analysis showing that our formula coincides with the well-known analytical score function widely used in diffusion generative models such as DDPM. 
Our results are built upon several recent contributions that 
complement this line of research, including exact 
closed-form expressions for the score of broad classes of 
nonlinear SDEs \cite{mirafzali2025malliavincalculusapproachscore,
ni2025divergencekernelmethodscoresrandom,
pidstrigach2025conditioningdiffusionsusingmalliavin},
and infinite-dimensional score-based diffusion models
 \cite{mirafzali2025scorebaseddiffusionmodelsinfinite,
greco2025malliavingammacalculusapproachscore}.

This paper is organised as follows. Section \ref{sec:methodology} 
details our methodology. Section \ref{sec:algorithms} presents computational 
algorithms for implementing score-based diffusion generative models within the framework of Malliavin calculus. Section \ref{sec:numerical} 
describes our numerical experiments and results. The main findings are summarised in Section \ref{sec:conclusion}. We also provide a series of appendices containing rigorous theoretical derivations and a singularity analysis of Variance Exploding (VE), Variance Preserving (VP), and sub-Variance Preserving (sub-VP) random processes (Appendix \ref{app:malliavin-analysis}), addressing numerical stability issues 
that may arise in practical implementations.

\section{Methodology}
\label{sec:methodology}

Consider an $m$-dimensional It\^o SDE of the form
\begin{equation}
\label{eq:forwardsde}
  d X_t = b(t,X_t)\,dt + \sigma(t)\,dB_t, \quad X_0 = x, \quad 0 \leq t \leq T,
\end{equation}
and let \(p_t\) denote the density of \(X_t\); in particular, write \(p=p_T\) for the 
terminal density.  Our objective is to determine the score 
function $\nabla_y \log p(y)$ in terms of computable 
quantities. To this end we will use tools from Malliavin calculus
(see Appendix \ref{app:malliavin_calculus}), specifically the notion of Malliavin matrix, 
covering vector field, and Skorokhod integral (Malliavin divergence). 

\begin{definition}[Malliavin matrix and covering vector field]
\label{def:covering-vector}
Let \( X_T \in \mathbb{R}^m \) be the solution of the SDE \ref{eq:forwardsde} at time \( T \), 
and let \( \gamma_{X_T} \in \mathbb{R}^{m \times m} \) be the Malliavin matrix, defined component-wise as \( \gamma_{X_T}(i,j) = \langle D X^i_T, D X^j_T \rangle_{L^2([0,T])} \), where \( D X^i_T \) is the Malliavin derivative of the \( i \)-th component \( X^i_T \) with respect to perturbations over \( [0,T] \), taking values in \( L^2([0,T], \mathbb{R}^d) \). Assume \( \gamma_{X_T} \) is almost surely invertible, with inverse \( \gamma_{X_T}^{-1} \). For each \( k = 1, \ldots, m \), we define the 
{covering vector field} \( u_k(t) \in \mathbb{R}^d \) as
\begin{equation}
u_k(t) = \sum_{j=1}^m \gamma_{X_T}^{-1}(k,j) D_t X^j_T, \quad t \in [0,T]
\label{covering}
\end{equation}
where \( D_t X^j_T \in \mathbb{R}^d \) is the Malliavin derivative of \( X^j_T \) at time \( t \), and \( \gamma_{X_T}^{-1}(k,j) \) denotes the \( (k,j) \)-th entry of the inverse Malliavin matrix.
\end{definition}

Using a Bismut-type formula  \cite{alma990005308880204808,ELWORTHY1994252,Elworthy_1982, Nualart_Nualart_2018}, the score function can be expressed in terms of the covering vector field  \eqref{covering}  as
\begin{equation}\label{eq:bismut}
 \partial_k \log p(y) = -\,\mathbb{E}\bigl[\delta(u_k) \mid X_T = y\bigr],\qquad k=1,\ldots, m
\end{equation}
where $\delta(\cdot)$ denotes the Skorokhod integral. Our objective is now to express 
$\delta(u_k)$ and $\mathbb{E}\bigl[\delta(u_k) \mid X_T = y\bigr]$ in an exact closed-form for various types of SDEs, including linear and nonlinear SDEs.

\subsection{Linear SDEs}

Consider the linear stochastic differential equation with additive noise
\begin{equation}
dX_t = b(t) X_t \, dt + \sigma(t) \, dB_t, \quad X_0 = x_0,
\label{linSDE}
\end{equation}
where \(X_t \in \mathbb{R}^m\), \(b(t)\) is an \(m \times m\) deterministic matrix, \(\sigma(t)\) is an \(m \times d\) deterministic matrix-valued function, \(B_t\) is a \(d\)-dimensional standard Brownian motion, and \(x_0 \in \mathbb{R}^m\) is a deterministic initial condition. Let \(Y_t\) be the first variation process associated with \eqref{linSDE} satisfying
\begin{equation}
dY_t = b(t) Y_t \, dt, \quad Y_0 = I_m,
\label{linfirstvar}
\end{equation}
where \(I_m\) is the \(m \times m\) identity matrix. 

\begin{theorem}
\label{thm:text-malliavinmatrix}
The Malliavin matrix \(\gamma_{X_T}\) associated with the solution of the linear SDE \eqref{linSDE} at time \(T > 0\) is given by
\begin{equation}
\label{eq:malliavinmatrix}
\gamma_{X_T} = Y_T \left( \int_0^T Y_r^{-1} \sigma(r) \sigma(r)^\top (Y_r^{-1})^\top \, dr \right) Y_T^\top
\end{equation}
\end{theorem}

\begin{proof}
The proof is given in Appendix~\ref{sec:malliavin-matrix}.

\end{proof}

\noindent 
To obtain the score function at time $T$ using the Bismut-type formula \eqref{eq:bismut}, we need to compute the Skorokhod integral $\delta(u_k)$ of the covering vector field $u_k$ and then take the negative of its conditional expectation given $X_T = y$. Direct computation of the Skorokhod integral is challenging, so we seek a simplification. If the integrand in the Skorokhod integral is adapted to the filtration, we can express it as an It\^o integral, which is much easier to compute.
In other words, we want to show that the following equality holds
\begin{equation}
  \delta(u_k)
  = \sum_{j=1}^m \gamma_{X_T}^{-1}(k,j) \int_0^T \Bigl[Y_T Y_t^{-1} \sigma(t)\Bigr]_{j\cdot} dB_t.
\label{itodelta}
\end{equation}
This integral captures the accumulated noise effect over $[0,T]$, adjusted by the first variation process. For a linear SDE with state-independent diffusion and linear drift, the first variation process $Y_t$ is deterministic (see Eq. \eqref{linfirstvar}). Consequently, the covering vector field $u_k$ becomes non-anticipating (adapted), allowing the Skorokhod integral to be rewritten as an It\^o integral. This is rigorously stated in the following theorem.

\begin{theorem}
\label{thm:text-reducing-skorokhod-to-ito}
Consider the linear SDE \eqref{linSDE} along with the first variation process $Y_t$, the covering vector field \eqref{covering}, and the Malliavin covariance matrix \eqref{eq:malliavinmatrix}. Let $b(t)$ and $\sigma(t)$ be continuous and bounded on $[0,T]$, and $\gamma_{X_T}$ be almost surely invertible. Then the covering vector field  is adapted to the filtration $\{\mathcal{F}_t\}$ and the Skorokhod integral reduces to the It\^o integral \eqref{itodelta}.
\end{theorem}

\begin{proof}
The proof is given in Appendix~\ref{app:skorokhod_to_ito}.    

\end{proof}

We now derive a closed-form expression for the score function of the linear SDE \eqref{linSDE}. Our  main result is summarised in the following theorem.

\begin{theorem}[Score function for linear SDEs]
\label{thm:text:main-thm-score-linear}
Consider the linear SDE \eqref{linSDE}. The score of the terminal density \( p_T(y) \) can be expressed in a closed-form as 
\begin{equation}
\label{eq:malliavin-score}
\nabla_y \log p(y) = -\gamma_{X_T}^{-1} \left( y - Y_T \mathbb{E}[X_0 \mid X_T = y] \right)
\end{equation}
where \( \gamma_{X_T} \) is the Malliavin covariance matrix \eqref{eq:malliavinmatrix}, and \( Y_t \) is the first variation process satisfying \eqref{linfirstvar}.
\end{theorem}

\begin{proof}
The proof  is given in Appendix~\ref{thm:linear-score-formula}.

\end{proof}

\noindent By substituting the expression for the Malliavin matrix into 
equation \ref{eq:malliavin-score}, we obtain
\begin{equation}
\label{eq:malliavin-score-expanded}
\nabla_y \log p(y) = -\left[ Y_T \left( \int_0^T Y_r^{-1} \sigma(r) \sigma(r)^\top (Y_r^{-1})^\top \, dr \right) Y_T^\top \right]^{-1} \left( y - Y_T \mathbb{E}[X_0 \mid X_T = y] \right)
\end{equation}
We observe that this expression is identical to the well-known score function derived from the solution of the Fokker-Planck equation for linear SDEs. Furthermore, the computation of the score function essentially reduces to estimating the conditional expectation \(\mathbb{E}[X_0 \mid X_T = y]\) for a linear SDE driven by additive noise (same as DDPM), since $Y_t$ is known analytically. 
This transforms the problem essentially into a regression task to estimate the conditional mean of the initial state \(X_0\) given the terminal state \(X_T = y\). The deterministic nature of \(Y_t\) and \(\gamma_{X_T}\) ensures that the score computation hinges solely on this expectation, effectively removing the necessity for explicit density estimation. In Section \ref{sec:algorithms} we provide numerical algorithms for implementing \eqref{eq:malliavin-score-expanded} in practical simulations.

\subsection{Nonlinear SDEs with state-independent diffusion}

We now extend Theorem \ref{thm:text:main-thm-score-linear} to nonlinear SDEs of the form \eqref{eq:forwardsde}. 

\begin{theorem}[Score function for nonlinear SDEs with state-independent diffusion coefficients]
The score function associated with the nonlinear SDE \eqref{eq:forwardsde} 
admits the closed-form expression \eqref{eq:bismut}, where
\begin{align}
\label{eq:nonlinear}
\delta(u_k) &= \left. \int_0^T U_t(x) \cdot dB_t \right|_{x=F_k} \nonumber
+ \int_0^T \sum_{j=1}^m \big[ Y_t^{-1} \sigma(t) \big]_j \cdot \big(B_{jk}(t) - A_{jk}(t) + C_{jk}(t)\big) \, dt.
\end{align}
In this expression,   \(A_{jk}(t)\), \(B_{jk}(t)\), and \(C_{jk}(t)\) are defined as
\begin{align*}
A_{jk}(t) &= e_j^\top \left[ \sigma(t)^\top (Y_t^{-1})^\top Z_T^\top - \sigma(t)^\top (Y_t^{-1})^\top Z_t^\top (Y_t^{-1})^\top Y_T^\top \right] \gamma_{X_T}^{-1} e_k, \\
B_{jk}(t) &= e_j^\top Y_T^\top \gamma_{X_T}^{-1} \left[ \int_0^t I_1(t,s) \, ds \right] \gamma_{X_T}^{-1} e_k, 
C_{jk}(t) = e_j^\top Y_T^\top \gamma_{X_T}^{-1} \left[ \int_t^T I_2(t,s) \, ds \right] \gamma_{X_T}^{-1} e_k,
\end{align*}
where $e_k$ and $e_j$ are canonical basis vectors,
\begin{align*}
I_1(t,s) &= \left[ \Omega(t) Y_s^{-1} \sigma(s) \right] W_s^\top + W_s \left[ \Omega(t) Y_s^{-1} \sigma(s) \right]^\top, 
I_2(t,s) = \left[ \Theta(t,s) \right] W_s^\top + W_s \left[ \Theta(t,s) \right]^\top,
\end{align*}
\begin{align*}
\Omega(t) &= Z_T Y_t^{-1} \sigma(t) - Y_T Y_t^{-1} Z_t Y_t^{-1} \sigma(t), \\
\Theta(t,s) &= \Omega(t) Y_s^{-1} \sigma(s) - Y_T Y_s^{-1} \left[ Z_s Y_t^{-1} \sigma(t) - Y_s Y_t^{-1} Z_t Y_t^{-1} \sigma(t) \right] Y_s^{-1} \sigma(s), \\
W_s &= Y_T Y_s^{-1} \sigma(s),
\end{align*}
and $Y_t$ and $Z_t$ are the first and second variation processes defined as
\[
dY_t = \partial_x b(t, X_t) Y_t \, dt, \quad Y_0 = I_m,
\]
\[
dZ_t = \left[ \partial_{xx} b(t, X_t) (Y_t \otimes Y_t) + \partial_x b(t, X_t) Z_t \right] dt,
\]

with the initial condition \(Z_0 = 0\), and with the auxiliary processes defined as
\begin{align*}
U_t(x) &= x^\top Y_t^{-1} \sigma(t), 
F_k = Y_T^\top \gamma_{X_T}^{-1} e_k.
\end{align*}
\end{theorem}

\begin{proof}
\noindent The proof is given in Appendix \ref{app:nonlinearproof}.

\end{proof}

\section{Algorithms}
\label{sec:algorithms}

In this section we present the computational algorithms for score-based diffusion modelling using Malliavin calculus. The algorithms are organised into two subsections: linear SDEs with additive noise (Section~\ref{subsec:linear_algorithms}) and nonlinear SDEs with state-independent diffusion coefficients (Section~\ref{subsec:nonlinear_algorithms}).

\begin{algorithm}[t]
\small
\caption{Computation of the Malliavin covariance matrix for linear SDEs}
\label{alg:malliavin_linear}
\begin{algorithmic}[1]
\Require Drift matrix $B(t)$ (so that $b(t,x)=B(t)x$), diffusion $\sigma(t)$, time horizon $T$, time step $dt$, paths $n_{\text{paths}}$, dimension $d$
\Ensure Malliavin covariance matrices $\{\gamma_{X_t}\}$ for all time points

\State \textbf{Initialisation:}
\State \quad Sample initial conditions: $\mathcal{D} = \left\{X_0^{(i)}\right\}_{i=1}^{n_{\text{paths}}} \sim p_{\text{data}}$
\State \quad Create time grid: $\mathcal{T} = \{t_k = k \cdot dt : k = 0, \ldots, N\}$ where $N = \lfloor T/dt \rfloor$

\State \textbf{Forward simulation:}
\For{each path $i = 1$ to $n_{\text{paths}}$}
    \State Set $X_0^{(i)}$ from $\mathcal{D}$, $Y_0^{(i)} = I_d$, $I_0^{(i)} = 0$
    \For{each time step $k = 1$ to $N$}
        \State Sample Brownian increment: $dB \sim \mathcal{N}(0, dt \cdot I_d)$
        
        \State \textbf{Update process:}
        \State \quad $X_k^{(i)} = X_{k-1}^{(i)} + b(t_{k-1}, X_{k-1}^{(i)}) dt + \sigma(t_{k-1}) dB$
        
        \State \textbf{Update first variation process:}
        \State \quad Compute Jacobian: $\nabla_x b$ at $(t_{k-1}, X_{k-1}^{(i)})$
        \State \quad $Y_k^{(i)} = Y_{k-1}^{(i)} + \nabla_x b \cdot Y_{k-1}^{(i)}\, dt$
        
        \State \textbf{Update covariance integral:}
        \State \quad $\gamma_k = (Y_{k-1}^{(i)})^{-1} \sigma(t_{k-1})$
        \State \quad $I_k^{(i)} = I_{k-1}^{(i)} + \gamma_k \gamma_k^\top dt$
        
        \State \textbf{Compute Malliavin covariance:}
        \State \quad $\gamma_{X_{t_k}}^{(i)} = Y_k^{(i)} I_k^{(i)} (Y_k^{(i)})^\top$
    \EndFor
\EndFor

\State \textbf{Output:} Store $\{\gamma_{X_{t_k}}^{-1}\}$ for all $k$ and paths
\end{algorithmic}
\end{algorithm}

\subsection{Linear SDEs}
\label{subsec:linear_algorithms}

For linear SDEs, our approach consists of three main components: computing the Malliavin covariance matrix \eqref{eq:malliavinmatrix} through forward simulation, training a neural network for estimating the conditional expectation $\mathbb{E}[X_0\mid X_T=y]$, and performing reverse-time sampling using the score function \eqref{eq:malliavin-score}.
Algorithm~\ref{alg:malliavin_linear} summarises the main steps to compute the Malliavin covariance matrix by simulating the forward SDE alongside its first variation process. Starting from data samples, we evolve both the main process $X_t$ and the first variation process $Y_t$ forward in time, accumulating the diffusion contributions to construct $\gamma_{X_t}$. 
Algorithm~\ref{alg:nn_training_linear} summarises the main steps to train a neural network to learn the conditional expectation $\mathbb{E}[X_0 \mid X_t=y]$, which maps noisy observations back to the original data distribution. Using the forward trajectories generated in the previous step, the network learns to denoise samples at any time point. This trained model forms an important component of the score function, enabling the reversal of the diffusion process.
\begin{algorithm}[t]
\small
\caption{Training a Neural Network to estimate the conditional expectation $\mathbb{E}[X_0 \mid X_t=y]$}
\label{alg:nn_training_linear}
\begin{algorithmic}[1]
\Require Dataset $\{(X_t, t, X_0)\}$, epochs $n_{\text{epochs}}$, batch size $B$, learning rate $\eta$
\Ensure Trained network $\mathscr{E}_\theta$ approximating $\mathbb{E}[X_0 \mid X_t]$

\State \textbf{Data preparation:}
\State \quad Collect training data: $\{(X_k^{(i)}, t_k, X_0^{(i)})\}$ from forward simulation
\State \quad Normalise features using computed statistics

\State \textbf{Training loop:}
\State Initialise neural network $\mathscr{E}_\theta$ with random weights
\For{epoch $= 1$ to $n_{\text{epochs}}$}
    \For{each mini-batch of size $B$}
        \State Sample batch: $\{(X, t, X_0)\}$
        \State Forward pass: $\hat{X}_0 = \mathscr{E}_\theta(X, t)$
        \State Compute loss: $\mathcal{L} = \frac{1}{B} \sum \|\hat{X}_0 - X_0\|^2$
        \State Update parameters $\theta$ via gradient descent
    \EndFor
\EndFor

\State \textbf{Output:} Trained model $\mathscr{E}_\theta$
\end{algorithmic}
\end{algorithm}
Finally, Algorithm~\ref{alg:sampling_linear} implements the reverse-time sampling procedure that generates new data samples. The algorithm combines the trained conditional expectation network with the pre-computed Malliavin covariance matrix to construct the score function at each time step. Starting from Gaussian noise, the algorithm progressively denoises samples by following the reverse SDE, ultimately producing samples from the original data distribution.

\begin{algorithm}[t]
\small
\caption{Score-based sampling for linear SDEs using Malliavin calculus}
\label{alg:sampling_linear}
\begin{algorithmic}[1]
\Require Trained $\mathscr{E}_\theta$, stored $\{\gamma_{X_t}^{-1}, Y_t\}$, samples $n_{\text{samples}}$, steps $M$
\Ensure Generated samples from $p_{\text{data}}$

\State \textbf{Initialisation:}
\State \quad Sample from prior: $x \sim \mathcal{N}(0, I_d)$

\State \textbf{Reverse sampling:}
\For{step $= 1$ to $M$}
    \State Compute time: $t = T(1 - \frac{\text{step}-1}{M})$
    \State Find nearest stored time index $k$
    
    \State \textbf{Compute score:}
    \State \quad Predict: $\hat{X}_0 = \mathscr{E}_\theta(x, t)$
    \State \quad Retrieve: $\gamma_{X_{t_k}}^{-1}$ and $Y_{t_k}$ (averaged or representative)
    \State \quad Score: $\nabla \log p_t(x) = -\gamma_{X_{t_k}}^{-1}(x - Y_{t_k}\hat{X}_0)$
    
    \State \textbf{Update sample:}
    \State \quad Apply reverse SDE step with computed score
\EndFor

\State \textbf{Output:} Generated samples
\end{algorithmic}
\end{algorithm}

\subsection{Nonlinear SDEs with state-independent diffusion coefficients}
\label{subsec:nonlinear_algorithms}

For nonlinear SDEs of the form \eqref{eq:forwardsde}, we extend the algorithms discussed in the previous section to handle nonlinear drift through second-order variations and Skorokhod integral computation. Algorithm~\ref{alg:forward_nonlinear} simulates the forward diffusion process alongside both first and second variation processes, which are necessary for handling the nonlinear drift term. The first variation $Y_t$ tracks linear perturbations through the Jacobian of the drift, whilst the second variation $Z_t$ captures nonlinear effects through the Hessian. These higher-order variation processes are essential for computing the Malliavin covariance matrix and Skorokhod integral in the nonlinear setting.
\begin{algorithm}[t]
\small
\caption{Forward simulation of first and second variation processes for nonlinear SDEs}
\label{alg:forward_nonlinear}
\begin{algorithmic}[1]
\Require Drift $b(t,x)$, diffusion $\sigma(t)$, time horizon $T$, time step $dt$, paths $n_{\text{paths}}$, dimension $d$
\Ensure Processes $\{X_t\}$, first variations $\{Y_t\}$, second variations $\{Z_t\}$

\State \textbf{Initialisation:}
\State \quad Sample: $\mathcal{D} = \{X_0^{(i)}\}_{i=1}^{n_{\text{paths}}} \sim p_{\text{data}}$
\State \quad Time grid: $\mathcal{T} = \{t_k = k \cdot dt : k = 0, \ldots, N\}$

\State \textbf{Forward simulation:}
\For{each path $i = 1$ to $n_{\text{paths}}$}
    \State Set $X_0^{(i)}$ from $\mathcal{D}$
    \State Initialise: $Y_0^{(i)} = I_d$, $Z_0^{(i)} = 0$ (zero tensor in $\mathbb{R}^{d \times d \times d}$)
    
    \For{each time step $k = 1$ to $N$}
        \State Sample: $dB \sim \mathcal{N}(0, dt \cdot I_d)$
        
        \State \textbf{Process update:}
        \State \quad $X_k^{(i)} = X_{k-1}^{(i)} + b(t_{k-1}, X_{k-1}^{(i)}) dt + \sigma(t_{k-1}) dB$
        
        \State \textbf{First variation update:}
        \State \quad Compute Jacobian: $J = \nabla_x b(t_{k-1}, X_{k-1}^{(i)})$
        \State \quad $Y_k^{(i)} = Y_{k-1}^{(i)} + J \cdot Y_{k-1}^{(i)} dt$
        
        \State \textbf{Second variation update:}
        \State \quad Compute Hessian: $H = \nabla_{xx} b(t_{k-1}, X_{k-1}^{(i)})$
        \State \quad $Z_k^{(i)} = Z_{k-1}^{(i)} + [H(Y_{k-1}^{(i)} \otimes Y_{k-1}^{(i)}) + J \cdot Z_{k-1}^{(i)}] dt$
    \EndFor
\EndFor

\State \textbf{Output:} Store $\{X_k^{(i)}, Y_k^{(i)}, Z_k^{(i)}\}$ for all $k$ and $i$
\end{algorithmic}
\end{algorithm}
Algorithm~\ref{alg:skorokhod_computation} computes the Malliavin covariance matrix and the Skorokhod integral, which directly provides the score function for nonlinear SDEs. Using the variation processes from the forward simulation, it first constructs the Malliavin derivative and integrates it to obtain $\gamma_{X_t}$. It then computes the Skorokhod integral through a combination of stochastic and deterministic terms, incorporating correction factors that account for the nonlinear drift. This integral represents the score function without requiring a separate estimation of the conditional expectation.
\begin{algorithm}[t]
\small
\caption{Malliavin covariance matrix and Skorokhod integral for nonlinear SDEs}
\label{alg:skorokhod_computation}
\begin{algorithmic}[1]
\Require Simulated $\{X_t, Y_t, Z_t\}$, diffusion $\sigma(t)$, dimension $d$
\Ensure Malliavin covariances $\{\gamma_{X_t}\}$, Skorokhod integrals $\{\delta_t(u_t)\}$

\State \textbf{Malliavin covariance computation:}
\For{each time $t_k$ and path $i$}
    \State \textbf{Compute Malliavin derivative:}
    \State \quad For $v \in [0, t_k]$: $D_v X_{t_k}^{(i)} = Y_{t_k}^{(i)} (Y_v^{(i)})^{-1} \sigma(v)$
    
    \State \textbf{Integrate for covariance:}
    \State \quad $\gamma_{X_{t_k}}^{(i)} = \int_0^{t_k} D_v X_{t_k}^{(i)} (D_v X_{t_k}^{(i)})^\top dv$
\EndFor

\State \textbf{Skorokhod integral computation:}
\For{each time $t_k$ and path $i$}
    \State Extract terminal values: $Y_T = Y_N^{(i)}$, $Z_T = Z_N^{(i)}$
    
    \State \textbf{Define auxiliary processes:}
    \State \quad $W_s = Y_T (Y_s^{(i)})^{-1} \sigma(s)$ for $s \in [0, t_k]$
    \State \quad $\Omega(t) = Z_T (Y_t^{(i)})^{-1} \sigma(t) - Y_T (Y_t^{(i)})^{-1} Z_t^{(i)} (Y_t^{(i)})^{-1} \sigma(t)$
    \State \quad $\Theta(t,s) = \Omega(t) (Y_s^{(i)})^{-1} \sigma(s) - Y_T (Y_s^{(i)})^{-1}[(Z_s^{(i)} (Y_t^{(i)})^{-1}$
    \State \quad \quad $- Y_s^{(i)} (Y_t^{(i)})^{-1} Z_t^{(i)} (Y_t^{(i)})^{-1})\sigma(t)](Y_s^{(i)})^{-1} \sigma(s)$
    
    \State \textbf{Compute interaction terms:}
    \State \quad $I_1(t,s) = [\Omega(t) (Y_s^{(i)})^{-1} \sigma(s)] W_s^\top + W_s [\Omega(t) (Y_s^{(i)})^{-1} \sigma(s)]^\top$
    \State \quad $I_2(t,s) = [\Theta(t,s)] W_s^\top + W_s [\Theta(t,s)]^\top$
    
    \State \textbf{Compute correction terms:}
    \State \quad $A(u, t_k) = [\sigma(u)^\top (Y_u^{-1})^\top (Z_{t_k} - Z_u (Y_u^{-1})^\top Y_{t_k})] \gamma_{X_{t_k}}^{-1}$
    \State \quad $B(u, t_k) = Y_{t_k}^\top \gamma_{X_{t_k}}^{-1} [\int_0^u I_1(u,v) dv] \gamma_{X_{t_k}}^{-1}$
    \State \quad $C(u, t_k) = Y_{t_k}^\top \gamma_{X_{t_k}}^{-1} [\int_u^{t_k} I_2(u,v) dv] \gamma_{X_{t_k}}^{-1}$
    
    \State \textbf{Assemble Skorokhod integral:}
    \State \quad Stochastic: $S = \gamma_{X_{t_k}}^{-1} Y_{t_k} \int_0^{t_k} (Y_u^{-1} \sigma(u))^\top dB_u$
    \State \quad Deterministic: $D = \int_0^{t_k} (Y_u^{-1} \sigma(u))^\top [A - B - C](u, t_k) du$
    \State \quad $\delta_{t_k}(u_{t_k})^{(i)} = S - D$
\EndFor

\State \textbf{Output:} $\{\gamma_{X_{t_k}}^{-1}, \delta_{t_k}(u_{t_k})\}$ for all times and paths
\end{algorithmic}
\end{algorithm}
Algorithm~\ref{alg:nn_training_nonlinear} trains a neural network to approximate the conditional expectation of the Skorokhod integral given the current state. Unlike the linear case where we learn $\mathbb{E}[X_0 \mid X_t]$, here we directly learn the expected value of the Skorokhod integral, which serves as the score function. The network takes the current state and time as input and outputs the expected score, effectively learning to map from noisy states to the gradient of the log density.
\begin{algorithm}[t]
\small
\caption{Neural network approximation of the Skorokhod integral}
\label{alg:nn_training_nonlinear}
\begin{algorithmic}[1]
\Require Dataset $\{(X_t, t, \delta_t(u_t))\}$, epochs $n_{\text{epochs}}$, batch size $B$, learning rate $\eta$
\Ensure Trained network $\mathscr{N}_\theta$ approximating $\mathbb{E}[\delta_t(u_t) \mid X_t]$

\State \textbf{Data preparation:}
\State \quad Collect: $\{(X_k^{(i)}, t_k, \delta_{t_k}(u_{t_k})^{(i)})\}$ from previous algorithms
\State \quad Normalise features and targets

\State \textbf{Training:}
\State Initialise network $\mathscr{N}_\theta$ with random weights
\For{epoch $= 1$ to $n_{\text{epochs}}$}
    \For{each mini-batch of size $B$}
        \State Sample: $\{(X, t, \delta)\}$
        \State Predict: $\hat{\delta} = \mathscr{N}_\theta(X, t)$
        \State Loss: $\mathcal{L} = \frac{1}{B} \sum \|\hat{\delta} - \delta\|^2 + \lambda\|\theta\|^2$
        \State Update $\theta$ via gradient descent
    \EndFor
\EndFor

\State \textbf{Output:} Trained model $\mathscr{N}_\theta$
\end{algorithmic}
\end{algorithm}
Finally, Algorithm~\ref{alg:sampling_nonlinear} performs reverse-time sampling for nonlinear SDEs to generate new data samples. Starting from the stationary distribution (often a generalised Cauchy distribution for specific nonlinear forms), it uses the trained Skorokhod integral network to compute the score at each time step. The algorithm then evolves samples backward in time through the reverse SDE, where the drift is corrected by the score function scaled by the diffusion coefficient squared. This process gradually transforms samples from the stationary distribution back to the data distribution.

\begin{algorithm}[t]
\small
\caption{Score-based generative sampling for nonlinear SDEs using Malliavin calculus}
\label{alg:sampling_nonlinear}
\begin{algorithmic}[1]
\Require Trained $\mathscr{N}_\theta$, diffusion $\sigma(t)$, reverse drift $f(t,x)$, samples $n_{\text{samples}}$, steps $M$
\Ensure Generated samples from $p_{\text{data}}$

\State \textbf{Initialisation:}
\State \quad Sample from stationary distribution: $x \sim p_s(x)$
\State \quad (Use rejection sampling or analytical form if available)

\State \textbf{Reverse sampling:}
\For{step $= 1$ to $M$}
    \State Time: $t = T(1 - \frac{\text{step}-1}{M})$
    \State Time step: $dt = T/M$
    
    \State \textbf{Score computation:}
    \State \quad Predict Skorokhod integral: $\hat{\delta} = \mathscr{N}_\theta(x, t)$
    \State \quad Score: $\nabla \log p_t(x) = -\hat{\delta}$
    
    \State \textbf{Reverse SDE update:}
    \State \quad Drift: $\mu = f(t, x) + \sigma(t)^2 \nabla \log p_t(x)$
    \State \quad Sample: $dB \sim \mathcal{N}(0, dt \cdot I_d)$
    \State \quad Update: $x \leftarrow x + \mu \cdot dt + \sigma(t) dB$
\EndFor

\State \textbf{Output:} Generated samples
\end{algorithmic}
\end{algorithm}

\section{Numerical results}
\label{sec:numerical}
Following \cite{lai2023fp}, we first study the performance of score-based diffusion models based on Malliavin calculus on the following three prototype synthetic datasets: i) a 2D checkerboard, ii) a Swiss roll; and iii) a 2D Gaussian Mixture Model (GMM) with eight modes whose means are located equidistantly on a circle of radius 4 with a standard deviation of 0.5. Each training set comprises 8,000 data points.
We studied both linear and nonlinear diffusion models. The linear models are defined by Variance Exploding (VE), Variance Preserving (VP), sub-Variance Preserving (sub-VP) linear SDEs. The nonlinear models are detailed in Appendix~\ref{app:nonlinearSDEchoice}. 
In each case, we set the time horizon $T=1.0$ and time step $\Delta t=0.004$, resulting in 250 steps to diffuse the initial dataset towards an approximately normal distribution $\mathcal{N}(0, I)$. The VE SDE parameters were set to $\sigma_{\text{min}}=0.01$ and $\sigma_{\text{max}}=50.0$, whilst VP and sub-VP SDEs used $\beta_{\text{min}}=0.1$ and $\beta_{\text{max}}=20.0$. Throughout the forward process, we calculated and stored the Malliavin covariance matrix $\gamma_{X_t}$ using Eq.~\ref{eq:malliavinmatrix} to compute the score function via Eq.~\ref{eq:malliavin-score-expanded}. The neural network architecture to estimate the conditional expectation includes six fully connected layers with 4096 hidden units each, incorporating residual connections for enhanced gradient flow. Training proceeds with Algorithm~\ref{alg:nn_training_linear} to optimise the neural network $\mathscr{E}_{\theta}$ for 1000 epochs on 2 NVIDIA L40 GPUs with a batch size of 2048 per device. Our score-matching framework evaluation encompassed VE, VP, and sub-VP linear SDEs across all datasets, with initial conditions and corresponding reverse-time SDEs following \cite{tang2024score}. 
The Euler-Maruyama method with 500 steps and a quadratic time schedule, as detailed in Algorithm~\ref{alg:sampling_linear}, was used for sampling. 
On the other hand, the nonlinear diffusion models are based on different integrators, namely stochastic Runge-Kutta, Euler, and Predictor-Corrector methods, with parameters $k=1.0$, $\sigma=1.0$, and $\mathbf{a}=\mathbf{0}$. Reverse sampling initialisation drew samples from the stationary distribution (a generalised Cauchy distribution) via inverse transform sampling with a fine grid approximation. The Skorokhod integral network architecture incorporated Fourier feature embedding within a multilayer perceptron featuring hidden dimension 4096 and 6 residual blocks. Training utilised the AdamW optimiser (learning rate 2e-4, weight decay 1e-5) with cosine annealing scheduler. Nonlinear SDE experiments specifically employed scheduler parameters $\beta_{\text{min}}=1.0$ and $\beta_{\text{max}}=25.0$, with the network trained to minimise MSE between predicted and true Skorokhod integrals using input-output normalisation for enhanced training stability. All implementations utilised PyTorch 2.0 with mixed precision training.
To evaluate the performance of the proposed linear and nonlinear diffusion generative models derived via Malliavin calculus against the ground truth, we considered three metrics: the Wasserstein distance, the Maximum Mean Discrepancy (MMD) with a Gaussian 
kernel of bandwidth 1.0, and the Negative Log-Likelihood (NLL).
In Figure~\ref{fig:sde_dataset_comparison} we present sampling results using score-based linear diffusion models (VE, VP, and sub-VP) and Malliavin calculus, alongside DDPM as a baseline comparison. Our framework demonstrates generalisation across diverse dataset topologies, from multimodal N-Gaussian mixtures to the complex manifold structures of Swiss rolls and checkerboards.
\newcommand{\figwidth}[0]{0.17}
\begin{figure}[!htbp]
    \centering
    \resizebox{\textwidth}{!}{%
    \begin{tabular}{cccccc}
        \small{Ground Truth} & \small{VE} & \small{VP} & \small{sub-VP} & \small{DDPM (baseline)} & \small{EDM (baseline)}\\
        \includegraphics[width=\figwidth\textwidth]{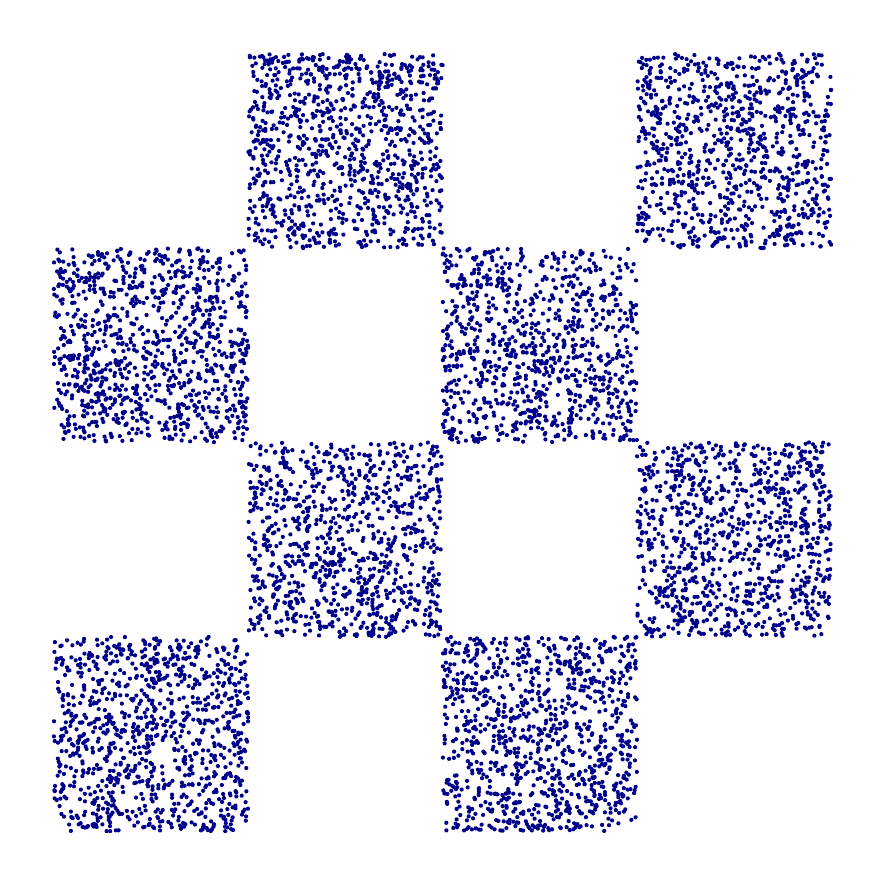} & 
        \includegraphics[width=\figwidth\textwidth]{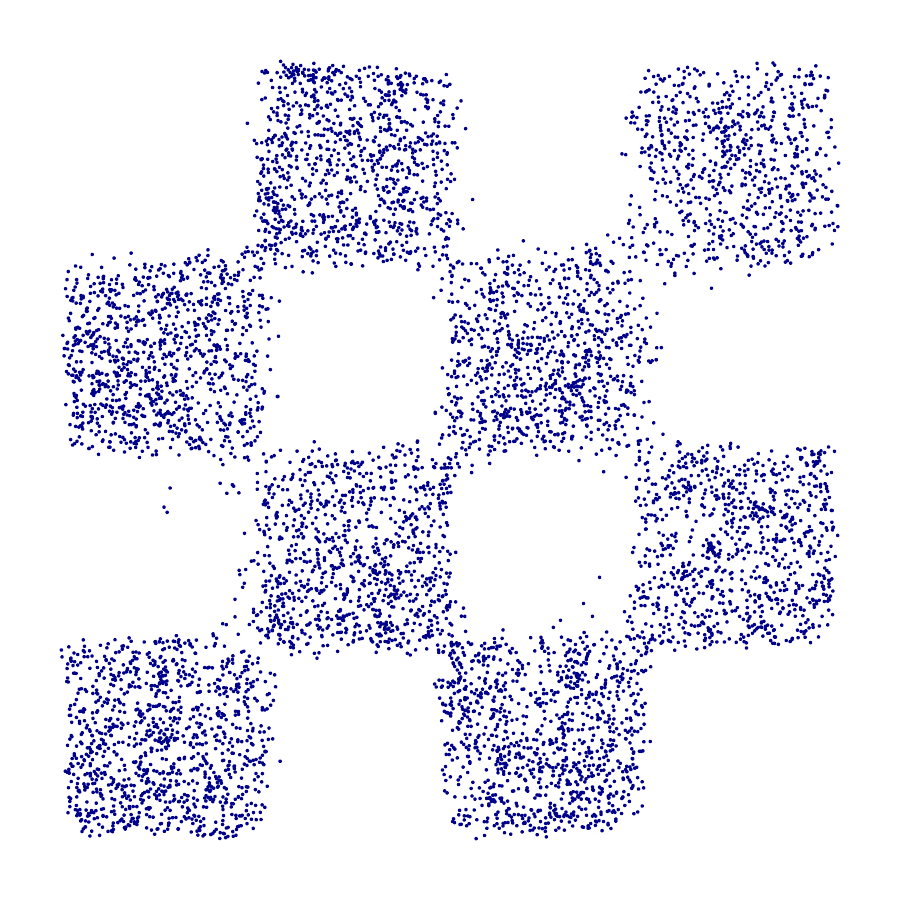} & 
        \includegraphics[width=\figwidth\textwidth]{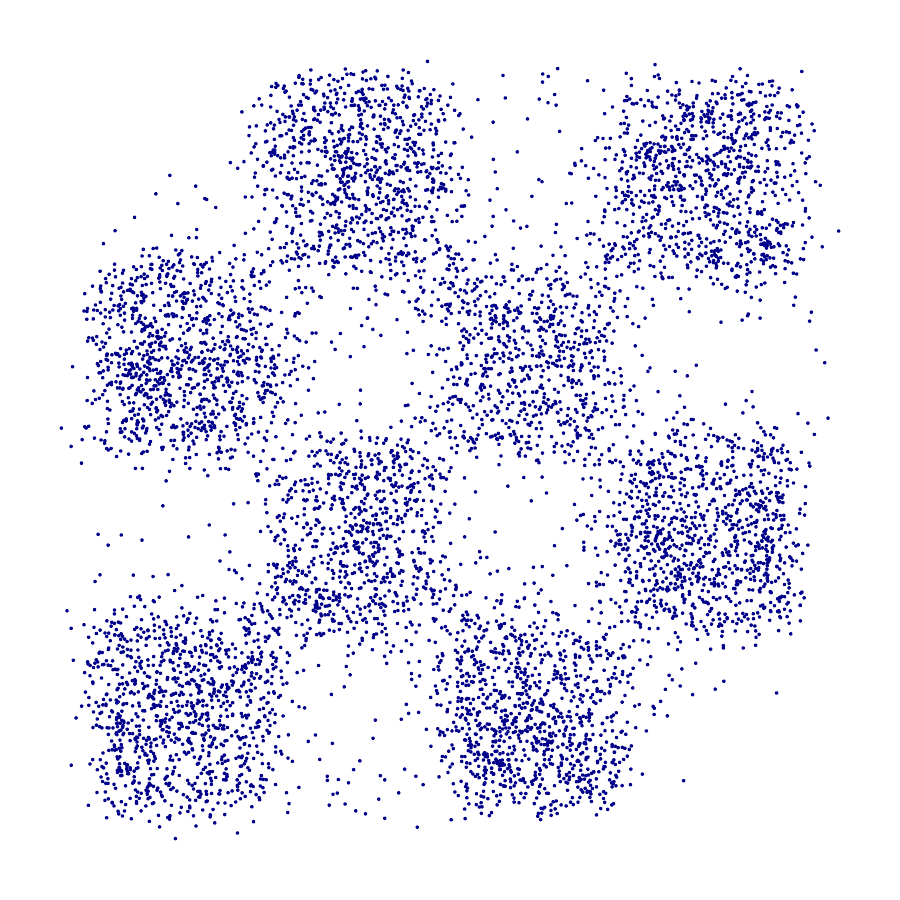} & 
        \includegraphics[width=\figwidth\textwidth]{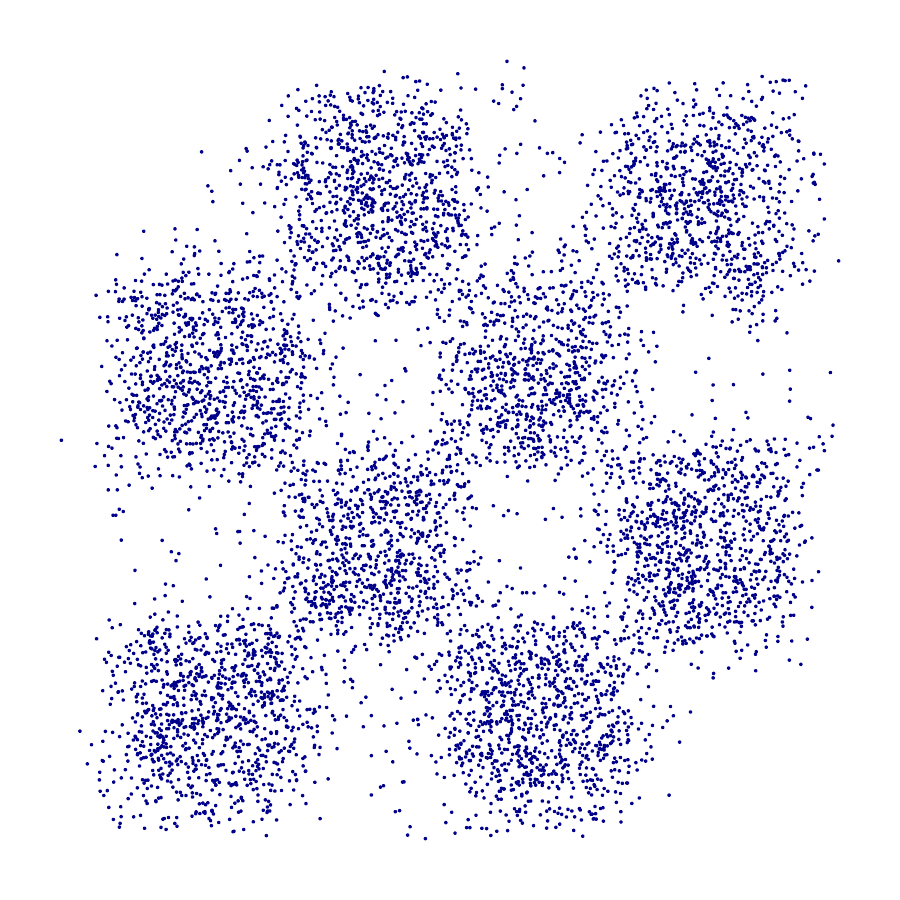} & \includegraphics[width=\figwidth\textwidth]{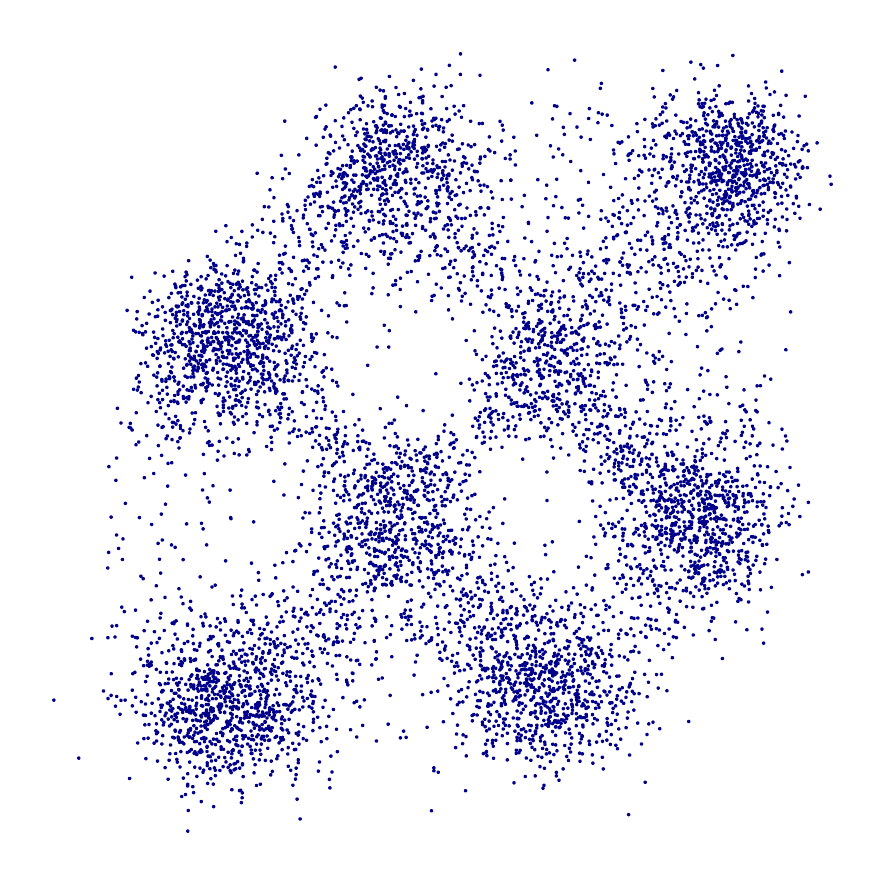} &
        \includegraphics[width=\figwidth\textwidth]{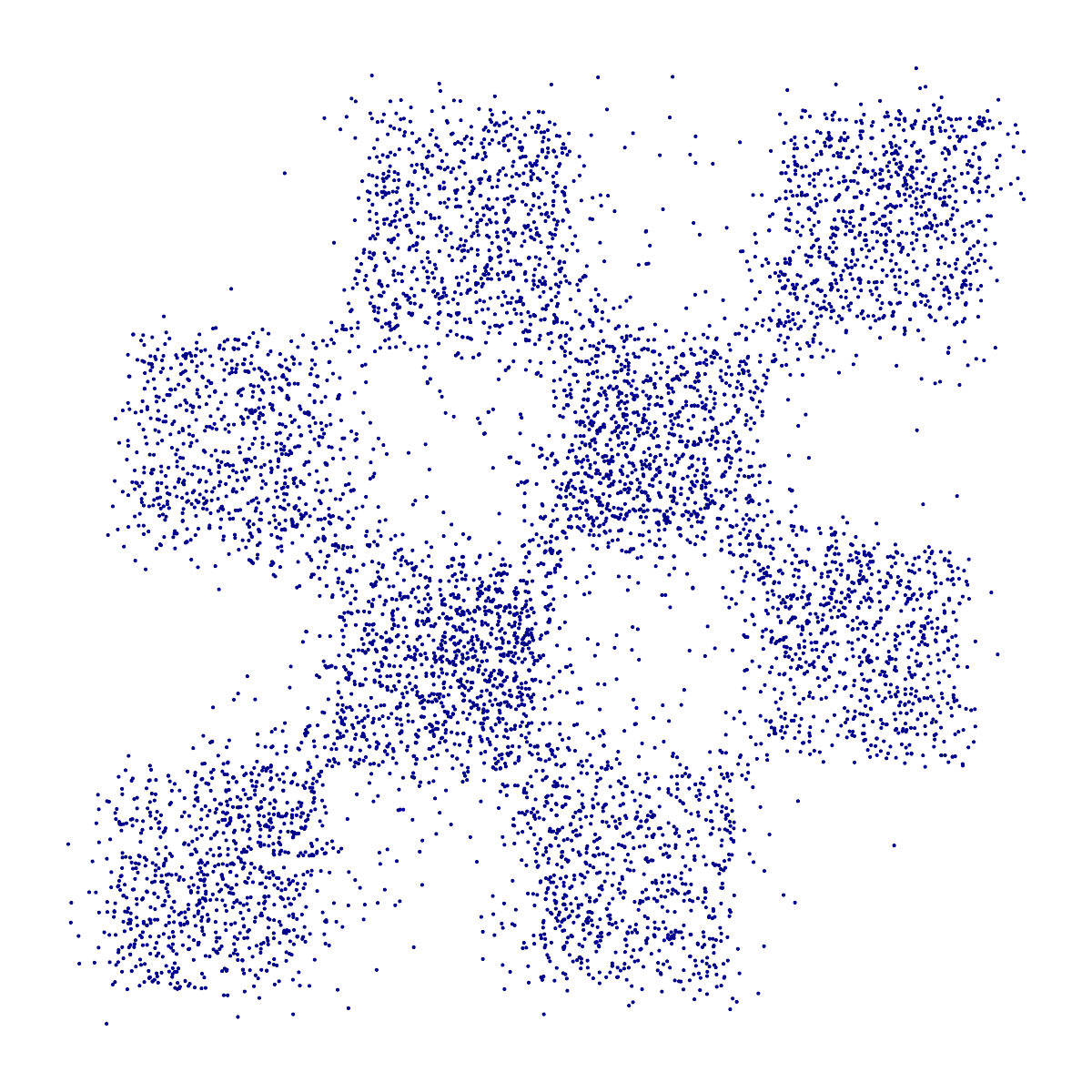}\\
        \includegraphics[width=\figwidth\textwidth]{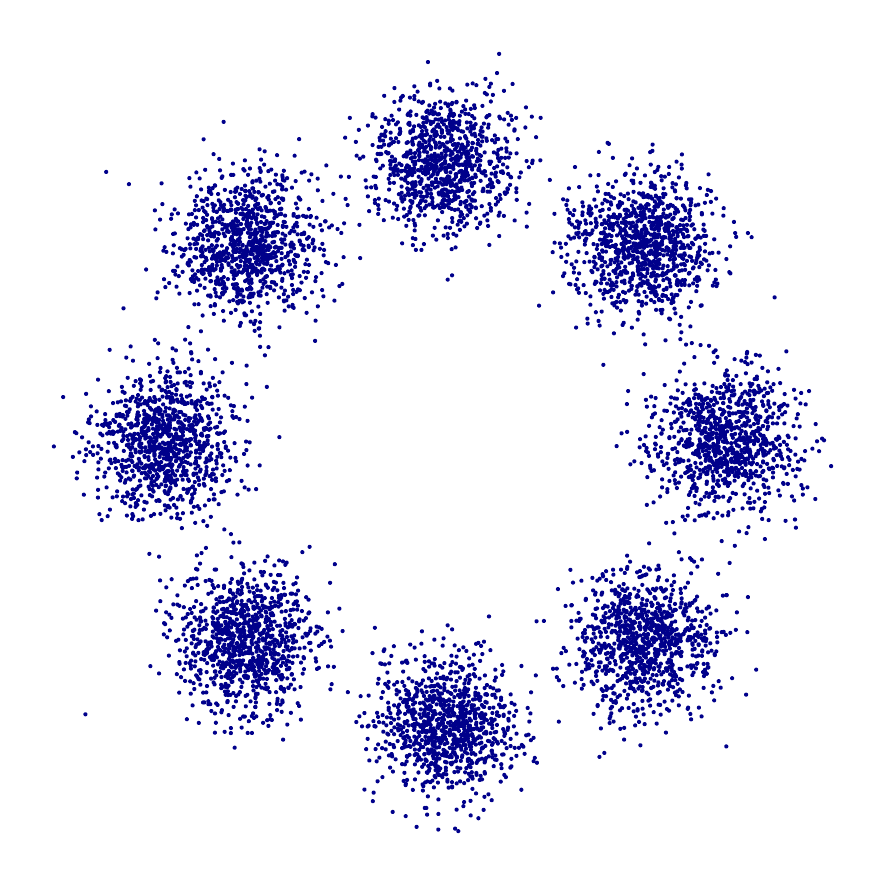} & 
        \includegraphics[width=\figwidth\textwidth]{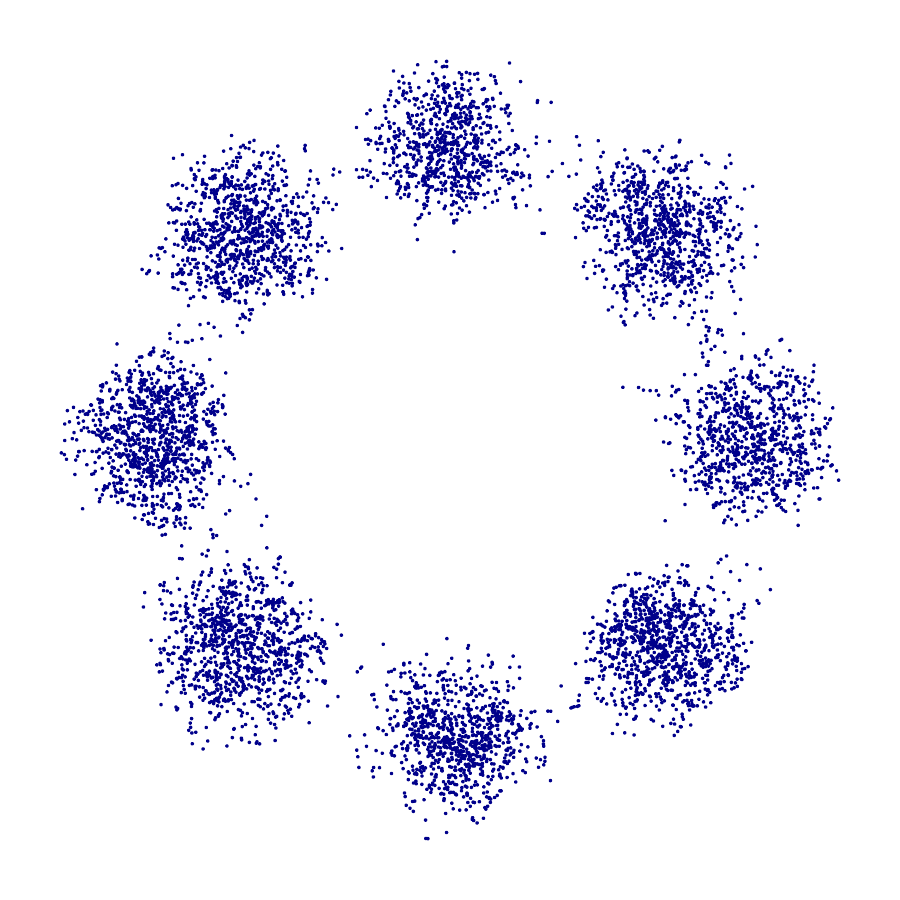} & 
        \includegraphics[width=\figwidth\textwidth]{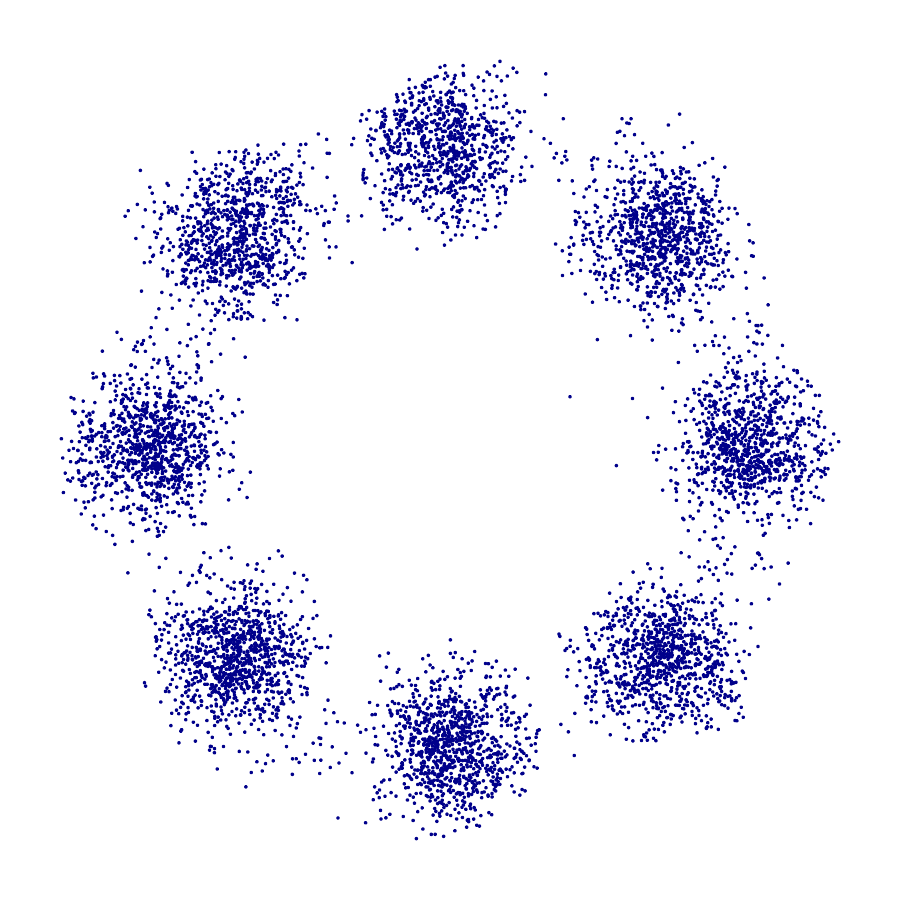} & 
        \includegraphics[width=\figwidth\textwidth]{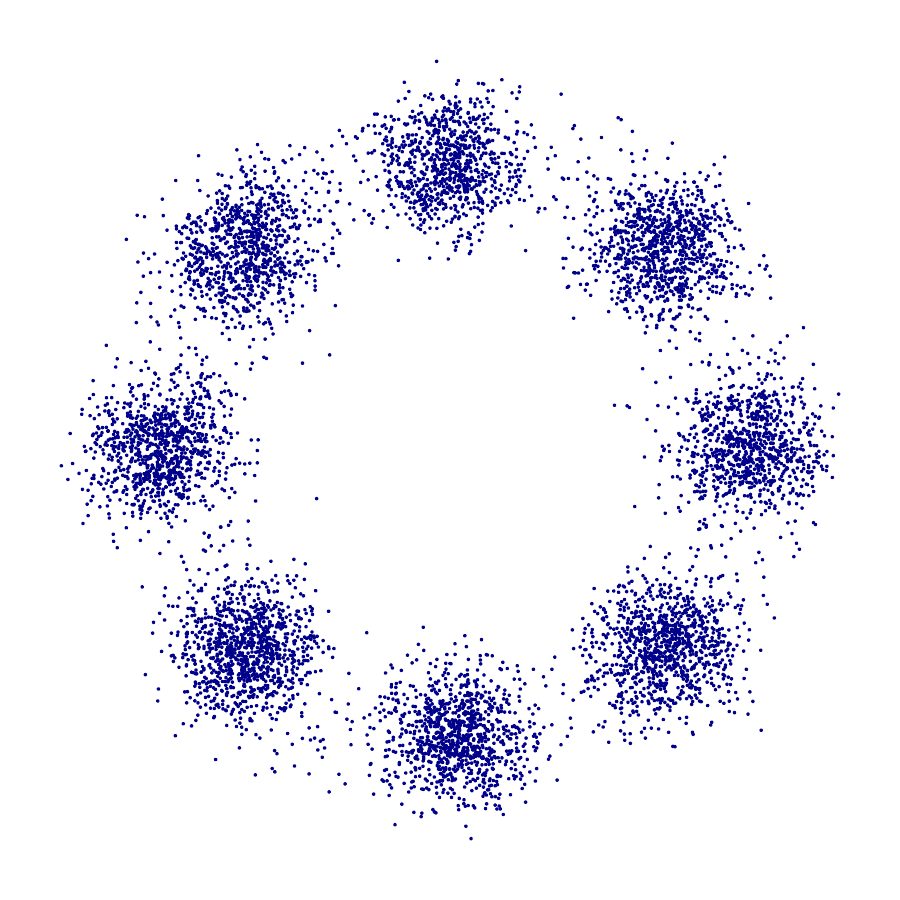} &
        \includegraphics[width=\figwidth\textwidth]{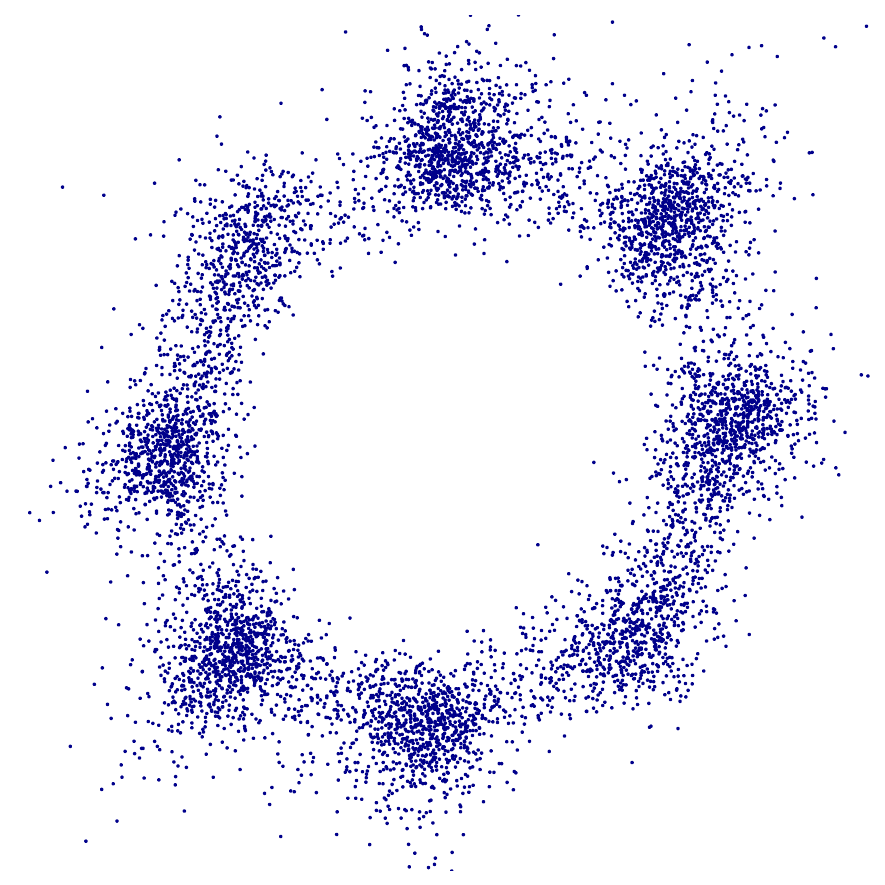} &
        \includegraphics[width=\figwidth\textwidth]{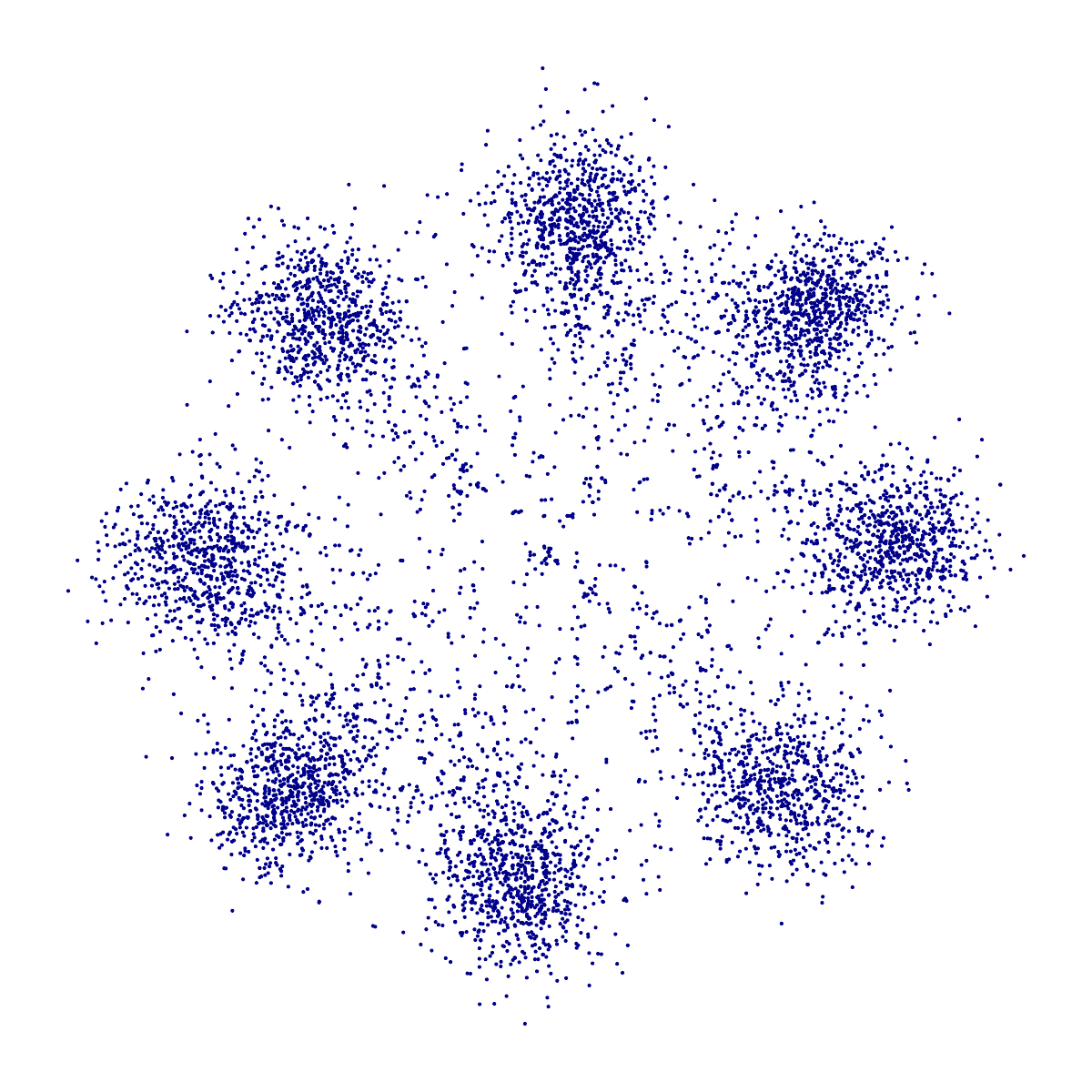}\\
        \includegraphics[width=\figwidth\textwidth]{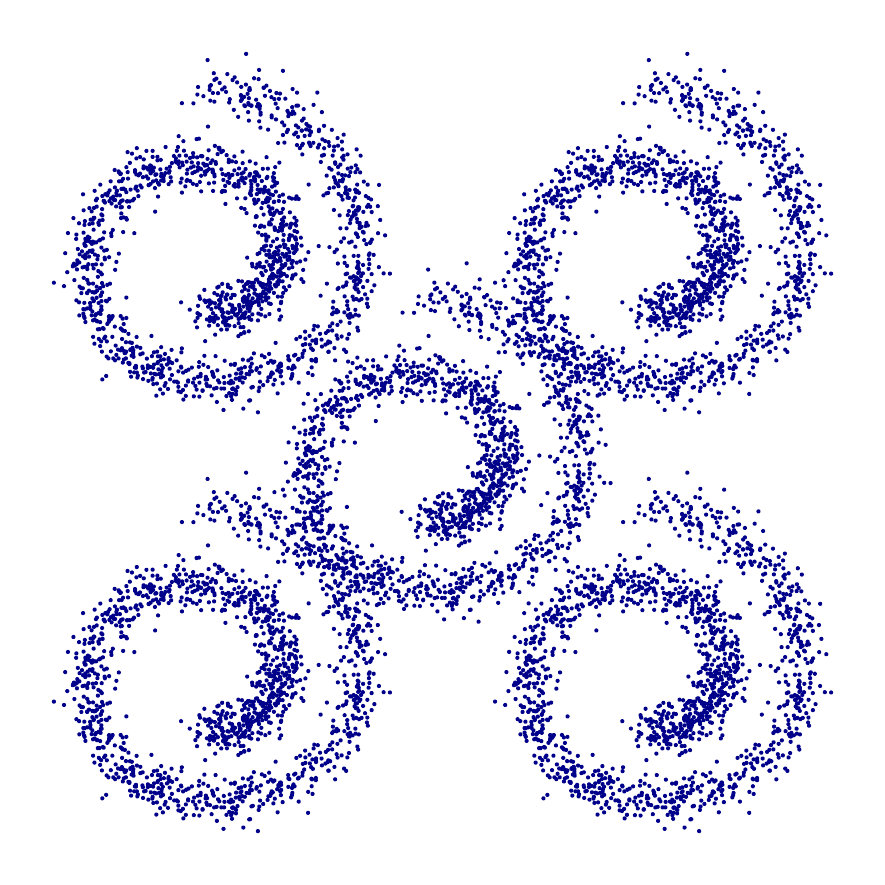} & 
        \includegraphics[width=\figwidth\textwidth]{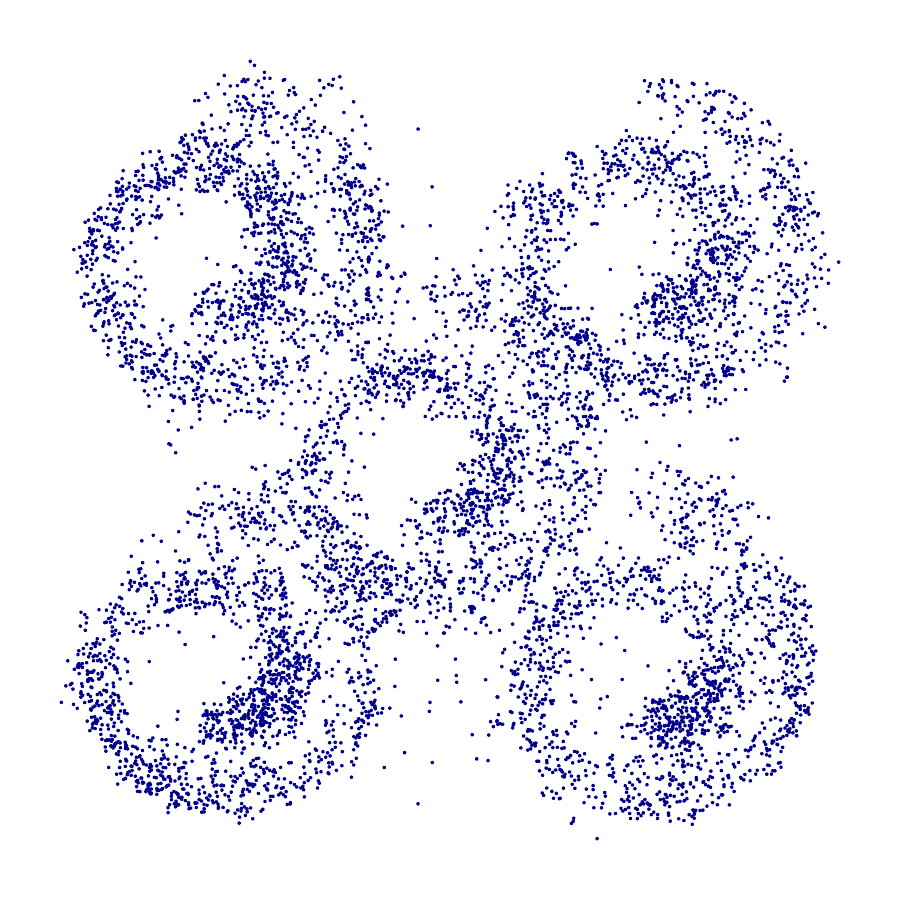} & 
        \includegraphics[width=\figwidth\textwidth]{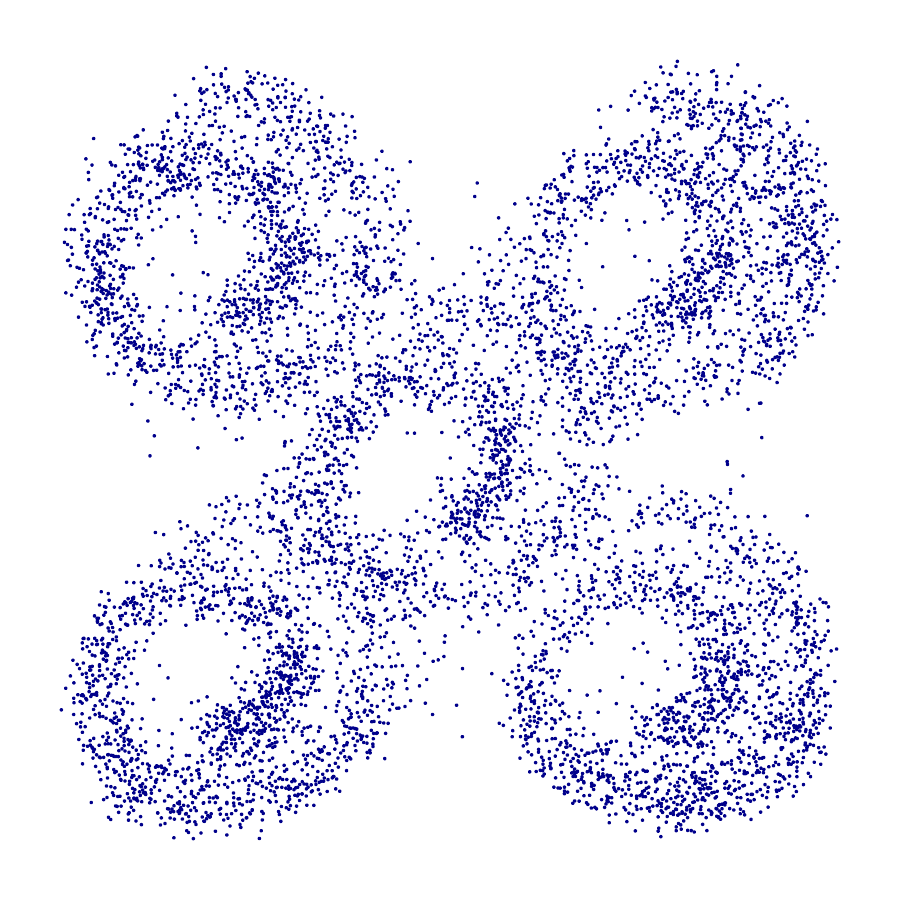} & 
        \includegraphics[width=\figwidth\textwidth]{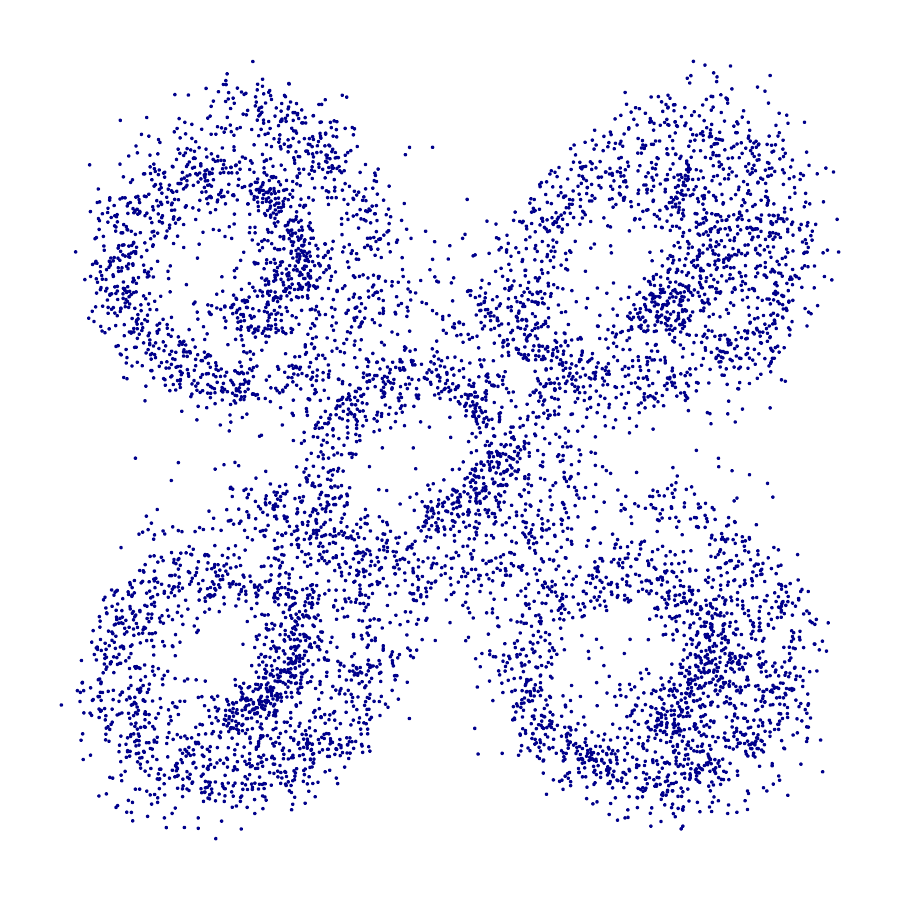} &
        \includegraphics[width=\figwidth\textwidth]{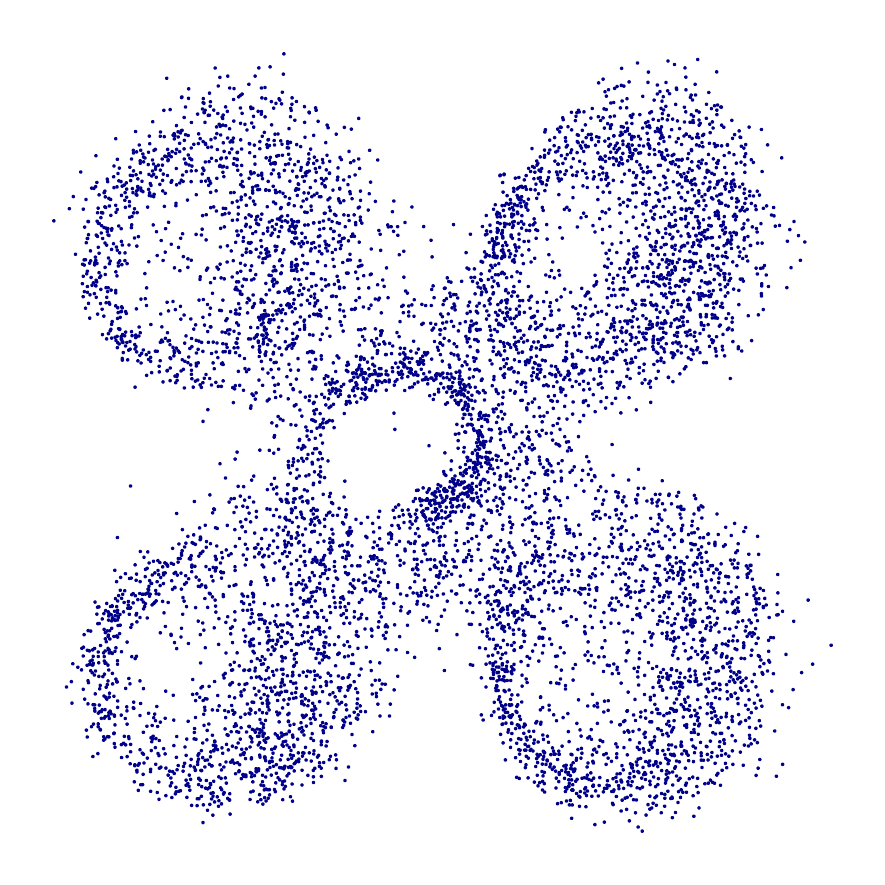} &
        \includegraphics[width=\figwidth\textwidth]{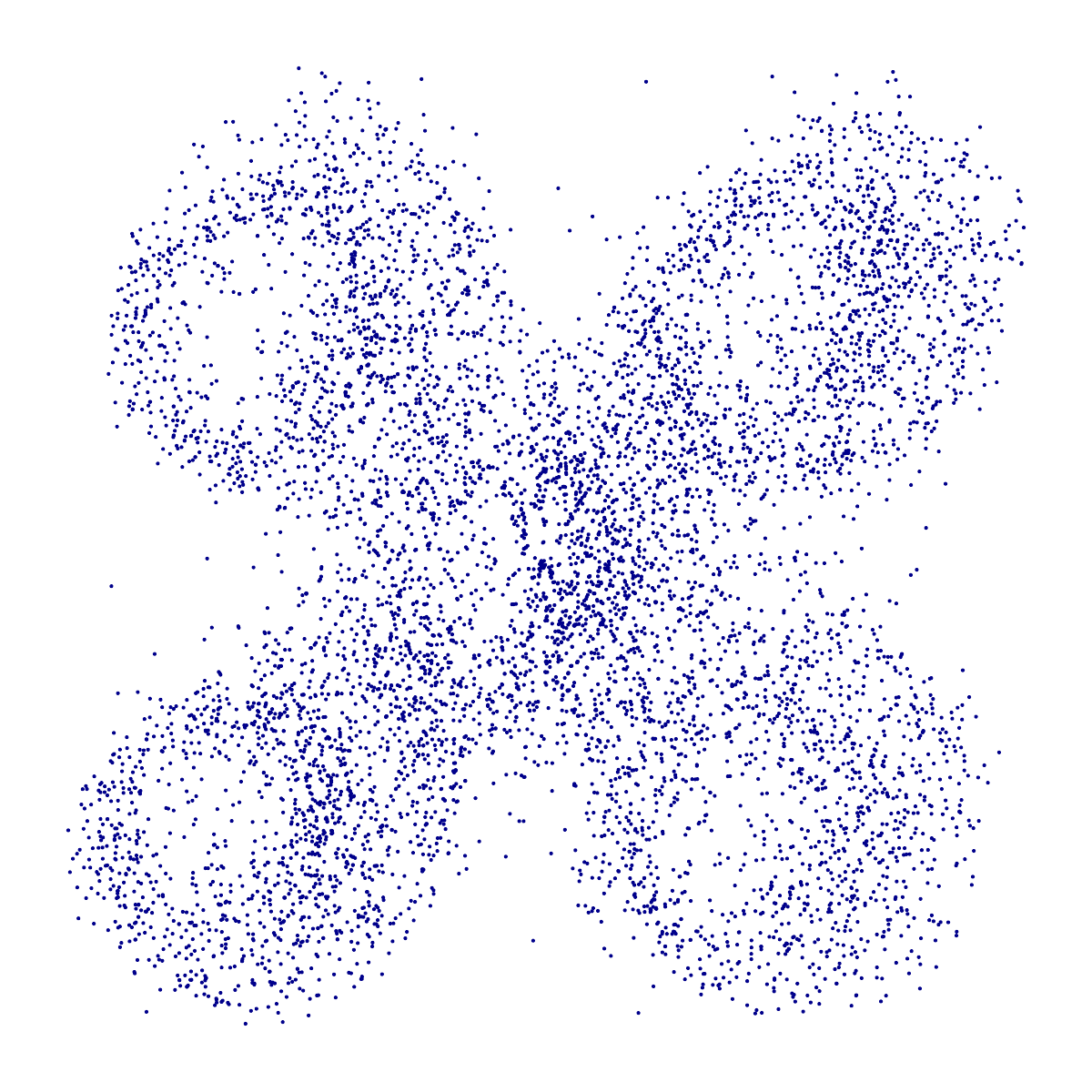}\\
    \end{tabular}%
    }
    \caption{The rows from the top in the diagram show the Checkerboard, 8 Gaussian Mixtures, and Swiss Rolls datasets, and the columns show the ground truth and the results obtained using our framework for the Variance Exploding (VE), Variance Preserving (VP), sub-Variance Preserving (sub-VP) linear SDEs. The last two columns show DDPM and EDM results.}
    \label{fig:sde_dataset_comparison}
\end{figure}
In Figure~\ref{fig:non_linear_sde_dataset_comparison} we show the results of nonlinear diffusion models, comparing three numerical integration schemes: Euler, Stochastic Runge-Kutta, and Predictor-Corrector methods.
\newcommand{\nonlinearfigwidth}[0]{0.22}
\begin{figure}[t]
    \centering
    \resizebox{\textwidth}{!}{%
    \begin{tabular}{cccc}
        \small{Ground Truth} & \small{NL-Euler} & \small{NL-SRK} & \small{NL-PC}\\
        \includegraphics[width=\nonlinearfigwidth\textwidth]{figures/individual_plots_pdf/ToyDatasetsGroundTruths/checkerboard_Ground_Truth.pdf} & 
        \includegraphics[width=\nonlinearfigwidth\textwidth]{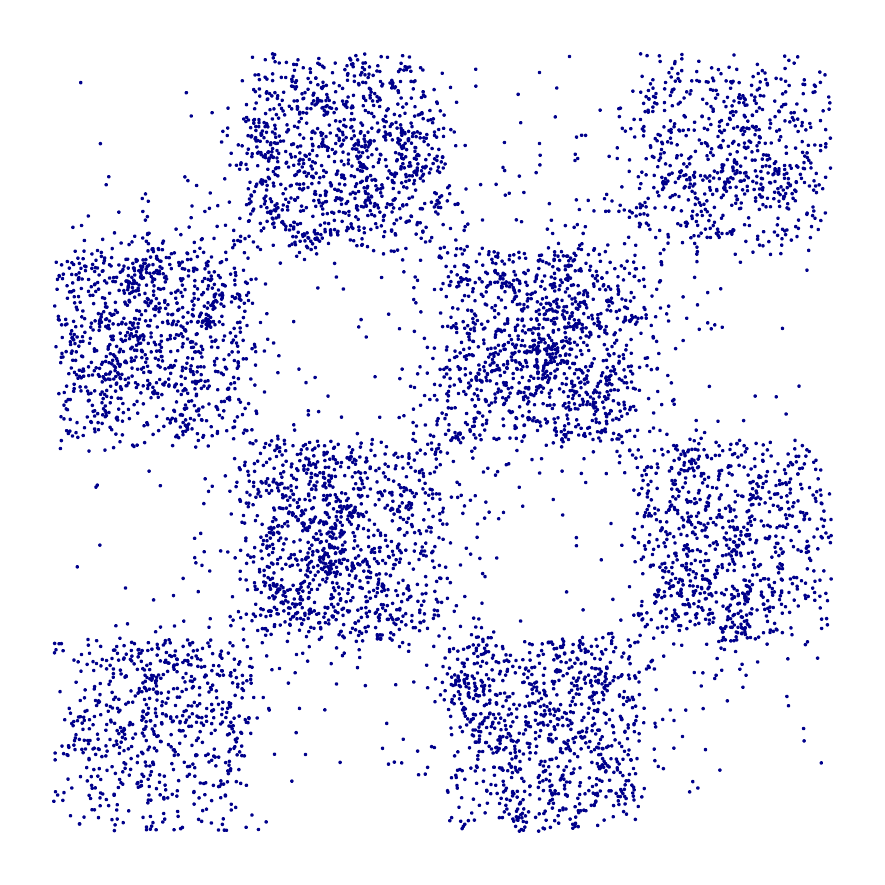} & 
        \includegraphics[width=\nonlinearfigwidth\textwidth]{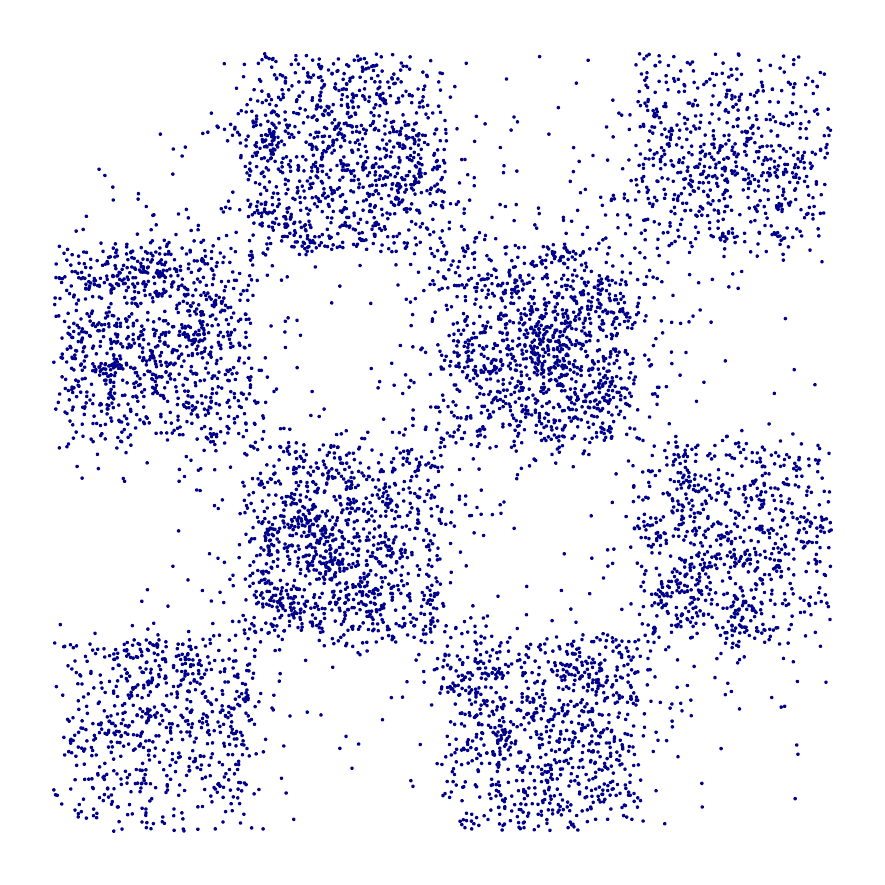} & 
        \includegraphics[width=\nonlinearfigwidth\textwidth]{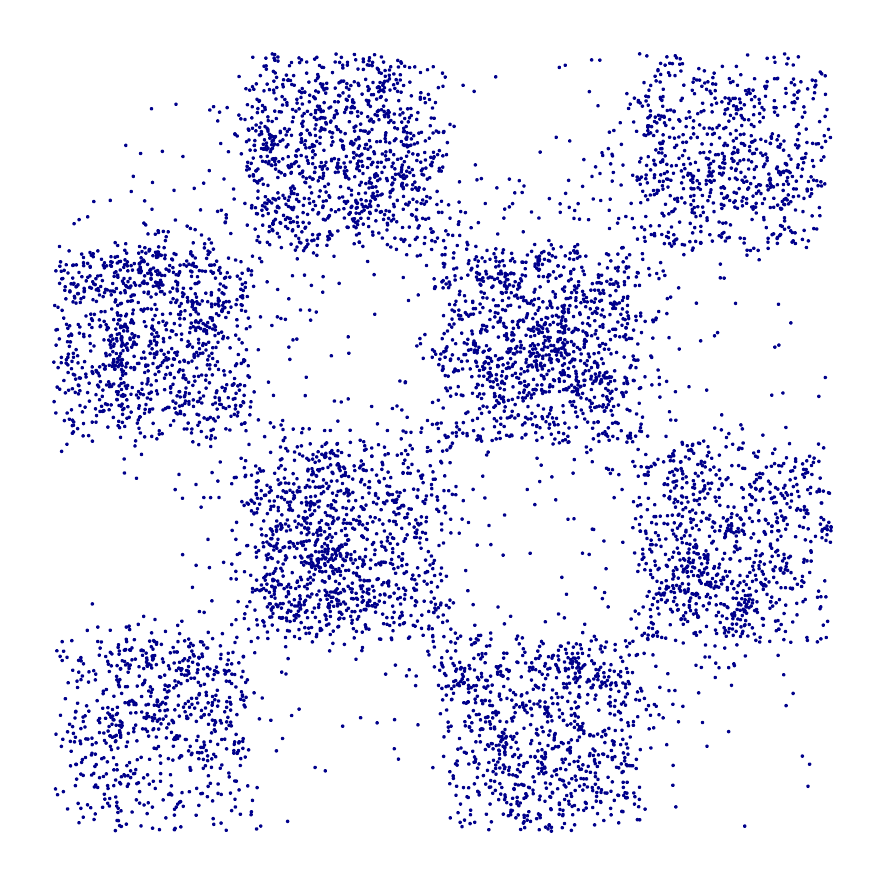} \\
        \includegraphics[width=\nonlinearfigwidth\textwidth]{figures/individual_plots_pdf/ToyDatasetsGroundTruths/ngaussianmixtures_Ground_Truth.pdf} & 
        \includegraphics[width=\nonlinearfigwidth\textwidth]{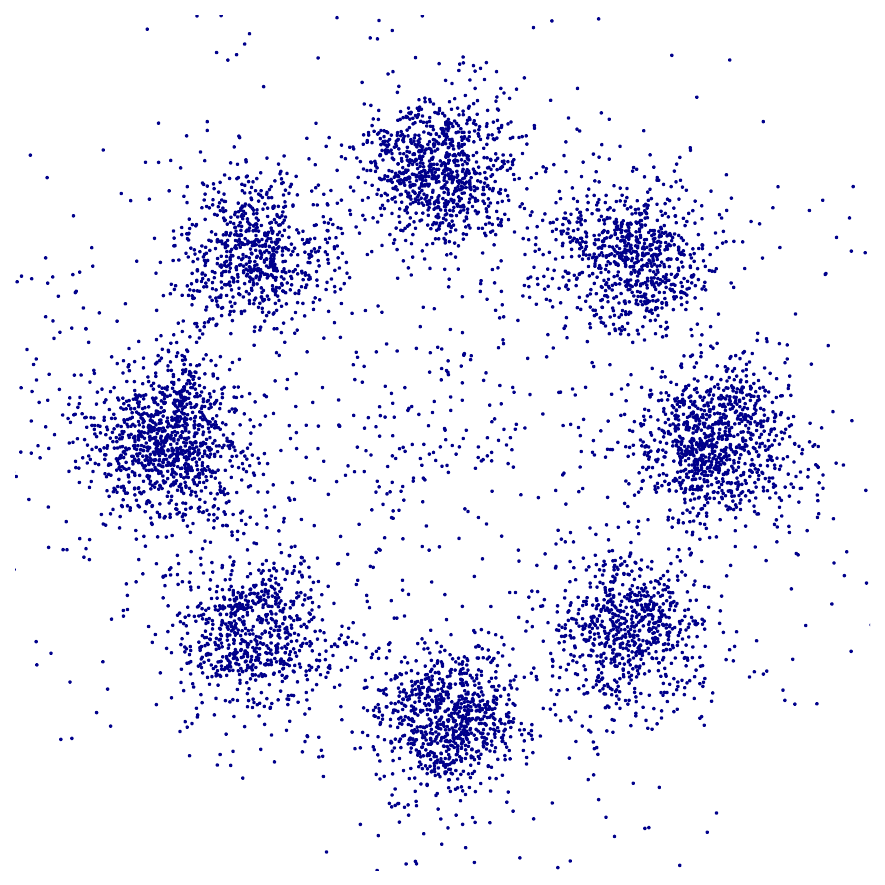} & 
        \includegraphics[width=\nonlinearfigwidth\textwidth]{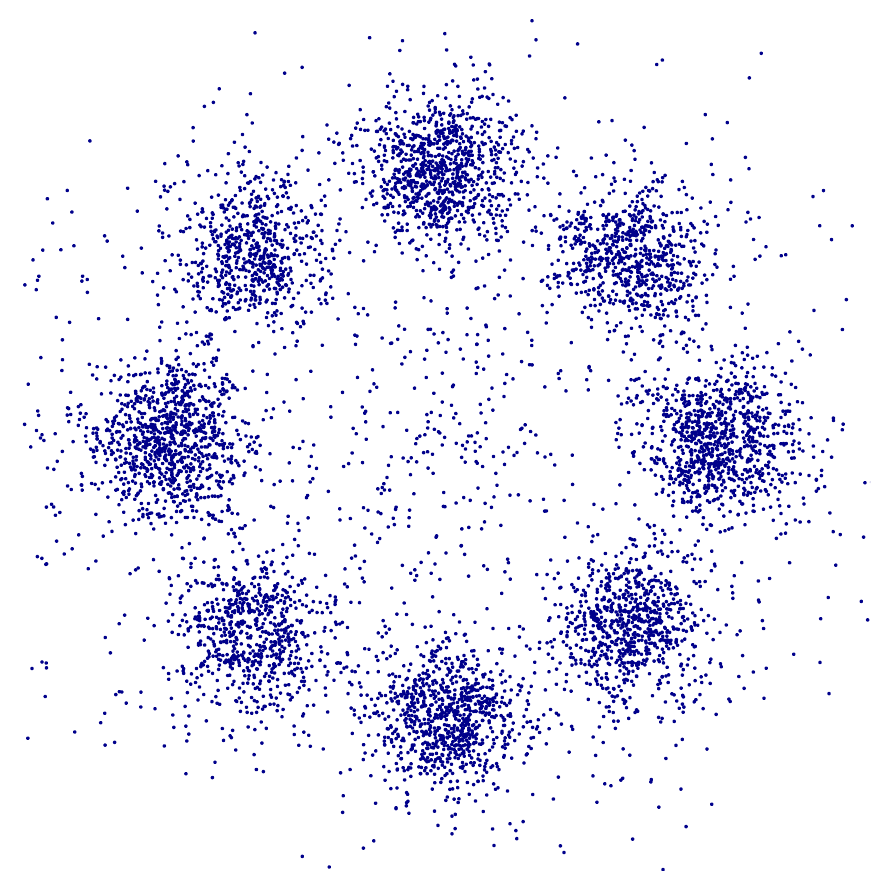} & 
        \includegraphics[width=\nonlinearfigwidth\textwidth]{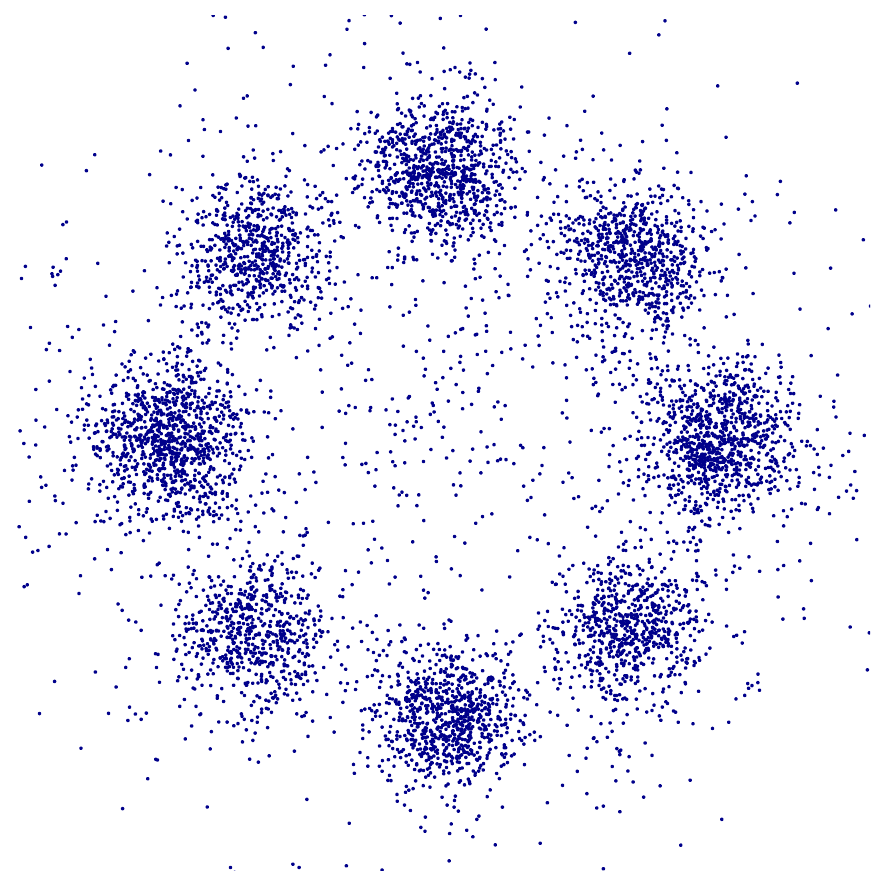} \\
        \includegraphics[width=\nonlinearfigwidth\textwidth]{figures/individual_plots_pdf/ToyDatasetsGroundTruths/swissrole_Ground_Truth.pdf} & 
        \includegraphics[width=\nonlinearfigwidth\textwidth]{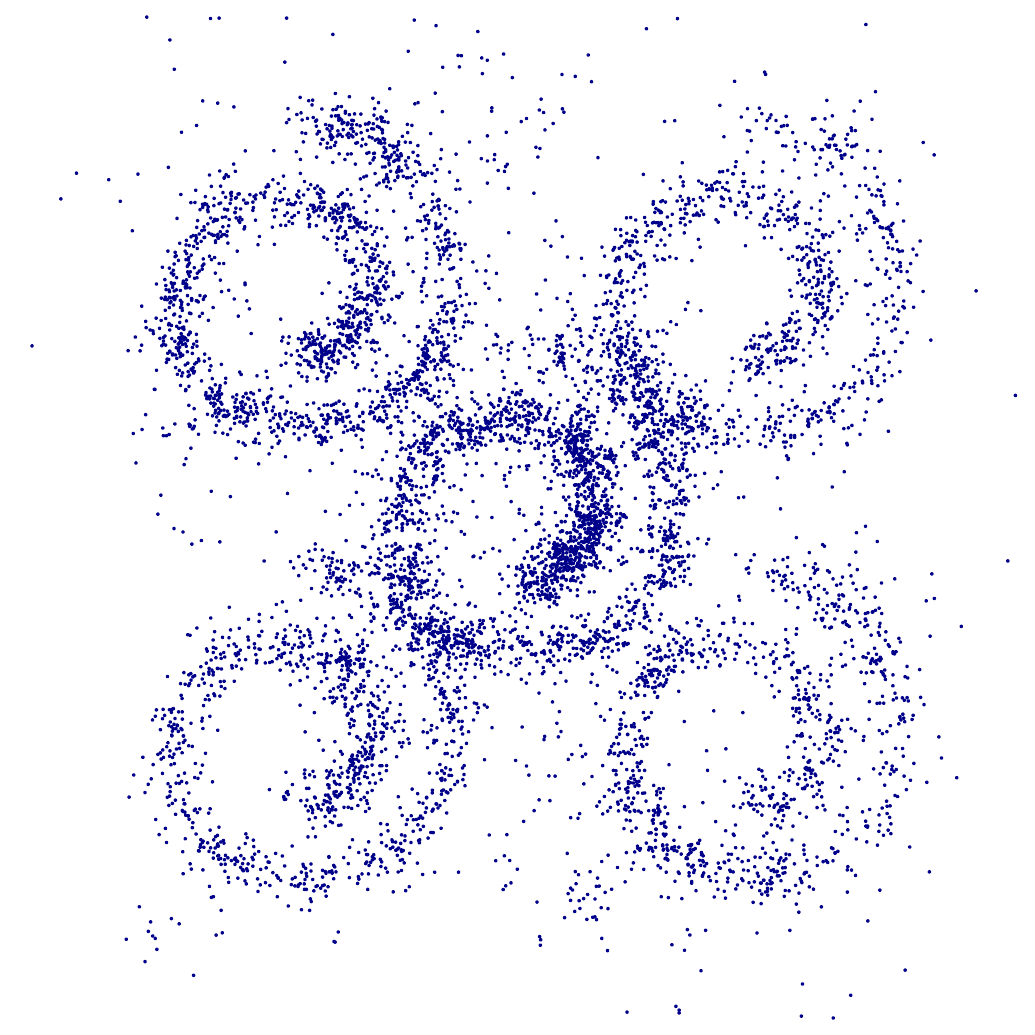} & 
        \includegraphics[width=\nonlinearfigwidth\textwidth]{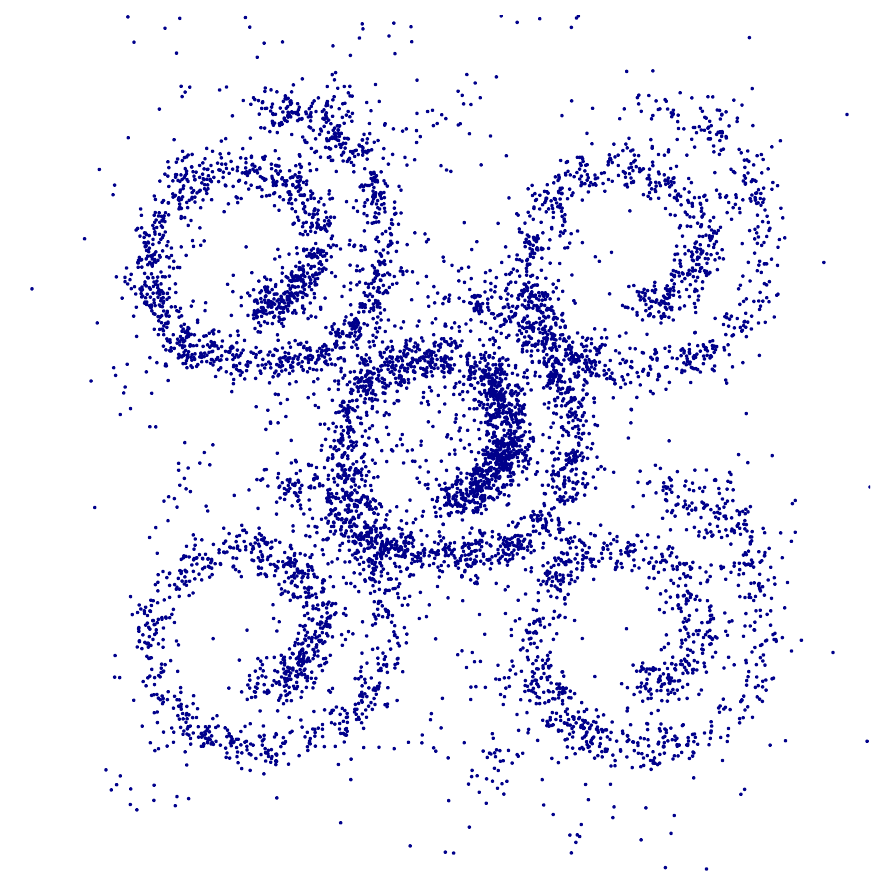} & 
        \includegraphics[width=\nonlinearfigwidth\textwidth]{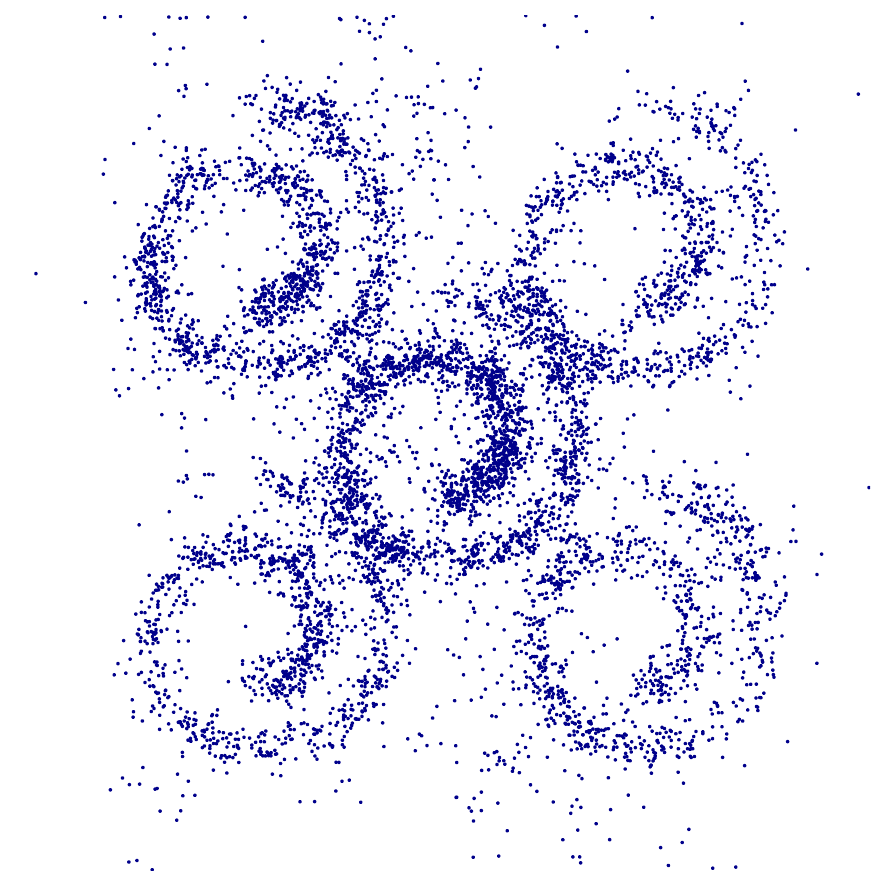}\\
    \end{tabular}%
    }
    \caption{Rows (top to bottom) show the Checkerboard, 8 Gaussian Mixtures, and Swiss Rolls datasets, and the columns show the ground truth and the results obtained using our framework for the nonlinear SDEs with Euler (NL-Euler), stochastic Runge–Kutta (NL-SRK), and Predictor–Corrector (NL-PC) integrators.}
    \label{fig:non_linear_sde_dataset_comparison}
\end{figure}
The selection of the noise scheduler $\beta(t)$ is pivotal to the model's performance. The scheduler must be admissible and sufficiently strong to ensure that the forward process converges closely to the stationary distribution by the terminal time $T=1.0$. For our numerical simulations we choose 
\[
\beta(t) = \beta_{\text{min}} + (\beta_{\text{max}} - \beta_{\text{min}}) \cdot \frac{t}{T},
\]
where $0 < \beta_{\text{min}} < \beta_{\text{max}}$, and the nonlinear SDE (Appendix~\ref{app:nonlinearSDEchoice})
\[
dX_t = -k \beta(t) \cdot \frac{X_t - a}{1 + (X_t - a)^2} \, dt + \sigma \sqrt{\beta(t)} \, dB_t.
\]
Note that $\beta(t)$ scales the drift term, controlling the attraction strength towards the attractor $a$, and the diffusion term, dictating the magnitude of stochastic noise introduced by the Wiener process $B_t$.
In the time-independent scenario ($\beta(t) = 1$), the stationary distribution becomes $p_s(x) = A [1 + (x - a)^2]^{-k / \sigma^2}$, as established in Appendix~\ref{app:nonlinearSDEchoice}. This distribution remains invariant to the magnitude of a constant $\beta$, since the exponent $-k / \sigma^2$ is independent of $\beta$. However, with a time-varying $\beta(t)$, the distribution at any finite time $t$ reflects the cumulative effect of $\beta(s)$ over $s \in [0, t]$.
\newcolumntype{M}[1]{>{\centering\arraybackslash}m{#1}}
\begin{figure}[t!]
  \centering
  \setlength{\tabcolsep}{6pt}
  \renewcommand{\arraystretch}{1.0}
  \begin{tabular}{M{0cm} M{0.45\textwidth} M{0.45\textwidth}}
    & {\footnotesize{Terminal Distribution}} & {\footnotesize{Stationary Distribution}} \\
    \rotatebox[origin=c]{90}{{\footnotesize{$\beta_{\max}=5$}}} &
      \includegraphics[width=1\linewidth]{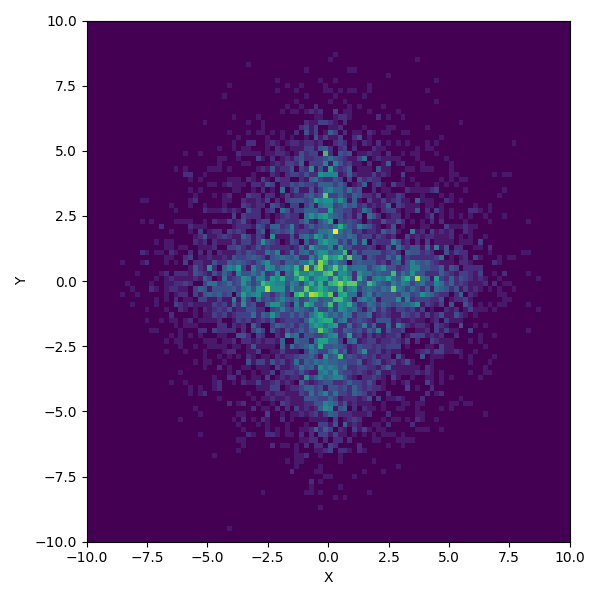} &
      \includegraphics[width=1\linewidth]{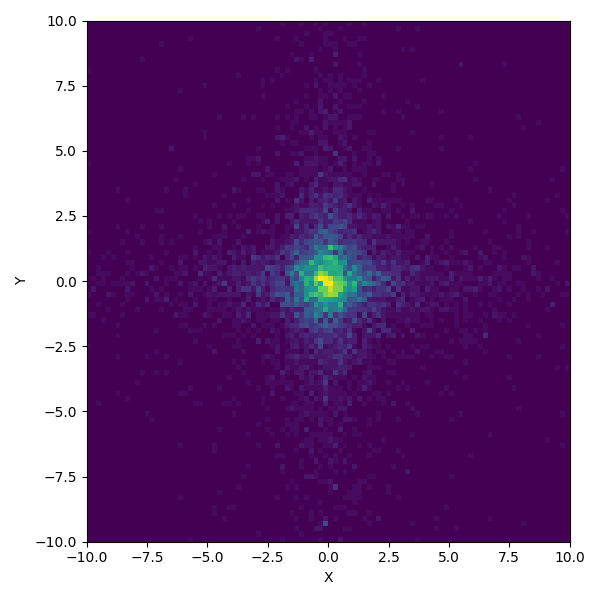} \\
    \rotatebox[origin=c]{90}{{\footnotesize{$\beta_{\max}=25$}}} &
      \includegraphics[width=1\linewidth]{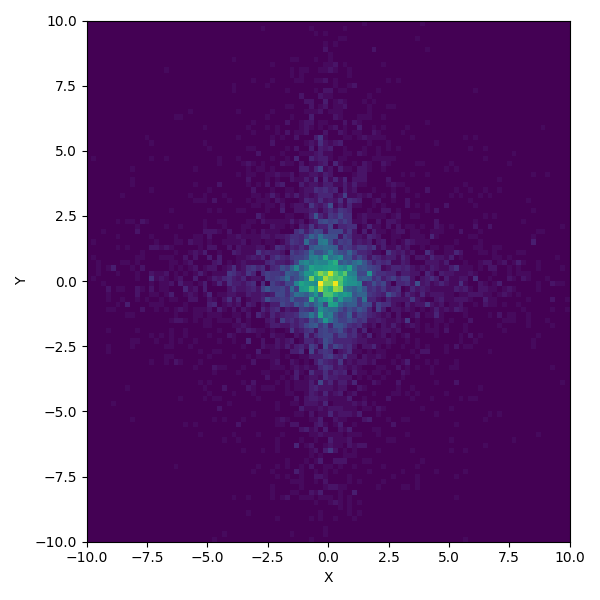} &
      \includegraphics[width=1\linewidth]{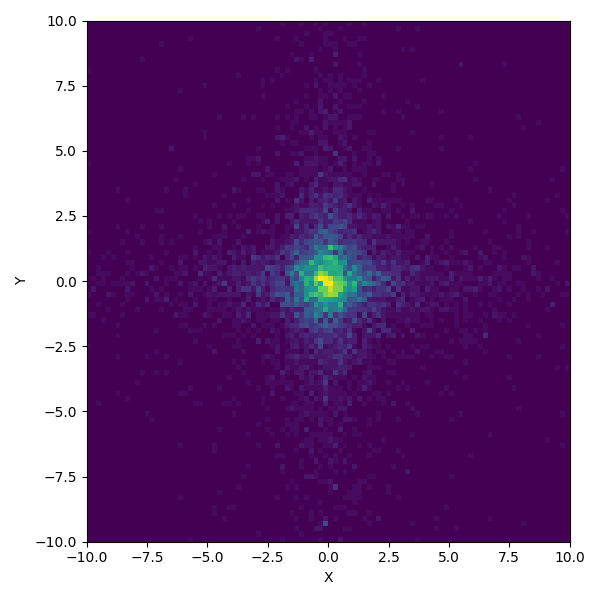} \\
  \end{tabular}
  \caption{\small
    Top row: the SDE’s terminal state with $\beta_{\max}=5$ (left) versus the stationary Cauchy distribution (right), indicating no convergence.
    Bottom row: the SDE’s terminal state with $\beta_{\max}=25$ (left) versus the stationary Cauchy distribution (right), showing convergence.
  }
  \label{fig:terminal_vs_stationary}
\end{figure}
For score-based nonlinear diffusion models to function properly, the distribution of $X_T$ at $t = T$ must closely approximate the stationary distribution $p_s(x)$. This closeness proves critical, as the reverse process assumes initial sampling from $p_s(x)$ to reconstruct the data distribution accurately. Clearly, the parameter $\beta_{\text{max}}$ significantly influences this convergence. A sufficiently high $\beta_{\text{max}}$ amplifies both the drift and diffusion coefficients, effectively accelerating the SDE's dynamics. This acceleration enables the process to approach $p_s(x)$ more rapidly within the fixed horizon $T$. This is demonstrated in Figure~\ref{fig:terminal_vs_stationary} where we see that a small $\beta_{\text{max}}$ results in a terminal distribution markedly dissimilar to $p_s(x)$, indicating insufficient convergence. Conversely, $\beta_{\text{max}} = 25$ shows a terminal distribution nearly identical to the stationary distribution.

In Table~\ref{tab:dataset_sde_metrics} we provide a quantitative comparison between the proposed score-based Malliavin calculus approach and two established baselines (DDPM and EDM) across three different benchmark datasets (Checkerboard, Swiss roll, and Gaussian Mixture). All experiments were conducted with three random seeds to ensure statistical robustness, reporting both mean and variance for Maximum Mean Discrepancy (MMD), Wasserstein distance, and Negative Log-Likelihood (NLL). The nonlinear SDE results currently exhibit higher error metrics compared to their linear counterparts, with occasional competitive variance metrics hinting at untapped potential that could be realised through architectural refinements and more sophisticated SDE formulations. Also, we notice that the Predictor-Corrector (PC) method consistently outperforms Euler and SRK variants. 
The results of linear diffusion models, marked by consistently low errors and variances (see Table~\ref{tab:dataset_sde_metrics}), empirically confirm the equivalence of the proposed method based on Malliavin calculus and classical diffusion models (DDPM and EDM). 
\begin{table}[t]
\centering
\begingroup
\setlength{\tabcolsep}{10pt}
\scriptsize
\begin{tabular}{llcccccc}
\toprule
\textbf{Model} & \textbf{Dataset} & \textbf{MMD-Mean} & \textbf{MMD-Var} & \textbf{W2-Mean} & \textbf{W2-Var} & \textbf{NLL-Mean} & \textbf{NLL-Var} \\
\midrule
\multicolumn{8}{l}{\textbf{Linear Models}} \\
\midrule
VE & CB & \textbf{0.0007} & 1.9028e-07 & \textbf{0.1653} & 0.0008 & \textbf{3.9208} & 9.5896e-05 \\
VP & CB & 0.0010 & \textbf{1.1325e-07} & 0.1831 & \textbf{0.0002} & 4.0007 & 0.0002 \\
subVP & CB & 0.0035 & 6.3127e-07 & 0.2864 & 0.0007 & 4.0009 & 0.0004 \\
\midrule
VE & NGM & \textbf{0.0000} & \textbf{1.7695e-09} & \textbf{0.2189} & 0.0006 & \textbf{4.8990} & 0.0011 \\
VP & NGM & 0.0006 & 5.6231e-08 & 0.3489 & \textbf{0.0001} & 4.9401 & \textbf{7.4500e-05} \\
subVP & NGM & 0.0035 & 8.5086e-07 & 0.5506 & 0.0064 & 4.9355 & 0.0011 \\
\midrule
VE & SR & \textbf{0.0002} & 5.4543e-09 & \textbf{0.3131} & 0.0010 & 5.3631 & 4.7668e-05 \\
VP & SR & 0.0005 & \textbf{1.1411e-09} & 0.4149 & 0.0005 & 5.3687 & 8.0598e-05 \\
subVP & SR & 0.0017 & 1.2077e-08 & 0.5221 & 0.0007 & \textbf{5.3315} & 0.0001 \\
\midrule
\multicolumn{8}{l}{\textbf{DDPM}} \\
\midrule
DDPM & CB & 0.0019 & 1.8667e-07 & 0.3270 & 0.0006 & 3.9900 & \textbf{7.0927e-05} \\
DDPM & NGM & 0.0042 & 1.0289e-06 & 1.3238 & 0.0601 & 5.4146 & 0.0833 \\
DDPM & SR & 0.0013 & 1.5556e-08 & 0.3140 & \textbf{2.5976e-05} & 5.3381 & 0.0004 \\
\midrule
\multicolumn{8}{l}{\textbf{EDM}} \\
\midrule
EDM & CB & 0.0540 & 0.0000 & 1.1483 & 0.0012 & 5.8074 & 0.0109 \\
EDM & NGM & 0.0637 & 0.0000 & 8.2566 & 0.0065 & 75.3273 & 81.0609 \\
EDM & SR & 0.0239 & 0.0000 & 4.4086 & 0.0722 & 11.9691 & 0.0324 \\
\midrule
\multicolumn{8}{l}{\textbf{Nonlinear Models}} \\
\midrule
NL Euler & CB & 0.0122 & 0.0000 & 0.5315 & 0.0017 & 3.9816 & 5.5000e-05 \\
NL PC & CB & 0.0112 & 0.0000 & 0.5130 & 0.0007 & 3.9755 & 3.4000e-05 \\
NL SRK & CB & 0.0126 & 0.0000 & 0.5443 & 0.0011 & 3.9786 & \textbf{1.5000e-05} \\
\midrule
NL Euler & NGM & 0.0117 & 0.0000 & 1.2613 & 0.0430 & 7.6547 & 0.3928 \\
NL PC & NGM & 0.0080 & 0.0000 & 1.0965 & 0.1058 & 7.0509 & 1.2744 \\
NL SRK & NGM & 0.0125 & 0.0000 & 1.2893 & 0.0552 & 7.7671 & 0.5298 \\
\midrule
NL Euler & SR & 0.0072 & 0.0000 & 1.2348 & 0.0004 & 5.3497 & 2.7000e-05 \\
NL PC & SR & 0.0066 & 0.0000 & 1.2186 & 0.0044 & 5.3535 & \textbf{7.0000e-06} \\
NL SRK & SR & 0.0075 & 0.0000 & 1.2538 & 0.0010 & 5.3492 & 1.2000e-05 \\
\bottomrule
\end{tabular}
\endgroup
\caption{Performance comparison of different diffusion models across three benchmark datasets: Checkerboard (CB), NGaussianMixtures (NGM), and SwissRoll (SR). Each experiment was conducted with three different random seeds to ensure statistical robustness. The table reports mean values and variances for three evaluation metrics: Maximum Mean Discrepancy (MMD), Wasserstein distance (W2), and Negative Log-Likelihood (NLL). Bold values indicate the best performance across all models for each dataset and metric combination.}
\label{tab:dataset_sde_metrics}
\end{table}

To further validate the proposed methodology and demonstrate its relationship to established diffusion models, we conducted experiments using the MNIST dataset of handwritten digits. 
In Figure~\ref{fig:mnist_comparison} we show that all models produce recognisable digits with clear structural characteristics. The sub-VP SDE and DDPM implementations yield marginally sharper and more defined digits, whilst the VE and VP models also generate convincing numerical 
samples. These results indicate that when applied to linear SDEs, our framework achieves equivalence with established diffusion model techniques whilst offering a more generalised mathematical foundation.
Table~\ref{tab:mnist_metrics} reports the mean and variance of the Fr\'echet Inception Distance (FID) and Sliced Wasserstein Distance (SWD) for the proposed linear diffusion models based on Malliavin calculus, as well as for DDPM–EDM. As expected, EDM attains the best FID score with low variance, while DDPM achieves superior performance in terms of SWD. Within our framework, the VE-SDE variant performs best, whereas the VP-SDE exhibits some scaling issues, as reflected in the reported metrics.

\begin{table}[t!]
\centering
\label{tab:mnist_metrics}
\scriptsize
\begin{tabular}{lcccc}
\toprule
\textbf{Model} & \textbf{FID Mean} & \textbf{FID Variance} & \textbf{SW Distance Mean} & \textbf{SW Distance Variance} \\
\midrule
VE & 12.7681 & 4.8401 & 0.0013 & 0.0000 \\
VP & 96.1484 & 2.0293 & 0.0006 & 0.0000 \\
subVP & 13.7218 & 1.2024 & 0.0010 & 0.0000 \\
DDPM & 5.5812 & 2.9427 & \textbf{0.0002} & 0.0000 \\
EDM & \textbf{1.5075} & \textbf{0.0003} & 0.0004 & 0.0000 \\
\bottomrule
\end{tabular}
\caption{Performance comparison of diffusion models on the MNIST dataset. Each experiment was conducted with three different random seeds to ensure statistical robustness. The table reports mean values and variances for two evaluation metrics: Fr\'echet Inception Distance (FID) and Sliced Wasserstein Distance (SWD), where lower values indicate better performance. Bold values indicate the best performance across all models for each metric.}
\end{table}

\begin{figure}[!t]
\small
    \centering
    \resizebox{\textwidth}{!}{%
    \begin{tabular}{cc}
        \small{DDPM} & \small{VE}\\
        \includegraphics[width=0.4\textwidth]{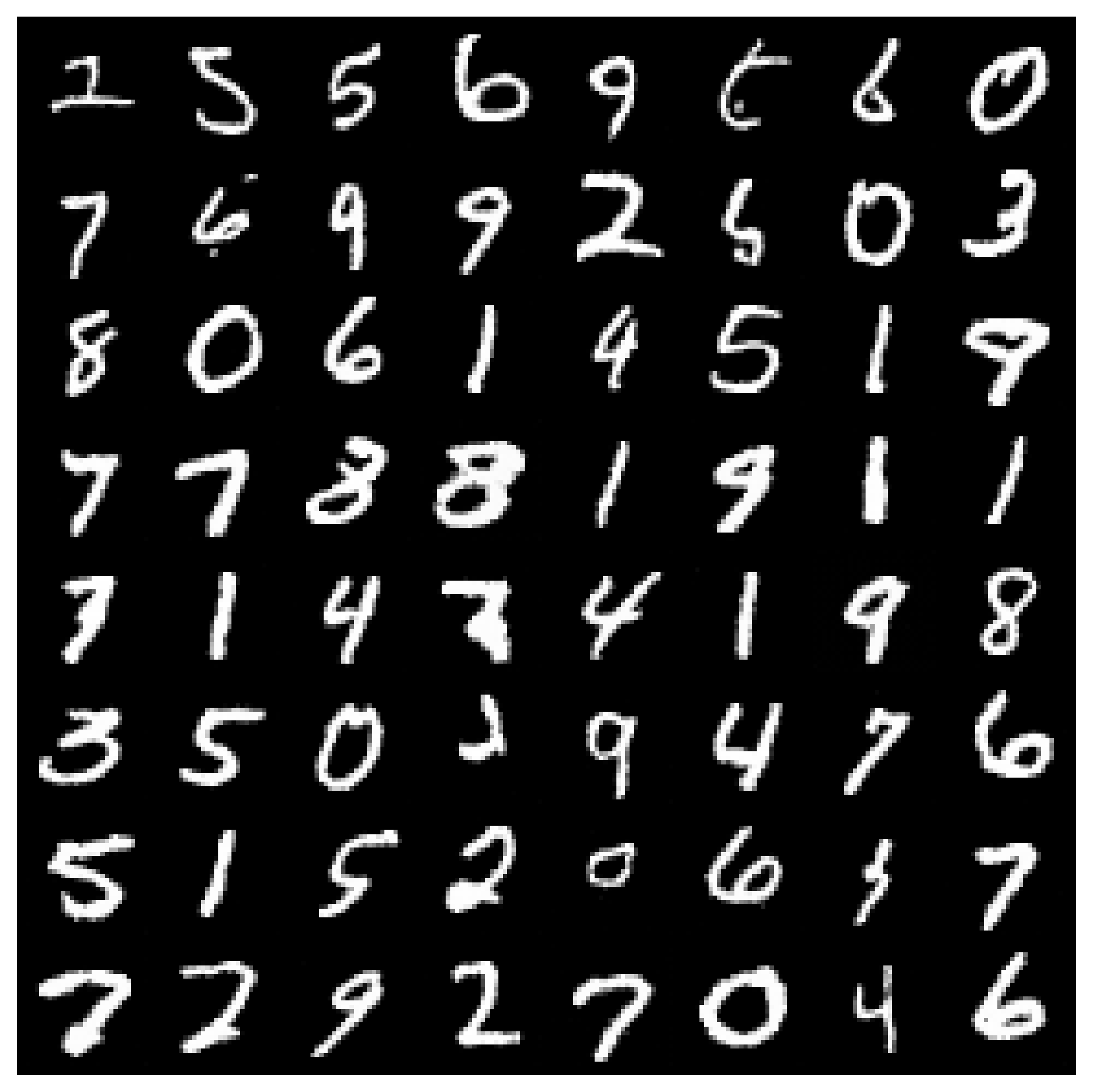} & \includegraphics[width=0.4\textwidth]{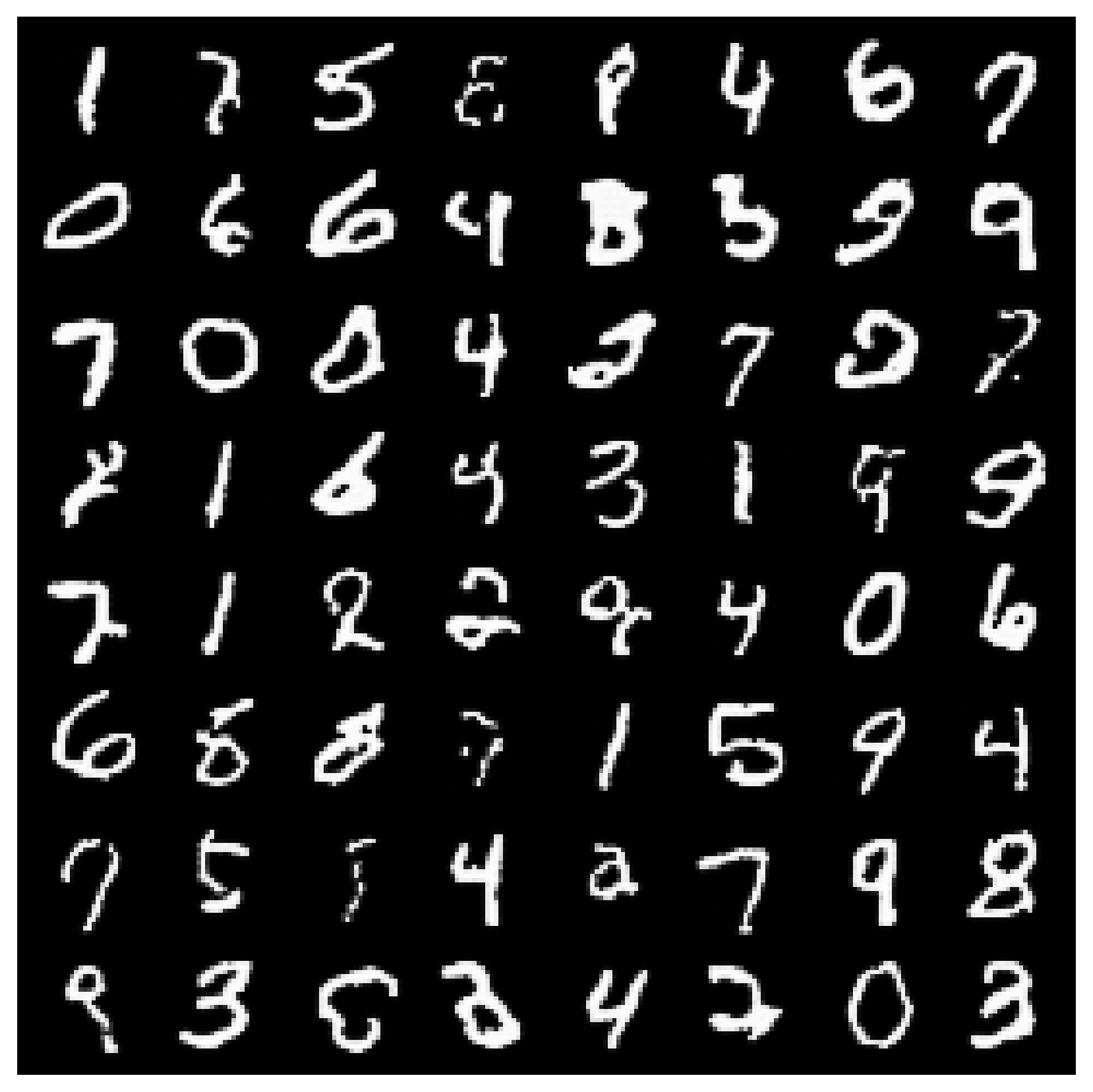}\\
        \small{VP} & \small{sub-VP}\\
        \includegraphics[width=0.4\textwidth]{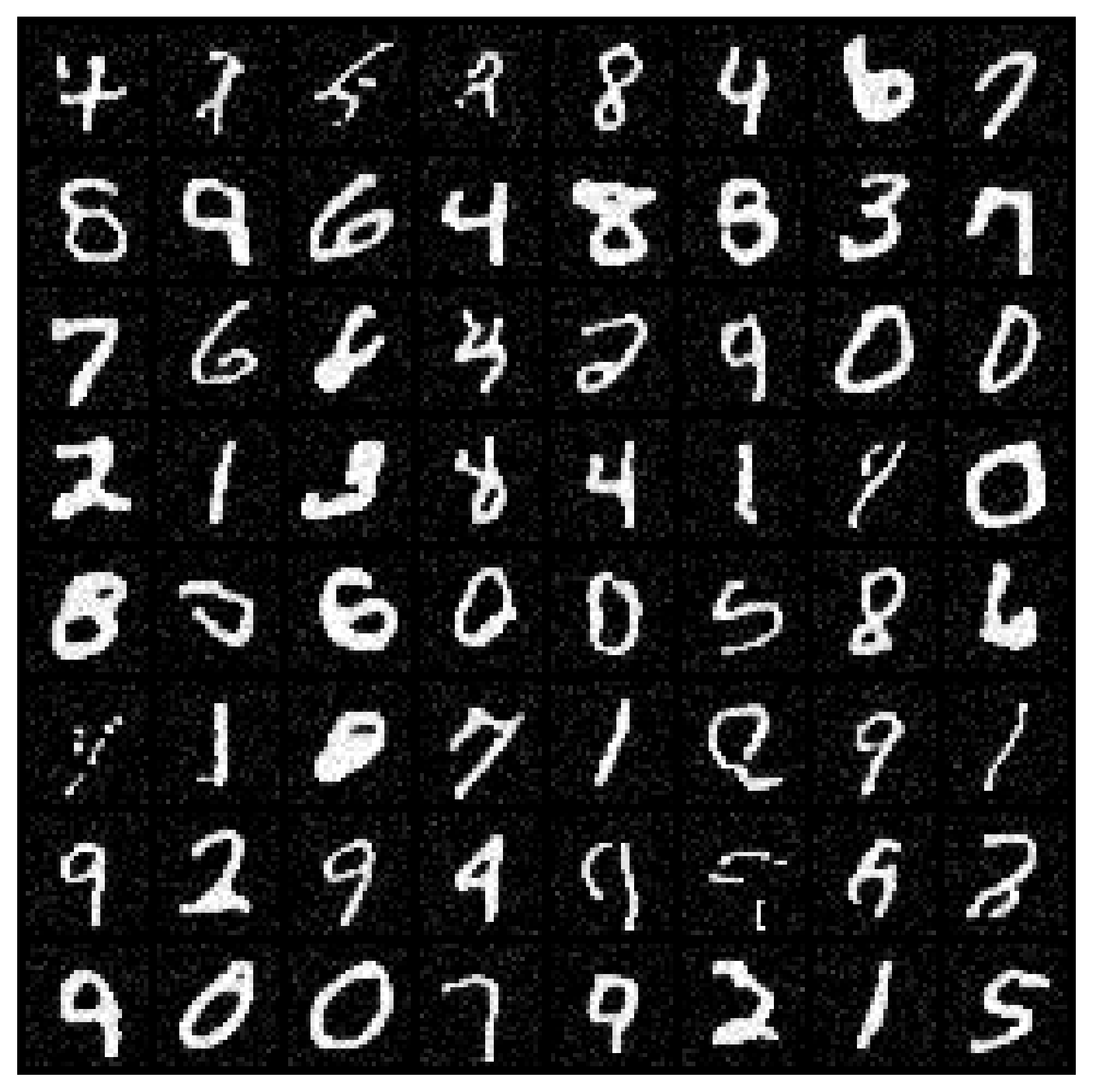} & \includegraphics[width=0.4\textwidth]{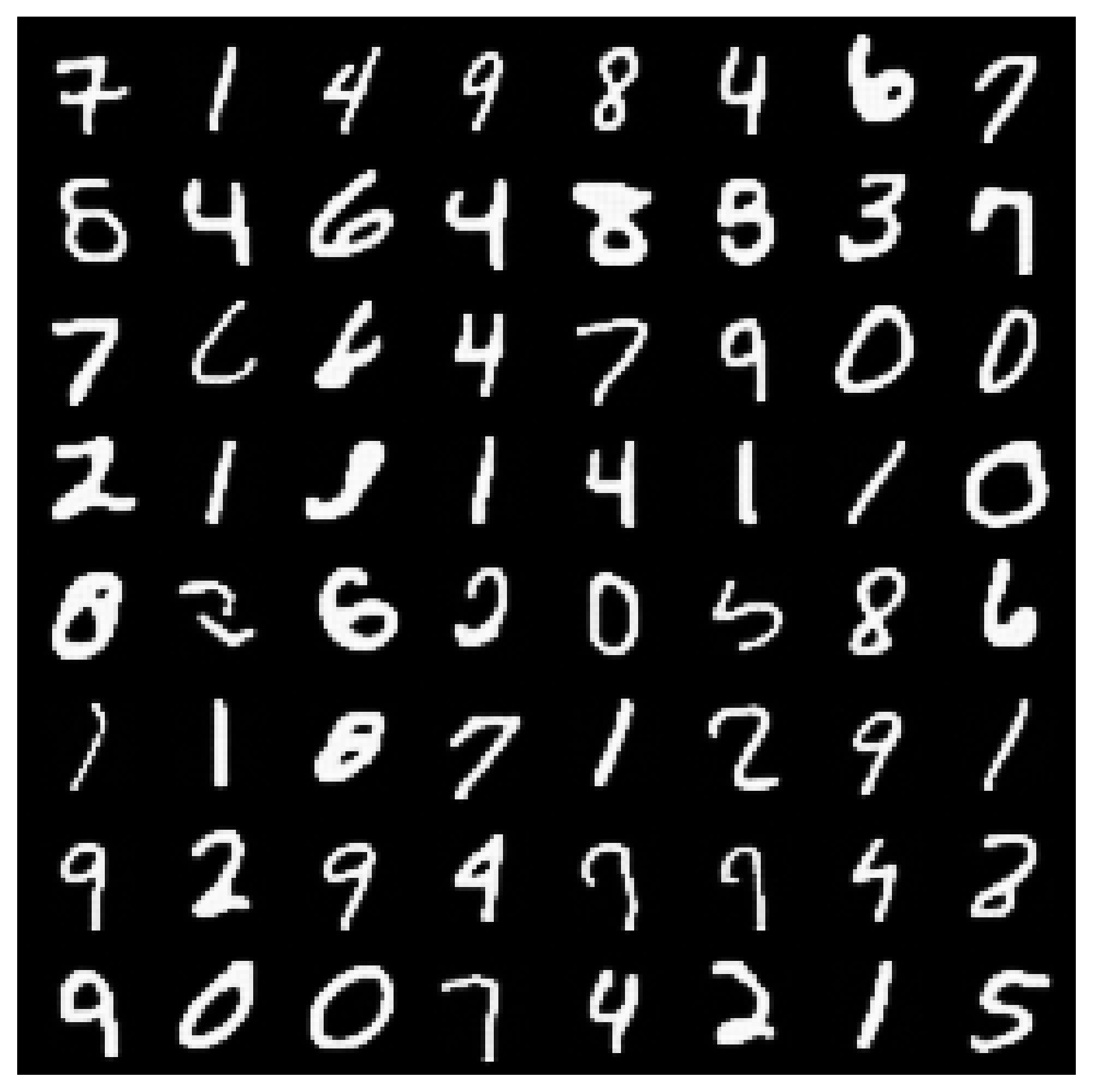}
    \end{tabular}%
    }
    \caption{MNIST results obtained using DDPM (benchmark) and the proposed linear diffusion models based on Malliavin calculus for VE, VP, and sub-VP SDEs.}
    \label{fig:mnist_comparison}
\end{figure}

The computational cost is reasonable across all numerical experiments. Training on toy datasets (Checkerboard, Swiss roll, and Gaussian mixture) required approximately four to six hours for both linear and nonlinear SDE formulations, whilst MNIST experiments with linear SDEs completed in approximately four hours. These consistent training times across varying model complexities and datasets demonstrate the scalability of the proposed Malliavin-calculus-based framework for linear SDEs.

\section{Summary}
\label{sec:conclusion}

We derived new exact closed-form expressions for the score function of 
broad classes of nonlinear diffusion generative models using 
Malliavin calculus. 
This provides solid mathematical foundations for further advances in score-based 
generative modelling, including extensions to nonlinear SDEs driven by 
random processes other than white noise, and the design of more 
efficient computational algorithms. Our numerical experiments across multiple
generative tasks with prototype datasets yield accuracy and computational cost comparable 
to the state-of-the-art. Nonetheless, several implementation aspects of the proposed new framework merit further consideration. As with many numerical schemes, accuracy depends on discretisation, and may benefit from specialised techniques tailored to high-dimensional problems. The current choice of nonlinear SDEs, whilst serving as proof of concept, indeed exhibits limitations due to high dimensionality. Furthermore, the vector field in the reverse diffusion process may be extremely small in some locations\footnote{Note that there are points in Figure \ref{fig:non_linear_sde_dataset_comparison} that fail to reverse-diffuse correctly to the ground-truth dataset.} due to near-singular behaviour of the nonlinear diffusion model. Although singularities in diffusion models can be mitigated through normalisation techniques, deriving such normalisation for nonlinear SDEs proves more challenging than for linear cases.
While the present work is perhaps the first to employ Malliavin calculus in numerical 
computations of generative tasks, several open questions remain to be addressed. In particular, developing simulation-free estimation techniques based on Malliavin derivatives could yield more accurate and efficient approximations of derivative terms, thereby reducing variance and computational cost. Second, incorporating neural-operator methods to regularise singular behaviour in Malliavin derivative and covariance estimates may enhance numerical stability in near-singular regimes. Third, identifying optimal nonlinear SDE formulations together with appropriate normalisation strategies would enable more expressive and robust generative models. Finally, exploring low-rank and structured tensor techniques could offer a promising pathway to reducing computational and memory burdens in high dimensions, thereby improving the scalability and practical applicability of the proposed framework.

\section*{Data availability statement}
All datasets used in this work were generated using the publicly available code
from the reference cited in Section~\ref{sec:numerical}. That code
provides all scripts required to produce the datasets used in our numerical
experiments, and we relied on it without modification.

\section*{Acknowledgements}
\addcontentsline{toc}{section}{Acknowledgements}
Daniele Venturi was supported by the DOE grant DE-SC0024563.
We gratefully acknowledge the computational resources provided by the Nautilus cluster of the National Research Platform and thank them for their support.

\begin{appendices}
\section{Malliavin calculus}
\label{app:malliavin_calculus}
Malliavin calculus, introduced by Paul Malliavin in the 1970s, forms the mathematical backbone of the proposed mathematical framework for score functions. Originally motivated by the need to analyse hypoelliptic operators and the regularity of solutions to SPDEs, Malliavin developed a stochastic calculus of variations in Wiener space~\cite{malliavin:78:stochastic}. His pioneering work introduced the Malliavin derivative, a concept of differentiability for random variables, and the Malliavin matrix, which quantifies the stochastic sensitivity of SDE solutions. These tools have become fundamental in stochastic analysis, enabling proofs of the existence and smoothness of probability densities~\cite{10.5555/262387}. 

Consider the Wiener space \((\Omega, \mathcal{F}, \mathbb{P})\), where \(\Omega = C_0([0, \infty); \mathbb{R})\), $\mathbb{P}$ is the Wiener measure, and \(\mathcal{F}\) is its completion. 
Set \(H = L^2(\mathbb{R}_+; \mathbb{R})\) with inner product \(\langle h, g \rangle_H = \int_0^\infty h(t) g(t) \, dt\).  For \(h \in H\), the Wiener integral \(B(h) = \int_0^\infty h(t)\, dB_t\) is a centred Gaussian random variable with variance \(\|h\|_H^2\). The Malliavin derivative \(D\) and its adjoint, the Skorokhod integral (or Malliavin divergence) \(\delta\), are the fundamental operators in this calculus \cite{doi:10.1137/1120030}. For a smooth cylindrical functional \(F = f(B(h_1), \dots, B(h_n))\), where \(f \in C^\infty(\mathbb{R}^n)\) has polynomial growth and \(h_i \in H\), the Malliavin derivative is:
\[
D_t F = \sum_{i=1}^n \frac{\partial f}{\partial x_i}(B(h_1), \dots, B(h_n)) \, h_i(t), \quad t \geq 0.
\]
The operator \(D: \mathcal{S} \subset L^p(\Omega) \to L^p(\Omega; H)\) is closable, and its closure defines the Malliavin-Sobolev space \(\mathbb{D}^{1,p}\) with norm
\[
\|F\|_{1,p} = \left( \mathbb{E}[|F|^p] + \mathbb{E} \left[ \left| \int_0^\infty (D_t F)^2 \, dt \right|^{p/2} \right] \right)^{1/p}.
\]
For a process \(u_t = \sum_{j=1}^m F_j h_j(t)\), where \(F_j \in \mathcal{S}\) and \(h_j \in H\), the Skorokhod integral is:
\[
\delta(u) = \sum_{j=1}^m \left( F_j B(h_j) - \langle D F_j, h_j \rangle_H \right),
\]
satisfying the duality relation \(\mathbb{E}[F \delta(u)] = \mathbb{E}[\langle D F, u \rangle_H]\) for \(F \in \text{Dom}(D)\) and \(u \in \text{Dom}(\delta)\).

\subsection{Regularity of densities}
Consider a random vector \(F = (F^1, \dots, F^m)\) with \(F^i \in \mathbb{D}^{1,2}\). The \emph{Malliavin matrix} is:
\[
\gamma_F = \left( \langle D F^i, D F^j \rangle_H \right)_{1 \leq i, j \leq m}.
\]
If \(\mathbb{P}(\det \gamma_F > 0) = 1\) and \(\mathbb{E}[|\det \gamma_F|^{-p}] < \infty\) for some \(p > 1\), then \(F\) admits a smooth density \(p_F\) with respect to Lebesgue measure. If each \(F^i \in \mathbb{D}^\infty = \bigcap_{k=1}^\infty \bigcap_{p \geq 1} \mathbb{D}^{k,p}\), the density \(p_F\) is infinitely differentiable \cite{Nualart_Nualart_2018}.

\subsection{Covering vector fields and Bismut-type formula}
A key tool for computing score functions is the notion of covering vector fields \cite{malliavin.thalmaier:06:stochastic, Nualart_Nualart_2018}:

\begin{definition}
\quad Let $F$ be an $m$-dimensional random vector with components in $\mathbb{D}^{1,2}$. An $m$-dimensional process $u = (u_k(t))_{t \geq 0, 1 \leq k \leq m}$ is a \textit{covering vector field} of $F$ if $u_k \in \text{Dom}(\delta)$ and
\begin{equation*}
    \partial_k \varphi(F) = \langle D(\varphi(F)), u_k \rangle_H
\end{equation*}
for any $\varphi \in C_0^1(\mathbb{R}^m)$ and $k = 1, \dots, m$.
\end{definition}

The Bismut-Elworthy-Li formula \cite{alma990005308880204808,ELWORTHY1994252,Elworthy_1982}, originally developed for heat kernels and sensitivity analysis, provides probabilistic representations for derivatives of expectations. We employ a related result, which we term the Bismut-type formula:

\begin{proposition}[Proposition 7.4.2, \cite{Nualart_Nualart_2018}]
\quad Consider an $m$-dimensional non-degenerate random vector $F$ with components in $\mathbb{D}^\infty$. Suppose that $p(x) > 0$ a.e., where $p$ denotes the density of $F$. Then, for any covering vector field $u$ and all $k \in \{1, \dots, m\}$:
\begin{equation*}
    \partial_k \log p(y) = -\mathbb{E}(\delta(u_k)|F = y) \quad \text{a.e.}
\end{equation*}
\end{proposition}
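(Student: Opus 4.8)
The plan is to combine the defining property of the covering vector field with the duality (integration-by-parts) relation between the Malliavin derivative $D$ and its adjoint $\delta$, then to transport the resulting Wiener-space identity to $\mathbb{R}^m$ through the density $p$, and finally to strip off the test function using the fundamental lemma of the calculus of variations.

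First I would fix an arbitrary $\varphi \in C_0^1(\mathbb{R}^m)$. By the definition of a covering vector field, $\partial_k \varphi(F) = \langle D(\varphi(F)), u_k\rangle_H$. Taking expectations and invoking the duality $\mathbb{E}[\langle DG, u\rangle_H] = \mathbb{E}[G\,\delta(u)]$ — legitimate here because $F \in \mathbb{D}^\infty$ forces $\varphi(F) \in \mathrm{Dom}(D)$ (with chain rule $D(\varphi(F)) = \sum_i \partial_i\varphi(F)\,DF^i$) and $u_k \in \mathrm{Dom}(\delta)$ by hypothesis — I obtain
\[
\mathbb{E}[\partial_k \varphi(F)] = \mathbb{E}[\varphi(F)\,\delta(u_k)].
\]

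Next I would rewrite both sides as integrals over $\mathbb{R}^m$ against $p$. Nondegeneracy together with $F \in \mathbb{D}^\infty$ guarantees, via Malliavin's criterion recalled above, that $p$ is smooth; hence the left side is $\int_{\mathbb{R}^m} \partial_k\varphi(y)\,p(y)\,dy$, and an integration by parts in $\mathbb{R}^m$ — with vanishing boundary term since $\varphi$ has compact support — converts it to $-\int_{\mathbb{R}^m}\varphi(y)\,\partial_k p(y)\,dy$. For the right side, the tower property of conditional expectation (using that $\varphi(F)$ is $\sigma(F)$-measurable) gives $\mathbb{E}[\varphi(F)\,\delta(u_k)] = \int_{\mathbb{R}^m}\varphi(y)\,\mathbb{E}[\delta(u_k)\mid F=y]\,p(y)\,dy$. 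Equating the two representations yields
\[
-\int_{\mathbb{R}^m}\varphi(y)\,\partial_k p(y)\,dy = \int_{\mathbb{R}^m}\varphi(y)\,\mathbb{E}[\delta(u_k)\mid F=y]\,p(y)\,dy.
\]

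Because this identity holds for every $\varphi \in C_0^1(\mathbb{R}^m)$, the fundamental lemma forces the a.e.\ pointwise equality $-\partial_k p(y) = \mathbb{E}[\delta(u_k)\mid F=y]\,p(y)$; dividing by $p(y) > 0$ and recognizing $\partial_k p/p = \partial_k \log p$ delivers the stated formula. The hard part will not be the algebra but the integrability bookkeeping at each transition: verifying $\varphi(F)\,\delta(u_k) \in L^1(\Omega)$, that the conditional expectation $\mathbb{E}[\delta(u_k)\mid F=y]$ is well defined and locally integrable against $p$, and that $p$ may be differentiated and integrated by parts as claimed. All of these rest on controlling the negative moments $\mathbb{E}[(\det\gamma_F)^{-p}]$, which is exactly what nondegeneracy combined with the $\mathbb{D}^\infty$ regularity of $F$ provides.
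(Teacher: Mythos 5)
Your proposal is correct: the paper itself gives no proof of this proposition, deferring instead to Proposition 7.4.2 of Nualart--Nualart, and your argument (the covering-field identity combined with the duality $\mathbb{E}[\langle D(\varphi(F)),u_k\rangle_H]=\mathbb{E}[\varphi(F)\,\delta(u_k)]$, then integration by parts against the smooth density, the fundamental lemma of the calculus of variations, and division by $p(y)>0$) is exactly that standard textbook argument. The integrability bookkeeping you flag at the end is also lighter than you suggest: $\varphi$ is bounded and $\delta(u_k)\in L^2(\Omega)$ by definition of $\mathrm{Dom}\,\delta$, so the tower-property step is immediate, and the negative moments of $\det\gamma_F$ enter only through the quoted smoothness of $p$.
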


\subsection{The first variation process}
Central to our analysis is the first variation process \cite{fournie1999applications}, which quantifies sensitivity to initial conditions. Consider an \(m\)-dimensional SDE on a filtered probability space \((\Omega, \mathcal{F}, \{\mathcal{F}_t\}_{t \geq 0}, \mathbb{P})\) with a \(d\)-dimensional Brownian motion \(B_t\):
\[
dX_t = b(t, X_t) \, dt + \sigma(t, X_t) \, dB_t, \quad X_0 = x_0,
\]
where \(b: [0, T] \times \mathbb{R}^m \to \mathbb{R}^m\) and \(\sigma: [0, T] \times \mathbb{R}^m \to \mathbb{R}^{m \times d}\) satisfy Lipschitz and linear growth conditions. The first variation process \(Y_t \in \mathbb{R}^{m \times m}\) solves:
\[
dY_t = \partial_x b(t, X_t) Y_t \, dt + \partial_x \sigma(t, X_t) Y_t \, dB_t, \quad Y_0 = I_m,
\]
where \(\partial_x b(t, X_t) \in \mathbb{R}^{m \times m}\) and \(\partial_x \sigma(t, X_t)\) is an \(m \times m \times d\)-tensor. Under \(C^1\) regularity with bounded derivatives, \(Y_t = \partial X_t/\partial x_0\) represents the Fréchet derivative of the solution map.

The connection to Malliavin calculus is established through the Malliavin derivative of \(X_T\). Assuming $b$ and $\sigma$ are $C^1$ with bounded derivatives, we have:

\[
D_r X_T = Y_T Y_r^{-1} \sigma(r, X_r), \quad 0 \leq r \leq T,
\]
where \(D_r X_T \in \mathbb{R}^{m \times d}\) measures the response of \(X_T\) to perturbations in the Brownian motion at time \(r\). This relationship bridges the sensitivity analysis via \(Y_t\) with the variational structure provided by Malliavin calculus, forming the foundation for our score function derivation.

\section{Malliavin Analysis for Diffusion Models}
\label{sec:proofs}

This section consolidates the technical foundations supporting the main exposition. We employ Malliavin calculus to represent the Malliavin covariance via the first variation process in \ref{sec:malliavin-matrix}, establish the covering property and its uniqueness on the appropriate span in \ref{app:covering}, and demonstrate that, under linearity with state-independent diffusion, the Skorokhod integral reduces to an It\^o integral in \ref{app:skorokhod_to_ito}. We further simplify the linear It\^o integral in \ref{app:ito_simplification}, derive the score for linear SDEs with additive noise in \ref{app:deriving_linear_score}, and in \ref{app:malliavin-analysis}, we analyse the Malliavin matrix for VE, VP, and sub-VP SDEs, including small-time singular behaviour and closed-form expressions where available. All results are stated to align precisely with the theorems and lemmas cited in the main text.

\subsection{Malliavin matrix theorem}
\label{sec:malliavin-matrix}
In this section, we establish and prove Theorem~\ref{thm:text-malliavinmatrix} that represents the Malliavin matrix in terms of the first variation process.
\begin{theorem}
\label{thm:malliavin-matrix}
Consider the linear stochastic differential equation with additive noise:
\[
dX_t = b(t) X_t \, dt + \sigma(t) \, dB_t, \quad X_0 = x_0,
\]
where \(X_t \in \mathbb{R}^m\), \(b(t)\) is an \(m \times m\) deterministic matrix, \(\sigma(t)\) is an \(m \times d\) deterministic matrix-valued function, \(B_t\) is a \(d\)-dimensional standard Brownian motion, and \(x_0 \in \mathbb{R}^m\) is a deterministic initial condition. Let \(Y_t\) be the first variation process satisfying:
\[
dY_t = b(t) Y_t \, dt, \quad Y_0 = I_m,
\]
where \(I_m\) is the \(m \times m\) identity matrix. Then, the Malliavin matrix \(\gamma_{X_T}\) of the solution \(X_T\) at time \(T > 0\) is given by:
\[
\gamma_{X_T} = Y_T \left( \int_0^T Y_r^{-1} \sigma(r) \sigma(r)^\top (Y_r^{-1})^\top \, dr \right) Y_T^\top.
\]
\end{theorem}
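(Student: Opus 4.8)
The plan is to exploit the explicit solvability of the linear SDE and the known representation of the Malliavin derivative of the terminal value, then read off the Gram matrix that defines $\gamma_{X_T}$. Because the noise is additive and the drift is linear, the first variation equation $dY_t = b(t) Y_t\,dt$ carries no Brownian term, so $Y_t$ is a deterministic, invertible $m\times m$ fundamental solution (the matrix exponential of $\int_0^t b(s)\,ds$). This determinism is precisely what will let me factor $Y_T$ out of the time integral at the end.

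First I would solve the SDE by variation of constants. Setting $Z_t = Y_t^{-1} X_t$ and applying It\^o's product rule together with $dY_t^{-1} = -Y_t^{-1} b(t)\,dt$, the drift terms cancel and I obtain $dZ_t = Y_t^{-1}\sigma(t)\,dB_t$, hence
\[
X_T = Y_T x_0 + Y_T \int_0^T Y_s^{-1}\sigma(s)\,dB_s.
\]
Since $x_0$ is deterministic, the term $Y_T x_0$ has zero Malliavin derivative and contributes nothing.

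Next I would compute $D_r X_T$. The integrand $Y_s^{-1}\sigma(s)$ is deterministic, so each component of $X_T$ is, up to the constant $Y_T x_0$, a Wiener integral of a deterministic kernel; the elementary rule $D_r\bigl(\int_0^T g(s)\,dB_s\bigr) = g(r)\,\1_{[0,T]}(r)$ applied row-by-row yields
\[
D_r X_T = Y_T Y_r^{-1}\sigma(r), \qquad 0 \le r \le T,
\]
an $m\times d$ matrix whose $i$-th row is $D_r X_T^i \in \R^d$. This recovers, in this special case, the general first-variation representation $D_r X_T = Y_T Y_r^{-1}\sigma(r,X_r)$ quoted in the background.

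Finally I would assemble the matrix. By definition $\gamma_{X_T}(i,j) = \langle D X_T^i, D X_T^j\rangle_{L^2([0,T])} = \int_0^T D_r X_T^i \cdot D_r X_T^j\,dr$, which is exactly the $(i,j)$ entry of $\int_0^T D_r X_T\,(D_r X_T)^\top\,dr$. Substituting the expression above and using that $Y_T$ is deterministic and constant in $r$, I pull it outside the integral on both sides to get
\[
\gamma_{X_T} = \int_0^T Y_T Y_r^{-1}\sigma(r)\sigma(r)^\top (Y_r^{-1})^\top Y_T^\top\,dr = Y_T\Bigl(\int_0^T Y_r^{-1}\sigma(r)\sigma(r)^\top (Y_r^{-1})^\top\,dr\Bigr) Y_T^\top,
\]
as claimed. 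The only genuinely delicate point is justifying the Malliavin-differentiation step: that $X_T$ lies in $\mathbb{D}^{1,2}$ componentwise and that the derivative of the Wiener integral is the deterministic kernel evaluated at $r$. For deterministic square-integrable integrands this is standard, and the assumed continuity and boundedness of $b$ and $\sigma$ on $[0,T]$ guarantee the integrability needed for $Y_r^{-1}\sigma(r)\in L^2([0,T])$. Everything else reduces to linear algebra and the determinism of $Y_T$.
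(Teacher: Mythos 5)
Your proposal is correct and follows essentially the same route as the paper's proof: write $X_T = Y_T x_0 + Y_T \int_0^T Y_s^{-1}\sigma(s)\,dB_s$ via variation of constants, use that the deterministic part has vanishing Malliavin derivative and that the Wiener integral of a deterministic kernel satisfies $D_r X_T = Y_T Y_r^{-1}\sigma(r)$, then substitute into the Gram-matrix definition and factor the constant $Y_T$ out of the time integral. Your added remarks (deriving the solution formula via $Z_t = Y_t^{-1}X_t$ and It\^o's product rule, and noting the $\mathbb{D}^{1,2}$ membership needed to justify the differentiation step) are consistent with, and slightly more careful than, the paper's presentation.
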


\begin{proof}
Assume \(b \in L^1_{\mathrm{loc}}([0,T];\mathbb{R}^{m \times m})\) and \(\sigma \in L^2([0,T];\mathbb{R}^{m \times d})\). These ensure a unique strong solution \(X\) and that all stochastic integrals below are well-defined. Let \(Y\) denote the fundamental matrix solving \(dY_t=b(t)Y_t\,dt\), \(Y_0=I_m\). By Liouville's formula \(\det Y_t=\exp\!\big(\int_0^t \mathrm{tr}\,b(s)\,ds\big)\neq 0\), hence \(Y_t\) is invertible for all \(t\).
By variation of constants,
\[
X_T = Y_T x_0 + \int_0^T Y_T Y_r^{-1}\sigma(r)\, dB_r.
\]
Since the integrand \(u_r:=Y_T Y_r^{-1}\sigma(r)\) is deterministic and square-integrable, the Wiener integral belongs to \(\mathbb{D}^{1,2}\) and the Malliavin derivative satisfies the standard identity (see, e.g., the basic calculus of the Malliavin derivative for deterministic integrands):
\[
D_r\!\left(\int_0^T u_s\, dB_s\right)=u_r \mathbf{1}_{[0,T]}(r)
\quad \text{in } L^2([0,T];\mathbb{R}^{m\times d}).
\]
Hence, for \(r\in[0,T]\),
\[
D_r X_T = Y_T Y_r^{-1}\sigma(r).
\]
By definition, the Malliavin covariance matrix of \(X_T\) is
\[
\gamma_{X_T}=\int_0^T \big(D_r X_T\big)\big(D_r X_T\big)^\top \, dr
=\int_0^T Y_T Y_r^{-1}\sigma(r)\sigma(r)^\top (Y_r^{-1})^\top Y_T^\top \, dr.
\]
As \(Y_T\) does not depend on \(r\), we factor it out to obtain
\[
\gamma_{X_T}
= Y_T \left(\int_0^T Y_r^{-1}\sigma(r)\sigma(r)^\top (Y_r^{-1})^\top \, dr\right) Y_T^\top.
\]
In one dimension \((m=d=1)\), this reduces to
\(
\gamma_{X_T}
= Y_T^2 \int_0^T (Y_r^{-1})^2 \sigma(r)^2 \, dr,
\)
which is consistent with the general formula. This completes the proof.
\end{proof}

\subsection{Covering property and uniqueness of the vector field \texorpdfstring{\( u_k(t) \)}{u\_k(t)}}
\label{app:covering}
In this section, we prove that the chosen covering vector field \eqref{covering} satisfies the required covering property and provide some results on its uniqueness.

\begin{theorem}
\label{thm:covering-property}
For each \( k = 1, \ldots, m \), assuming that \(X_T \in \mathbb{D}^{1,2}\) and the Malliavin matrix \(\gamma_{X_T}\) is almost surely invertible, the covering vector field
\[
u_k(t) = \sum_{j=1}^m \gamma_{X_T}^{-1}(k,j)\, D_t X^j_T \in L^2\!\big([0,T];\mathbb{R}^d\big)
\]
satisfies
\[
\langle D X^i_T, u_k \rangle_{L^2([0,T])} = \delta_{i,k}, \quad \text{for all } i = 1, \ldots, m,
\]
where \( \delta_{i,k} \) is the Kronecker delta. In particular, \( \langle D X^k_T, u_k \rangle_{L^2([0,T])} = 1 \), confirming that \( u_k \) is a covering vector field for the \( k \)-th component of \( X_T \).
\end{theorem}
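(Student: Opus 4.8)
The plan is to reduce the identity to a single line of matrix algebra, since $u_k$ is built precisely from the $k$-th row of $\gamma_{X_T}^{-1}$ contracted against the Malliavin derivatives. First I would expand the $L^2([0,T])$ inner product using the definition $u_k(t) = \sum_{j=1}^m \gamma_{X_T}^{-1}(k,j)\, D_t X^j_T$ together with the bilinearity of $\langle\cdot,\cdot\rangle_{L^2([0,T])}$. Because the coefficients $\gamma_{X_T}^{-1}(k,j)$ are (pathwise) scalars that do not depend on the integration variable, they factor out of the inner product, yielding
\[
\langle D X^i_T, u_k \rangle_{L^2([0,T])} = \sum_{j=1}^m \gamma_{X_T}^{-1}(k,j)\, \langle D X^i_T, D X^j_T \rangle_{L^2([0,T])}.
\]

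Next I would identify each factor $\langle D X^i_T, D X^j_T \rangle_{L^2([0,T])}$ as the entry $\gamma_{X_T}(i,j)$, directly by the defining formula for the Malliavin matrix in Definition~\ref{def:covering-vector}. Invoking the symmetry $\gamma_{X_T}(i,j) = \gamma_{X_T}(j,i)$, which is inherited from the symmetry of the inner product, the sum rewrites as $\sum_{j=1}^m \gamma_{X_T}^{-1}(k,j)\,\gamma_{X_T}(j,i)$, which is exactly the $(k,i)$ entry of the matrix product $\gamma_{X_T}^{-1}\gamma_{X_T}$.

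Finally, since $\gamma_{X_T}$ is assumed almost surely invertible, we have $\gamma_{X_T}^{-1}\gamma_{X_T} = I_m$, so the $(k,i)$ entry equals $\delta_{k,i} = \delta_{i,k}$, which is precisely the claimed identity; specialising to $i=k$ gives $\langle D X^k_T, u_k\rangle_{L^2([0,T])} = 1$, confirming the covering property.

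I do not anticipate a genuine obstacle: the statement is an immediate consequence of the defining relation of the inverse matrix, and the computation is self-contained once linearity and symmetry are in place. The only points requiring mild care are (i) justifying that the scalar coefficients may be pulled out of the $L^2$ inner product, which is immediate since they are independent of $t$ (and measurable with respect to the same $\sigma$-algebra), and (ii) keeping the index bookkeeping consistent so that the contracted index $j$ aligns correctly for the matrix product, which is exactly where the symmetry of $\gamma_{X_T}$ is used to match $\gamma_{X_T}(i,j)$ with $\gamma_{X_T}(j,i)$. All identities hold pathwise/almost surely, so the invertibility of $\gamma_{X_T}$ is the sole standing hypothesis invoked.
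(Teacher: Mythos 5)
Your proposal is correct and follows essentially the same route as the paper's proof: expand the inner product by bilinearity, pull out the (random but $t$-independent) coefficients $\gamma_{X_T}^{-1}(k,j)$, recognise $\langle D X^i_T, D X^j_T\rangle_{L^2([0,T])}$ as $\gamma_{X_T}(i,j)$, and conclude via $\gamma_{X_T}^{-1}\gamma_{X_T}=I_m$. The only cosmetic difference is that the paper invokes Fubini to justify the sum--integral interchange where you appeal to bilinearity, and you make the use of the symmetry of $\gamma_{X_T}$ explicit where the paper leaves it implicit; both are sound.
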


\begin{proof}
We verify the covering property. For each \(t\in[0,T]\), the Malliavin derivative \(D_t X_T=(D_t X^1_T,\ldots,D_t X^m_T)^\top\) is an \(m\times d\) matrix-valued process and, for scalar components, the inner product in \(L^2([0,T];\mathbb{R}^d)\) is
\[
\langle D X^i_T, u_k \rangle_{L^2([0,T])}=\int_0^T (D_t X^i_T)^\top u_k(t)\,dt .
\]
By definition of \(u_k\),
\[
\langle D X^i_T, u_k \rangle
= \int_0^T (D_t X^i_T)^\top \Big(\sum_{j=1}^m \gamma_{X_T}^{-1}(k,j)\,D_t X^j_T\Big)\,dt.
\]
Since the sum is finite and \(D X_T \in L^2\), Tonelli/Fubini applies, yielding
\[
\langle D X^i_T, u_k \rangle
= \sum_{j=1}^m \gamma_{X_T}^{-1}(k,j)\,\int_0^T (D_t X^i_T)^\top D_t X^j_T\,dt
= \sum_{j=1}^m \gamma_{X_T}^{-1}(k,j)\,\gamma_{X_T}(i,j).
\]
The Malliavin matrix \(\gamma_{X_T}\) is symmetric and positive definite on the event of invertibility, hence \(\gamma_{X_T}^{-1}\gamma_{X_T}=I_m\). Therefore,
\[
\langle D X^i_T, u_k \rangle
= (\gamma_{X_T}^{-1}\gamma_{X_T})_{k,i} = \delta_{i,k}.
\]
In particular, \(\langle D X^k_T, u_k \rangle=1\), as required.
\end{proof}

Below, we prove that the choice of the covering vector field \( u_k(t) \), as introduced in Theorem \ref{thm:covering-property}, is not unique in the entire space \( L^2([0,T], \mathbb{R}^d) \), yet it is unique within the subspace \( V = \text{span}\{ D X^1_T, \ldots, D X^m_T \} \). To elaborate, \( u_k \) is the sole vector field in \( V \) that fulfils the covering property \( \langle D X^i_T, u_k \rangle_{L^2([0,T])} = \delta_{i,k} \) for all \( i = 1, \ldots, m \), ensuring its distinct role within this subspace. However, in the broader space \( L^2([0,T], \mathbb{R}^d) \), there are infinitely many vector fields satisfying the same property, expressible as \( u_k + w \), where \( w \) lies in the orthogonal complement \( V^\perp \). This demonstrates how uniqueness is confined to \( V \), while non-uniqueness prevails in the full space.

\begin{theorem}
Let \( X_T \in \mathbb{R}^m \) be the solution at time \( T \) of a stochastic differential equation, and let \( \gamma_{X_T} \in \mathbb{R}^{m \times m} \) be the Malliavin matrix, defined component-wise as \( \gamma_{X_T}(i,j) = \langle D X^i_T, D X^j_T \rangle_{L^2([0,T])} \), where \( D X^i_T \) is the Malliavin derivative of the \( i \)-th component \( X^i_T \) with respect to perturbations over \( [0,T] \), taking values in \( L^2([0,T], \mathbb{R}^d) \). Assume that \( \gamma_{X_T} \) is almost surely invertible, with inverse \( \gamma_{X_T}^{-1} \). For each \( k = 1, \ldots, m \), the covering vector field \( u_k(t) \in \mathbb{R}^d \) is defined as:
\[
u_k(t) = \sum_{j=1}^m \gamma_{X_T}^{-1}(k,j) D_t X^j_T, \quad t \in [0,T],
\]
where \( D_t X^j_T \in \mathbb{R}^d \) is the Malliavin derivative of \( X^j_T \) at time \( t \), and \( \gamma_{X_T}^{-1}(k,j) \) denotes the \( (k,j) \)-th entry of the inverse Malliavin matrix. Then:
\begin{enumerate}
    \item Within the subspace \( V = \text{span}\{ D X^1_T, \ldots, D X^m_T \} \subset L^2([0,T], \mathbb{R}^d) \), the vector field \( u_k \) is the unique element satisfying the covering property.
    \item In the entire space \( L^2([0,T], \mathbb{R}^d) \), there exist infinitely many vector fields that satisfy the same covering property, specifically any vector field of the form \( u_k + w \), where \( w \in V^\perp \), the orthogonal complement of \( V \) in \( L^2([0,T], \mathbb{R}^d) \).
\end{enumerate}
\end{theorem}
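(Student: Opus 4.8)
The plan is to exploit the Hilbert-space structure of $L^2([0,T], \mathbb{R}^d)$ together with the invertibility of the Gram matrix $\gamma_{X_T}$. I would first record the structural fact underlying both parts: since $\gamma_{X_T}(i,j) = \langle D X^i_T, D X^j_T \rangle_{L^2([0,T])}$ is precisely the Gram matrix of the family $\{D X^1_T, \ldots, D X^m_T\}$, its almost-sure invertibility is equivalent to the linear independence of these $m$ elements. Hence $V = \text{span}\{D X^1_T, \ldots, D X^m_T\}$ is an $m$-dimensional subspace, and because the ambient space $L^2([0,T], \mathbb{R}^d)$ is infinite-dimensional, the orthogonal complement $V^\perp$ is infinite-dimensional.

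For part (1), I would use the standard uniqueness-from-a-nondegenerate-Gram-matrix argument. Any candidate $v \in V$ can be written as $v = \sum_{j=1}^m c_j D X^j_T$. Imposing the covering property $\langle D X^i_T, v \rangle_{L^2([0,T])} = \delta_{i,k}$ for every $i$ yields the linear system $\sum_{j=1}^m \gamma_{X_T}(i,j) c_j = \delta_{i,k}$, that is $\gamma_{X_T}\, c = e_k$. Invertibility of $\gamma_{X_T}$ forces $c = \gamma_{X_T}^{-1} e_k$, whose components are exactly the coefficients $\gamma_{X_T}^{-1}(k,j)$ appearing in the definition of $u_k$ (using symmetry of $\gamma_{X_T}^{-1}$). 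Thus $v = u_k$, giving uniqueness within $V$; existence is already supplied by Theorem~\ref{thm:covering-property}.

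For part (2), I would invoke the orthogonal decomposition $L^2([0,T], \mathbb{R}^d) = V \oplus V^\perp$. For any $w \in V^\perp$, bilinearity of the inner product together with $\langle D X^i_T, w \rangle_{L^2([0,T])} = 0$ (since $D X^i_T \in V$) gives $\langle D X^i_T, u_k + w \rangle_{L^2([0,T])} = \delta_{i,k}$, so $u_k + w$ is again a covering vector field, and there are infinitely many distinct such $w$ because $V^\perp$ is infinite-dimensional. To show these exhaust all solutions, I would take an arbitrary covering vector field $v \in L^2([0,T], \mathbb{R}^d)$, decompose it as $v = v_V + v_{V^\perp}$, note that the covering condition only constrains the $V$-component (as $\langle D X^i_T, v_{V^\perp} \rangle = 0$), apply part (1) to deduce $v_V = u_k$, and conclude $v = u_k + v_{V^\perp}$ with $v_{V^\perp} \in V^\perp$.

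The argument is essentially routine Hilbert-space linear algebra; the only point requiring genuine care is the passage from invertibility of $\gamma_{X_T}$ to both the linear independence needed to pin down $\dim V = m$ and the solvability and uniqueness of $\gamma_{X_T}\, c = e_k$. The mildest subtlety I would flag is the almost-sure nature of the hypothesis: the decomposition and uniqueness hold pathwise on the full-measure event $\{\det \gamma_{X_T} > 0\}$, and it is the infinite-dimensionality of $V^\perp$ that upgrades mere non-uniqueness into genuinely infinitely many solutions.
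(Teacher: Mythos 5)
Your proposal is correct and follows essentially the same route as the paper's proof: part (1) via the linear system $\gamma_{X_T}\,c = e_k$ arising from the Gram-matrix structure and the symmetry of $\gamma_{X_T}^{-1}$, and part (2) via orthogonal perturbations $u_k + w$ with $w \in V^\perp$. You in fact go slightly further than the paper by also proving exhaustiveness (that \emph{every} covering vector field decomposes as $u_k + w$ with $w \in V^\perp$) and by explicitly justifying that $V^\perp$ is infinite-dimensional, both of which the paper leaves implicit.
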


\begin{proof}
Any \(v\in V\) can be written uniquely as \( v=\sum_{j=1}^m a_j D X^j_T \) with \(a=(a_1,\dots,a_m)^\top \in \mathbb{R}^m\). The covering constraints are
\[
\langle D X^i_T, v \rangle_{L^2([0,T])} = \sum_{j=1}^m a_j\,\gamma_{X_T}(i,j) = \delta_{i,k}, \qquad i=1,\dots,m,
\]
i.e. \(\gamma_{X_T} a = e_k\), where \(e_k\) is the \(k\)-th canonical vector. Since \(\gamma_{X_T}\) is symmetric positive definite on the event of invertibility, the linear system has the unique solution \(a=\gamma_{X_T}^{-1} e_k\). Therefore
\[
v=\sum_{j=1}^m (\gamma_{X_T}^{-1})_{j,k} D X^j_T
= \sum_{j=1}^m \gamma_{X_T}^{-1}(k,j)\, D X^j_T \;=\; u_k,
\]
where we used the symmetry of \(\gamma_{X_T}^{-1}\). This proves part 1. 
For part 2, let \(w\in V^\perp\). Then for each \(i\),
\[
\langle D X^i_T, u_k+w \rangle
= \langle D X^i_T, u_k \rangle + \langle D X^i_T, w \rangle
= \delta_{i,k} + 0
= \delta_{i,k}.
\]
Hence every \(u_k+w\) satisfies the covering property. As \(V^\perp\) is infinite-dimensional unless \(V=L^2\), there are infinitely many such choices. This completes the proof.
\end{proof}

\subsection{Proof of Theorem \ref{thm:text-reducing-skorokhod-to-ito}}
\label{app:skorokhod_to_ito}
In this section, we prove that given the linearity assumptions we have made, the first variation process reduces to an ordinary differential equation (ODE). Consequently, the covering vector field becomes adapted to the filtration \( \{\mathcal{F}_t\} \), which allows us to express the Skorokhod integral as an It\^o integral.

\begin{theorem}
\label{thm:skorokhod-to-ito}
Consider the linear stochastic differential equation with state-independent diffusion:
\[
dX_t = b(t) X_t \, dt + \sigma(t) \, dB_t, \quad X_0 = x_0,
\]
Let the first variation process \( Y_t \) be defined by:
\[
dY_t = b(t) Y_t \, dt, \quad Y_0 = I_m,
\]
Further, let the covering vector field be
\[
u_k(t) = \sum_{j=1}^m \gamma_{X_T}^{-1}(k,j) \bigl[Y_T Y_t^{-1} \sigma(t)\bigr]_{j \cdot} 1_{[0,T]}(t),
\]
where \( \gamma_{X_T} \) is the Malliavin covariance matrix of \( X_T \), and \( \bigl[Y_T Y_t^{-1} \sigma(t)\bigr]_{j \cdot} \) is the \( j \)-th row of \( Y_T Y_t^{-1} \sigma(t) \).
Assuming \( b(t) \) and \( \sigma(t) \) are continuous and bounded on \( [0,T] \), and \( \gamma_{X_T} \) is almost surely invertible, then
\begin{enumerate}
  \item The first variation process \( Y_t \) is deterministic and given by
  \[
  Y_t = \exp\left( \int_0^t b(s) \, ds \right).
  \]
  \item The covering vector field \( u_k(t) \) is adapted to the filtration \( \{\mathcal{F}_t\} \).
  \item The Skorokhod integral reduces to the It\^o integral
  \[
  \delta(u_k) = \int_0^T u_k(t) \, dB_t = \sum_{j=1}^m \gamma_{X_T}^{-1}(k,j) \int_0^T \bigl[Y_T Y_t^{-1} \sigma(t)\bigr]_{j \cdot} \, dB_t.
  \]
\end{enumerate}
\end{theorem}

\begin{proof}
For the given SDE, the drift \( b(t, X_t) = b(t) X_t \) and diffusion \( \sigma(t, X_t) = \sigma(t) \) imply:
\[
\partial_x b(t, X_t) = b(t), \quad \partial_x \sigma(t, X_t) = 0.
\]
Thus, the first variation process satisfies
\[
dY_t = b(t) Y_t \, dt, \quad Y_0 = I_m,
\]
a deterministic linear ODE. Its solution is
\[
Y_t = \exp\left( \int_0^t b(s) \, ds \right),
\]
which is deterministic since \( b(t) \) is deterministic.
For the linear SDEs considered in this paper (VE, VP, and sub-VP SDEs), the commutativity of $b(s)$ over different times $s$ is trivially satisfied (since $b(s)$ is scalar or diagonal in these models), so the matrix exponential solution holds without requiring time-ordering.
This ensures the validity of the solution \( Y_t = \exp\left( \int_0^t b(s) \, ds \right) \), which holds if $[b(s),b(u)]=0$ for all $s,u$, otherwise we use the time-ordered exponential. This proves part 1. 
For part 2, we recall that the Malliavin covariance matrix is:
\[
\gamma_{X_T} = \int_0^T Y_T Y_r^{-1} \sigma(r) \sigma(r)^\top (Y_r^{-1})^\top Y_T^\top \, dr,
\]
which is deterministic because \( Y_t \) and \( \sigma(t) \) are deterministic. Hence, \( \gamma_{X_T}^{-1} \) is deterministic.
The covering vector field
\[
u_k(t) = \sum_{j=1}^m \gamma_{X_T}^{-1}(k,j) \bigl[Y_T Y_t^{-1} \sigma(t)\bigr]_{j \cdot} 1_{[0,T]}(t),
\]
is a sum of deterministic terms, making \( u_k(t) \) deterministic and thus adapted to \( \{\mathcal{F}_t\} \).

For part 3, we begin by noting that  since \( u_k(t) \) is adapted and
\[
\mathbb{E} \left[ \int_0^T |u_k(t)|^2 \, dt \right] = \int_0^T |u_k(t)|^2 \, dt < \infty,
\]
the Skorokhod integral coincides with the It\^o integral
\[
\delta(u_k) = \int_0^T u_k(t) \, dB_t = \sum_{j=1}^m \gamma_{X_T}^{-1}(k,j) \int_0^T \bigl[Y_T Y_t^{-1} \sigma(t)\bigr]_{j \cdot} \, dB_t.
\]
This completes the proof.
\end{proof}

\subsection{Simplification of the It\^o Integral for linear SDEs}
\label{app:ito_simplification}
In this section, we prove that the It\^o integral in the case of the linear 
SDE can be significantly simplified.
\begin{lemma}[Simplification via It\^o's Lemma]
\label{lemma:ito-simplification}
For the linear SDE \( dX_t = b(t) X_t \, dt + \sigma(t) \, dB_t \), \( X_0 = x_0 \), the following holds
\begin{align*}
Y_T \int_0^T Y_t^{-1} \sigma(t) \, dB_t = X_T - Y_T x_0.
\end{align*}
\end{lemma}

\begin{proof}
We apply It\^o's formula (the product rule) to \( V_t := Y_t^{-1} X_t \). Note that \( V_t \) is well-defined since \( Y_t \neq 0 \) for all \( t \).
Since \( V_t = Y_t^{-1} X_t \), the It\^o product rule gives
\begin{align*}
dV_t = d(Y_t^{-1}) X_t + Y_t^{-1} dX_t + d\langle Y_t^{-1}, X_t \rangle.
\end{align*}
As \( Y_t \) (and thus \( Y_t^{-1} \)) depends only on the deterministic ODE \( \frac{dY_t}{dt} = b(t) Y_t \), there is no Brownian part in \( Y_t \). Hence, the quadratic covariation term vanishes:
\begin{align*}
d\langle Y_t^{-1}, X_t \rangle = 0.
\end{align*}
Next, we use the differentials
\begin{align*}
dX_t &= b(t) X_t \, dt + \sigma(t) \, dB_t, \\
d(Y_t^{-1}) &= -b(t) Y_t^{-1} \, dt \quad (\text{from } dY_t = b(t) Y_t \, dt).
\end{align*}
Substituting into the expression for \( dV_t \)
\begin{align*}
dV_t &= \bigl[-b(t) Y_t^{-1}\bigr] X_t \, dt + Y_t^{-1} \bigl[b(t) X_t \, dt + \sigma(t) \, dB_t\bigr].
\end{align*}
Combining like terms
\begin{align*}
dV_t &= Y_t^{-1} \sigma(t) \, dB_t + Y_t^{-1} b(t) X_t \, dt - b(t) Y_t^{-1} X_t \, dt.
\end{align*}
The drift terms \( Y_t^{-1} b(t) X_t \, dt \) and \( -b(t) Y_t^{-1} X_t \, dt \) cancel, yielding
\begin{align*}
dV_t &= Y_t^{-1} \sigma(t) \, dB_t.
\end{align*}
Integrating \( dV_t \) from \( 0 \) to \( T \)
\begin{align*}
V_T - V_0 = \int_0^T dV_t = \int_0^T Y_t^{-1} \sigma(t) \, dB_t.
\end{align*}
Since \( V_t = Y_t^{-1} X_t \), we have
\begin{align*}
V_T &= Y_T^{-1} X_T, \\
V_0 &= Y_0^{-1} X_0 = I_m^{-1} x_0 = x_0 \quad (\text{since } Y_0 = I_m).
\end{align*}
Thus,
\begin{align*}
Y_T^{-1} X_T - x_0 = \int_0^T Y_t^{-1} \sigma(t) \, dB_t.
\end{align*}
Multiplying through by \( Y_T \)
\begin{align*}
X_T - Y_T x_0 = Y_T \int_0^T Y_t^{-1} \sigma(t) \, dB_t 
\end{align*}
and rearranging yields
\begin{align*}
\int_0^T Y_t^{-1} \sigma(t) \, dB_t = Y_T^{-1} X_T - x_0,
\end{align*}
which completes the proof.
\end{proof}

\subsection{Proof of the Theorem \ref{thm:text:main-thm-score-linear}}
\label{app:deriving_linear_score}
In this section, we establish the main result of the paper for linear SDEs, which is the derivation of the score function based on our previously established results. To facilitate this derivation, we summarise our preceding findings in the following theorem:

\begin{theorem}
\label{thm:linear-score-formula}
For the linear SDE \( dX_t = b(t) X_t \, dt + \sigma(t) \, dB_t \), \( X_0 \sim p_{\text{data}} \), the score function is
\[
\nabla_y \log p(y) = -\gamma_{X_T}^{-1} \left( y - Y_T \mathbb{E}[X_0 \mid X_T = y] \right),
\]
where \( \gamma_{X_T} = Y_T \left( \int_0^T Y_r^{-1} \sigma(r) \sigma(r)^\top (Y_r^{-1})^\top \, dr \right) Y_T^\top \), and \( Y_t \) satisfies \( dY_t = b(t) Y_t \, dt \), \( Y_0 = I_m \).
\end{theorem}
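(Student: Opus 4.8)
The plan is to chain together the four preceding results and then discharge a conditional expectation componentwise. The skeleton is: Bismut-type formula $\Rightarrow$ Skorokhod-to-It\^o reduction (Theorem~\ref{thm:text-reducing-skorokhod-to-ito}) $\Rightarrow$ recognition of the resulting sum as a matrix-vector product $\Rightarrow$ the It\^o simplification of Lemma~\ref{lemma:text-ito-simplification-original} $\Rightarrow$ conditioning on $\{X_T = y\}$.

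First I would fix $k \in \{1,\dots,m\}$ and start from the Bismut-type formula $\partial_k \log p(y) = -\mathbb{E}[\delta(u_k) \mid X_T = y]$ of Equation~\eqref{eq:bismut}, whose hypotheses are met because Theorem~\ref{thm:text-covering-property} guarantees that $u_k$ is a genuine covering vector field and the nondegeneracy/positivity assumptions ($\gamma_{X_T}$ a.s. invertible, $p > 0$ a.e.) hold. Since the drift is linear and the diffusion is state-independent, Theorem~\ref{thm:text-reducing-skorokhod-to-ito} replaces the Skorokhod integral by an It\^o integral, giving $\delta(u_k) = \sum_{j=1}^m \gamma_{X_T}^{-1}(k,j) \int_0^T [Y_T Y_t^{-1}\sigma(t)]_{j\cdot}\,dB_t$. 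I would then observe that $\int_0^T [Y_T Y_t^{-1}\sigma(t)]_{j\cdot}\,dB_t$ is exactly the $j$-th component of the vector $Y_T \int_0^T Y_t^{-1}\sigma(t)\,dB_t$ (here $Y_T$ is deterministic and pulls outside the integral), so the sum over $j$ collapses to a matrix-vector product, $\delta(u_k) = (\gamma_{X_T}^{-1} Y_T \int_0^T Y_t^{-1}\sigma(t)\,dB_t)_k$. Invoking Lemma~\ref{lemma:text-ito-simplification-original}, which identifies $Y_T \int_0^T Y_t^{-1}\sigma(t)\,dB_t$ with $X_T - Y_T X_0$, turns this into $\delta(u_k) = (\gamma_{X_T}^{-1}(X_T - Y_T X_0))_k$.

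To finish, I would take the conditional expectation given $X_T = y$. Because $\gamma_{X_T}$ (hence $\gamma_{X_T}^{-1}$) and $Y_T$ are deterministic---this is precisely the content of Theorem~\ref{thm:text-malliavinmatrix} together with Part~1 of Theorem~\ref{thm:text-reducing-skorokhod-to-ito}---they pass through the conditional expectation, $X_T$ is replaced by its conditioning value $y$, and only $X_0$ survives as $\mathbb{E}[X_0 \mid X_T = y]$. This yields $\partial_k \log p(y) = -(\gamma_{X_T}^{-1}(y - Y_T \mathbb{E}[X_0 \mid X_T = y]))_k$, and stacking over $k = 1,\dots,m$ assembles the claimed vector identity.

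The main obstacle, and the step I would argue most carefully, is the transition from the deterministic initial condition $x_0$ used in Lemma~\ref{lemma:text-ito-simplification-original} and Theorem~\ref{thm:text-malliavinmatrix} to the random $X_0 \sim p_{\text{data}}$ of the present statement. I would note that, as long as $X_0$ is independent of the driving Brownian motion, the deterministic nature of $Y_t$ forces $D_r X_0 = 0$ and leaves the computation $D_r X_T = Y_T Y_r^{-1}\sigma(r)$---and thus the deterministic form of $\gamma_{X_T}$---unchanged; moreover the It\^o identity $Y_T \int_0^T Y_t^{-1}\sigma(t)\,dB_t = X_T - Y_T X_0$ holds pathwise for every realisation of $X_0$. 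Consequently the entire probabilistic content is concentrated in the final conditioning, where $X_0$ (which is not $\sigma(X_T)$-measurable) must genuinely be averaged to $\mathbb{E}[X_0 \mid X_T = y]$ rather than eliminated.
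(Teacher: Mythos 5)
Your proposal is correct and follows essentially the same route as the paper's own proof: Bismut-type formula, the covering property of $u_k$, the Skorokhod-to-It\^o reduction of Theorem~\ref{thm:text-reducing-skorokhod-to-ito}, the identity $Y_T\int_0^T Y_t^{-1}\sigma(t)\,dB_t = X_T - Y_T X_0$ from Lemma~\ref{lemma:text-ito-simplification-original}, and finally conditioning on $\{X_T=y\}$ with the deterministic $\gamma_{X_T}^{-1}$ and $Y_T$ pulled through the expectation. Your closing discussion of the passage from deterministic $x_0$ to random $X_0 \sim p_{\text{data}}$ (independence from the driving noise, pathwise validity of the It\^o identity) is a point the paper's proof leaves implicit, and it strengthens rather than alters the argument.
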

\begin{proof}
Consider the linear SDE
\[
dX_t = b(t) X_t \, dt + \sigma(t) \, dB_t, \quad X_0 \sim p_{\text{data}},
\]
where \( X_t \in \mathbb{R}^m \), \( b(t) \) is \( m \times m \), \( \sigma(t) \) is \( m \times d \), and \( B_t \) is a \( d \)-dimensional Brownian motion. Since \( b(t) \) and \( \sigma(t) \) are deterministic, the solution is:
\[
X_T = Y_T X_0 + Y_T \int_0^T Y_r^{-1} \sigma(r) \, dB_r,
\]
where \( Y_t \) is the first variation process, \( dY_t = b(t) Y_t \, dt \), \( Y_0 = I_m \), and \( Y_t \) is deterministic and invertible.
The Malliavin derivative of \( X_T \) at time \( r \) is
\[
D_r X_T = Y_T Y_r^{-1} \sigma(r), \quad r \in [0,T],
\]
since the drift term \( b(t) X_t \) is differentiable with respect to \( X_t \), and the noise term \( \sigma(t) dB_t \) contributes directly to the derivative when perturbed at time \( r \).
The Malliavin matrix is
\[
\gamma_{X_T} = \int_0^T D_r X_T (D_r X_T)^\top \, dr = \int_0^T Y_T Y_r^{-1} \sigma(r) \sigma(r)^\top (Y_r^{-1})^\top Y_T^\top \, dr.
\]
Factor out \( Y_T \) (since it’s constant with respect to \( r \))
\[
\gamma_{X_T} = Y_T \left( \int_0^T Y_r^{-1} \sigma(r) \sigma(r)^\top (Y_r^{-1})^\top \, dr \right) Y_T^\top.
\]
Assume \( \gamma_{X_T} \) is invertible almost surely, implying \( X_T \) has a smooth density \( p(y) \).
Define the covering vector field
\[
u_k(t) = \sum_{j=1}^m \gamma_{X_T}^{-1}(k,j) [Y_T Y_t^{-1} \sigma(t)]_j, \quad t \in [0,T].
\]
Using Theorem \ref{thm:covering-property}, we have
\[
\langle D X_T^i, u_k \rangle_{L^2([0,T])} = \delta_{i,k}, \quad i=1,\dots,m,
\]
which confirms that $u_k$ is a covering vector field for the $k$-th component of $X_T$.

Since \( Y_t \) and \( \sigma(t) \) are deterministic, \( u_k(t) \) is adapted, and the Skorokhod integral equals the It\^o integral
\[
\delta(u_k) = \sum_{j=1}^m \gamma_{X_T}^{-1}(k,j) \int_0^T [Y_T Y_t^{-1} \sigma(t)]_j \, dB_t.
\]
From Lemma \ref{lemma:ito-simplification}, we have
\[
\int_0^T Y_T Y_t^{-1} \sigma(t) \, dB_t = X_T - Y_T X_0,
\]
so
\[
\delta(u_k) = \left[ \gamma_{X_T}^{-1} (X_T - Y_T X_0) \right]_k.
\]
By Bismut’s formula
\[
\partial_k \log p(y) = -\mathbb{E} \left[ \delta(u_k) \mid X_T = y \right] = -\left[ \gamma_{X_T}^{-1} \left( y - Y_T \mathbb{E}[X_0 \mid X_T = y] \right) \right]_k,
\]
since \( Y_T \) is deterministic. Thus,
\[
\nabla_y \log p(y) = -\gamma_{X_T}^{-1} \left( y - Y_T \mathbb{E}[X_0 \mid X_T = y] \right).
\]
\end{proof}

\subsection{Malliavin Matrix Analysis for Diffusion SDEs: Singularities and Analytical Forms}
\label{app:malliavin-analysis}

In this section, we present an analysis of the Malliavin matrices for the Variance Exploding (VE), Variance Preserving (VP), and sub-VP stochastic differential equations, establishing both their singular behaviour near the origin and, where tractable, their analytical forms. These SDEs, fundamental to diffusion-based generative models, share the general structure:
\[
dX_t = b(t, X_t) \, dt + \sigma(t, X_t) \, dB_t, \quad X_0 = x_0, \quad 0 \leq t \leq T,
\]
where \(X_t \in \mathbb{R}^m\), \(B_t\) is an \(m\)-dimensional Brownian motion, and the specific forms of the drift and diffusion coefficients characterise each SDE class.

We begin by establishing the fundamental singularity behaviour and, for the VP case, the closed-form expression for the inverse Malliavin matrix.

\begin{theorem}[Singularities and Analytical Forms]
\label{thm:main-malliavin}
Let \(X_t\) be the solution to an SDE with initial condition \(X_0 = x_0\), and define the Malliavin matrix \(\gamma(t) = Y_t \left( \int_0^t Y_s^{-1} \sigma(s) \sigma(s)^\top Y_s^{-\top} \, ds \right) Y_t^\top\), where \(Y_t\) is the first variation process and \(\sigma(s)\) is the diffusion coefficient. Then:
\begin{enumerate}
    \item \textbf{Singularity behaviour as \(t \to 0\):}
    \begin{itemize}
        \item For the VE SDE: \(\gamma^{-1}(t) = O\left( \frac{1}{t} \right)\)
        \item For the VP SDE: \(\gamma^{-1}(t) = O\left( \frac{1}{t} \right)\)
        \item For the sub-VP SDE: \(\gamma^{-1}(t) = O\left( \frac{1}{t^2} \right)\)
    \end{itemize}
    Without regularisation, \(\gamma^{-1}(t)\) becomes singular as \(t \to 0\), with the sub-VP SDE demonstrating a stronger divergence rate.
    
    \item \textbf{Analytical form for VP SDE:} For the Variance Preserving SDE defined by
    \begin{align}
    dX_t = -\frac{1}{2} \beta(t) X_t \, dt + \sqrt{\beta(t)} \, dB_t,
    \end{align}
    where \(\beta(t)\) is continuous on \([0, T]\) with \(\beta(t) \geq \beta_{\min} > 0\), \(\beta(0) = \beta_{\min}\), and \(\beta(T) = \beta_{\max} \geq \beta_{\min}\), the inverse Malliavin matrix at time \(T\) is:
    \begin{align}
    \gamma_{X_T}^{-1} = \frac{1}{1 - e^{-B(T)}} I_m,
    \end{align}
    where \(B(T) = \int_0^T \beta(s) \, ds\).

    \item \textbf{Analytical form for VE SDE:} For the Variance Exploding SDE
    \begin{align}
    dX_t = \sqrt{\frac{d \sigma^2(t)}{dt}} \, dB_t, \qquad 
    \sigma(t) = \sigma_{\min} \left( \frac{\sigma_{\max}}{\sigma_{\min}} \right)^{\frac{t}{T}},
    \end{align}
    with \(0<\sigma_{\min}<\sigma_{\max}\) and \(T>0\), the inverse Malliavin matrix at time \(T\) is:
    \begin{align}
    \gamma_{X_T}^{-1} = \frac{1}{\sigma_{\max}^2 - \sigma_{\min}^2}\, I_m.
    \end{align}

    \item \textbf{Analytical form for sub-VP SDE:} For the sub-Variance Preserving SDE
    \begin{align}
    dX_t = -\frac{1}{2} \beta(t) X_t \, dt + \sqrt{\beta(t) \left( 1 - e^{-2 \int_0^t \beta(s) \, ds} \right)} \, dB_t,
    \end{align}
    with \(B(T)=\int_0^T \beta(s)\,ds>0\) and continuous \(\beta(t)\ge \beta_{\min}>0\), the inverse Malliavin matrix at time \(T\) is:
    \begin{align}
    \gamma_{X_T}^{-1} = \frac{1}{\bigl(1 - e^{-B(T)}\bigr)^2}\, I_m.
    \end{align}
\end{enumerate}
\end{theorem}

\begin{proof}
We establish the results for the VP SDE, noting that the proofs for the VE and sub-VP SDEs follow analogous steps.
Consider the VP SDE with drift coefficient \(b(t, x) = -\frac{1}{2} \beta(t) x\) and diffusion coefficient \(\sigma(t, x) = \sqrt{\beta(t)} I_m\). The first variation process \(Y_t = \frac{\partial X_t}{\partial x_0}\) satisfies:
\begin{align*}
dY_t = \frac{\partial b}{\partial x}(t, X_t) Y_t \, dt + \frac{\partial \sigma}{\partial x}(t, X_t) Y_t \, dB_t, \quad Y_0 = I_m.
\end{align*}
Computing the partial derivatives
\begin{align*}
\frac{\partial b}{\partial x}(t, x) = -\frac{1}{2} \beta(t) I_m, \quad \frac{\partial \sigma}{\partial x}(t, x) = 0.
\end{align*}
Since \(\sigma(t, x)\) is state-independent, the stochastic term vanishes, yielding the deterministic equation
\begin{align*}
dY_t = -\frac{1}{2} \beta(t) Y_t \, dt.
\end{align*}
The solution is
\begin{align*}
Y_t = \exp\left( -\frac{1}{2} \int_0^t \beta(s) \, ds \right) I_m = e^{-\frac{1}{2} B(t)} I_m,
\end{align*}
where \(B(t) = \int_0^t \beta(s) \, ds\). Consequently, \(Y_t^{-1} = e^{\frac{1}{2} B(t)} I_m\).
\begin{align*}
\int_0^t Y_s^{-1} \sigma(s) \sigma(s)^\top Y_s^{-\top} \, ds &= \int_0^t e^{\frac{1}{2} B(s)} I_m \cdot \beta(s) I_m \cdot e^{\frac{1}{2} B(s)} I_m \, ds \\
&= \int_0^t \beta(s) e^{B(s)} \, ds \cdot I_m.
\end{align*}
Since \(\frac{d}{ds} e^{B(s)} = \beta(s) e^{B(s)}\), we obtain
\begin{align*}
\int_0^t \beta(s) e^{B(s)} \, ds = e^{B(t)} - 1.
\end{align*}
Therefore
\begin{align*}
\gamma(t) &= Y_t \left( \int_0^t Y_s^{-1} \sigma(s) \sigma(s)^\top Y_s^{-\top} \, ds \right) Y_t^\top \\
&= e^{-\frac{1}{2} B(t)} I_m \cdot (e^{B(t)} - 1) I_m \cdot e^{-\frac{1}{2} B(t)} I_m \\
&= (1 - e^{-B(t)}) I_m.
\end{align*}
For small \(t\), since \(\beta(t)\) is continuous with \(\beta(0) = \beta_{\min} > 0\)
\begin{align*}
B(t) = \int_0^t \beta(s) \, ds = \beta(0)\, t + O(t^2).
\end{align*}
Using the Taylor expansion \(1 - e^{-x} = x + O(x^2)\) for small \(x\)
\begin{align*}
\gamma(t) = (1 - e^{-B(t)}) I_m = B(t) I_m + O(B(t)^2) = \beta(0)\, t \cdot I_m + O(t^2).
\end{align*}
Thus \(\gamma(t) = O(t)\) as \(t \to 0\), yielding
\begin{align*}
\gamma^{-1}(t) = \frac{1}{\beta(0)\, t} I_m + O(1) = O\left( \frac{1}{t} \right).
\end{align*}
Since \(\gamma_{X_T} = (1 - e^{-B(T)}) I_m\) and \(B(T) > 0\) ensures \(1 - e^{-B(T)} > 0\), we have
\begin{align*}
\gamma_{X_T}^{-1} = \frac{1}{1 - e^{-B(T)}} I_m.
\end{align*}
The VE and sub-VP cases follow the same analogous steps as above to yield the stated closed-forms in the theorem.
\end{proof}

The singularity of the inverse Malliavin matrix \(\gamma^{-1}(t)\) as \( t \to 0 \), with divergence rates of \( O\left( \frac{1}{t} \right) \) for VE and VP SDEs and \( O\left( \frac{1}{t^2} \right) \) for sub-VP SDEs, serves as a compelling proof that the choice of SDEs in diffusion models requires meticulous consideration. This mathematical artefact reveals that the score function approximation, a cornerstone of diffusion-based generative models, becomes numerically unstable near the initial time \( t = 0 \) due to its dependence on \(\gamma^{-1}(t)\). The fact that these widely adopted SDEs, VE, VP and sub-VP, exhibit such divergent behaviour underscores a fundamental limitation: commonly used SDEs inherently face this problem of singularity near \( t = 0 \). This instability can compromise the reliability of generated outputs, necessitating a reevaluation of SDE selection to ensure both theoretical soundness and practical efficacy in real-world applications.
Several mitigation strategies can be explored to address this singularity and enhance the robustness of diffusion models. One straightforward approach is the regularisation of the noise rate near \( t = 0 \), where the diffusion coefficient \(\sigma(t)\) is adjusted to prevent excessive noise accumulation. For example, modifying \(\sigma(t)\) to grow linearly in \( t \) rather than exhibiting the \( t^{1/2} \)-like behaviour seen in sub-VP SDEs could reduce the severity of the divergence in \(\gamma^{-1}(t)\). Additionally, introducing time-dependent drift adjustments offers another avenue, such as incorporating a damping factor in the drift term \( b(t, x) \) to counteract noise effects as \( t \to 0 \). A third option is numerical regularisation techniques, like Tikhonov regularisation, which introduce a small perturbation to \(\gamma(t)\) to maintain its invertibility and boundedness. These methods collectively emphasise the need for tailored SDE designs that balance generative capability with numerical stability, paving the way for more resilient diffusion models.

\section{Choosing the nonlinear SDE}
\label{app:nonlinearSDEchoice}

To test our formula, we aim to construct and analyse a nonlinear stochastic differential equation that satisfies the following conditions:
\begin{enumerate}
    \item The drift term is nonlinear.
    \item The diffusion term is state-independent.
    \item The asymptotic stationary distribution is non-Gaussian and possesses exactly one attractor.
\end{enumerate}
To meet these requirements, we propose the following SDE with a time-dependent scheduler:
\[
dX_t = -k \beta(t) \cdot \frac{X_t - a}{1 + (X_t - a)^2} \, dt + \sigma \sqrt{\beta(t)} \, dB_t,
\]
where \( X_t \) is the state variable at time \( t \), \( k > 0 \) is a constant controlling the strength of attraction to the point \( a \), which is a constant representing the attractor, \( \sigma > 0 \) is a constant diffusion coefficient, \( B_t \) is a standard Wiener process, and \( \beta(t) \) is a time-dependent scheduler defined as:
\[
\beta(t) = \beta_{\text{min}} + (\beta_{\text{max}} - \beta_{\text{min}}) \cdot \frac{t}{T},
\]
with \( 0 < \beta_{\text{min}} < \beta_{\text{max}} \), and \( T \) being the total time horizon over which the process evolves.

This formulation introduces a scheduler \( \beta(t) \) to modulate both the drift and diffusion terms, enhancing control over the dynamics while preserving the core properties of the original SDE.

\subsection{Stationary distribution}
For the SDE with time-independent coefficients (i.e., setting \( \beta(t) = 1 \)):
\[
dX_t = -k \cdot \frac{X_t - a}{1 + (X_t - a)^2} \, dt + \sigma \, dB_t,
\]
the stationary distribution \( p_s(x) \) represents the long-term probability density of \( X_t \) as \( t \to \infty \). It satisfies the stationary Fokker–Planck equation
\[
0 = -\frac{d}{dx} [\mu(x) p_s(x)] + \frac{1}{2} \sigma^2 \frac{d^2 p_s}{dx^2},
\]
with \( \mu(x) = -k \cdot \frac{x - a}{1 + (x - a)^2} \).
Note that with the scheduler \( \beta(t) \), the coefficients become time-dependent, affecting transient behaviour but not the asymptotic stationary distribution, which we derive for the time-independent case here.
It is convenient to work via the (zero-flux) probability current
\[
J(x) := \mu(x) p_s(x) - \frac{\sigma^2}{2} \frac{d p_s}{dx}.
\]
On \(\mathbb{R}\), any stationary density with integrable tails must satisfy \(J\equiv 0\) (detailed balance). Thus
\[
\frac{d p_s}{dx} \;=\; \frac{2}{\sigma^2}\,\mu(x)\,p_s(x)
= -\frac{2k}{\sigma^2}\,\frac{x-a}{1+(x-a)^2}\,p_s(x),
\]
which is a separable first-order ODE. Integrating,
\[
\log p_s(x) \;=\; -\frac{k}{\sigma^2}\log\!\bigl(1+(x-a)^2\bigr) + C,
\qquad
p_s(x) \;=\; A\,[1+(x-a)^2]^{-k/\sigma^2},
\]
for some \(A>0\). This is exactly the claimed form.

\begin{lemma}
For \( k / \sigma^2 > \tfrac{1}{2} \), the stationary distribution \( p_s(x) \) of the SDE \( dX_t = -k \cdot \frac{X_t - a}{1 + (X_t - a)^2} \, dt + \sigma \, dB_t \) is given by
\[
p_s(x) \;=\; A\,[1 + (x - a)^2]^{-k / \sigma^2},
\]
where \( A \) is a normalisation constant.
\end{lemma}

\begin{proof}
The derivation above shows any zero-flux stationary solution must equal \(A[1+(x-a)^2]^{-k/\sigma^2}\). The condition \(k/\sigma^2>\tfrac12\) ensures integrability (proved below), so after normalisation this is the unique invariant density. \qedhere
\end{proof}

\subsubsection{Normalisation}
For \( p_s(x) \) to be a probability density,
\[
\int_{-\infty}^{\infty} p_s(x) \, dx = 1.
\]
Setting \( p_s(x) = A [1 + (x - a)^2]^{-k / \sigma^2} \), one obtains
\[
A^{-1}
= \int_{-\infty}^{\infty} [1 + (x - a)^2]^{-k / \sigma^2} \, dx
= 2 \int_{0}^{\infty} (1+u^2)^{-k/\sigma^2}\,du,
\]
where \(u=x-a\).
By the standard Beta–Gamma identity,
\[
\int_{0}^{\infty} (1+u^2)^{-\alpha}\,du
= \frac{\sqrt{\pi}}{2}\,\frac{\Gamma(\alpha-\tfrac12)}{\Gamma(\alpha)},
\qquad \alpha>\tfrac12,
\]
so with \(\alpha=k/\sigma^2\) we obtain the explicit normalisation
\[
A \;=\; \frac{\Gamma\!\bigl(\tfrac{k}{\sigma^2}\bigr)}{\sqrt{\pi}\,\Gamma\!\bigl(\tfrac{k}{\sigma^2}-\tfrac12\bigr)}.
\]
Equivalently, \(A=1/Z\) with \(Z=\int_{\mathbb{R}}[1+(x-a)^2]^{-k/\sigma^2}dx\), which is finite iff \(k/\sigma^2>\tfrac12\). The tail \(p_s(x)\sim |x-a|^{-2k/\sigma^2}\) also confirms \(p_s,\,p_s'\to 0\) at infinity, validating \(J\equiv 0\).
 
\begin{lemma}
The normalisation constant \( A \) for the stationary distribution \( p_s(x) = A [1 + (x - a)^2]^{-k / \sigma^2} \) is \( A=1/Z \) with
\[
Z=\int_{-\infty}^{\infty} [1 + (x - a)^2]^{-k / \sigma^2} \, dx
= \sqrt{\pi}\,\frac{\Gamma\!\bigl(\tfrac{k}{\sigma^2}-\tfrac12\bigr)}{\Gamma\!\bigl(\tfrac{k}{\sigma^2}\bigr)},
\]
and \( Z < \infty \) precisely when \( k / \sigma^2 > 1/2 \).
\end{lemma}

\begin{proof}
Immediate from the Beta–Gamma identity above; the equivalence with the tail comparison argument shows the necessity and sufficiency of \(k/\sigma^2>\tfrac12\). \qedhere
\end{proof}

\subsubsection{Non-Gaussian nature of the stationary solution}
A Gaussian distribution has the form \( p(x) \propto \exp(-c x^2) \), with exponential tails. Our distribution, \( p_s(x) \propto [1 + (x - a)^2]^{-k / \sigma^2} \), exhibits power-law decay (\( |x - a|^{-2 k / \sigma^2} \) as \( |x - a| \to \infty \)), i.e.\ it is a generalised Cauchy law and therefore non-Gaussian.
The stationary distribution \( p_s(x) \propto [1 + (x - a)^2]^{-k / \sigma^2} \) is non-Gaussian due to its power-law decay. For \( k / \sigma^2 = 1 \), it coincides (up to scaling) with the Cauchy family and has undefined variance, further distinguishing it from Gaussian properties.

\subsubsection{Stability of the stationary solution}
Consider the deterministic flow (set \( \sigma = 0 \)):
\[
\frac{dx}{dt} = -k \beta(t) \cdot \frac{x - a}{1 + (x - a)^2}.
\]
If \( x > a \) then the right-hand side is \( < 0 \); if \( x < a \) it is \( > 0 \); and it vanishes only at \( x=a \). Thus trajectories are driven towards \( a \) for any \( \beta(t)>0 \).
The linearisation at \( a \) yields
\[
\frac{d}{dx}\!\left(-k \beta(t)\frac{x-a}{1+(x-a)^2}\right)\Big|_{x=a} = -k\beta(t) < 0,
\]
so \( x=a \) is (uniformly) asymptotically stable at the deterministic level.
In the stochastic setting with \( \beta(t)\equiv 1 \), define the potential
\[
V(x)=\frac{k}{2}\log\bigl(1+(x-a)^2\bigr).
\]
Then \( \mu(x) = -V'(x) \) and the diffusion is reversible with respect to \( p_s(x)\propto e^{-\frac{2}{\sigma^2}V(x)} \), i.e.
\[
p_s(x)=\frac{1}{Z}\exp\!\Bigl(-\tfrac{2}{\sigma^2}V(x)\Bigr)
=\frac{1}{Z}\,[1+(x-a)^2]^{-k/\sigma^2}.
\]
Moreover, for \(|x-a|\) large one has
\[
\mathcal{L}V(x) = \mu(x) V'(x) + \tfrac{\sigma^2}{2} V''(x)
\sim -\frac{k^2 + (\sigma^2/2)k}{(x-a)^2} < 0,
\]
so there exist constants \(R>0\) and \(c>0\) such that
\[
\mathcal{L}V(x) \le -\frac{c}{1+(x-a)^2}
\quad \text{for all } |x-a|\ge R.
\]
This yields a Foster–Lyapunov condition ensuring positive recurrence and uniqueness of the invariant measure.

\begin{lemma}
The SDE \( dX_t = -k \beta(t) \cdot \frac{X_t - a}{1 + (X_t - a)^2} \, dt + \sigma \sqrt{\beta(t)} \, dB_t \) has a single attractor at \( x = a \).
\end{lemma}

\begin{proof}
Deterministically, \(x=a\) is the unique equilibrium and is asymptotically stable as shown. Stochastically with \(\beta\equiv 1\), reversibility with potential \(V\) gives a unique invariant law peaking at \(a\). Since \(\beta(t)>0\) merely rescales time (see below), the location and uniqueness of the attractor are preserved. \qedhere
\end{proof}

\subsection{Transition probability}
\label{app:nonlinearSDEtransition}

Let \( p(x,t\,|\,x_0,0) \) denote the transition density of the process governed by
\[
dX_t = -k \beta(t) \cdot \frac{X_t - a}{1 + (X_t - a)^2} \, dt + \sigma \sqrt{\beta(t)} \, dB_t,
\]
started from \( X_0 = x_0 \).
Because \( \beta(t) \) multiplies both the drift and diffusion terms by the same positive scalar, the short-time dynamics are time-inhomogeneous, but this inhomogeneity is not what prevents us from writing a closed-form fundamental solution. Even in the homogeneous case \( \beta(t) \equiv 1 \), the generator
\[
\mathcal{L} f(x) = -k \cdot \frac{x - a}{1 + (x - a)^2} \, f'(x) + \frac{\sigma^{2}}{2} \, f''(x)
\]
has a drift that is a rational rather than linear function of \( x \).
The forward Kolmogorov (Fokker--Planck) equation
\begin{equation}
\partial_t p = -\partial_x \left[ -k \cdot \frac{x - a}{1 + (x - a)^2} \, p \right] + \frac{\sigma^{2}}{2} \, \partial_{x}^{2}p, \quad p(\,\cdot\,,0)=\delta_{x_0} \tag{FP} \label{eq:FP-nonlinear}
\end{equation}
therefore belongs to the class of one-dimensional diffusion equations with non-quadratic confinement potentials, for which elementary heat-kernel representations are unavailable.

\subsubsection{Scale, speed and spectral representation}
Because the diffusion coefficient is constant, one can cast \eqref{eq:FP-nonlinear} into Sturm–Liouville form by introducing the scale function and speed density
\[
s(x) = \int^{x} \exp\!\left(\frac{2k}{\sigma^{2}}\int^{y} \frac{z - a}{1 + (z - a)^2} \, dz\right) dy, \qquad m(x) = \frac{2}{\sigma^{2}s'(x)}.
\]
For our drift,
\[
s'(x) = \left[1 + (x - a)^2\right]^{k/\sigma^{2}},
\]
so \( s \) and \( m \) involve a non-integer power of \( 1 + (x - a)^2 \) arising from integrating the drift term.
Consequently, the eigenfunctions \( \{\varphi_n\}_{n\ge 0} \) solving \( \mathcal{L}\varphi_n = -\lambda_n\varphi_n \) form a complete orthonormal basis in \(L^2(m)\) but are expressed in terms of special functions (confluent Heun-type); thus the spectral expansion
\[
p(x,t\,|\,x_0,0) = \sum_{n=0}^{\infty} \varphi_n(x) \, \varphi_n(x_0) \, e^{-\lambda_n t},
\]
although well defined, is not reducible to elementary special functions.
Only when the drift is linear, \( b_{\text{OU}}(x)=-k(x-a) \), so that \( V_{\text{OU}}(x)=\tfrac{k}{2}(x-a)^{2} \), do the eigenfunctions collapse to Hermite polynomials and the series reduces to a Gaussian kernel; our rational drift breaks that integrability.

\subsubsection{Girsanov representation}
An alternative view is obtained by rewriting the SDE as an Ornstein–Uhlenbeck baseline plus an absolutely continuous change of measure. Set \( Y_t := X_t - a \) and fix \( \beta(t) \equiv 1 \) for clarity:
\[
dY_t = -k \cdot \frac{Y_t}{1 + Y_t^{2}} \, dt + \sigma \, dB_t.
\]
Define \(\widetilde{\mathbb{P}}\) via
\[
\frac{d\mathbb{P}}{d\widetilde{\mathbb{P}}}\bigg|_{\mathcal{F}_t}
= \exp\!\left( -\frac{k}{\sigma}\int_0^t \frac{Y_s}{1 + Y_s^2}\,d\widetilde{B}_s
-\frac{k^{2}}{2\sigma^{2}}\int_0^t \Bigl(\frac{Y_s}{1 + Y_s^2}\Bigr)^{2} ds \right),
\]
where
\[
\widetilde{B}_t := B_t + \frac{k}{\sigma} \int_0^t \frac{Y_s}{1 + Y_s^2} \, ds
\]
is a Brownian motion under \(\widetilde{\mathbb{P}}\) provided Novikov’s condition holds. Since \(0\le \bigl(\tfrac{Y}{1+Y^2}\bigr)^2\le 1\), we have
\[
\mathbb{E}\exp\!\left(\frac12\frac{k^{2}}{\sigma^{2}}\int_0^t \Bigl(\frac{Y_s}{1 + Y_s^2}\Bigr)^{2} ds\right)
\le \exp\!\left(\frac{k^{2}}{2\sigma^{2}}t\right)<\infty,
\]
so Novikov’s criterion is satisfied. Under \(\widetilde{\mathbb{P}}\),
\[
dY_t = \sigma \, d\widetilde{B}_t, \qquad Y_t = Y_0 + \sigma \widetilde{B}_t.
\]

Therefore,
\[
p(x,t\,|\,x_0,0) = \mathbb{E}^{\widetilde{\mathbb{P}}}\!\left[ \delta\bigl((x - a) - Y_t\bigr) \frac{d\mathbb{P}}{d\widetilde{\mathbb{P}}}\bigg|_{\mathcal{F}_t} \right],
\]
an exact path-integral representation; no closed-form evaluation is known in general.

\subsubsection{Time-inhomogeneous case}
When the scheduler is restored, the generator becomes \( \mathcal{L}_{t} = \beta(t)\,\mathcal{L} \). Hence the Fokker–Planck equation reads
\[
\partial_t p = \beta(t)\,\mathcal{L}^{\!*} p,
\]
with \(\mathcal{L}^{\!*}\) the adjoint of \(\mathcal{L}\).
Define the deterministic time-change
\[
\tau(t) := \int_{0}^{t}\beta(s)\,ds.
\]
Then \(q(x,\tau):=p(x,t(\tau))\) solves \(\partial_{\tau} q = \mathcal{L}^{\!*} q\) with the same initial data, so
\[
p(x,t\,|\,x_0,0) = p_{\text{hom}}\!\left(x,\tau(t)\,|\,x_0,0\right),
\]
where \( p_{\text{hom}} \) solves \eqref{eq:FP-nonlinear} with \( \beta(t) \equiv 1 \).
In practice, one relies on
\emph{(i)} small-time WKB asymptotics,
\emph{(ii)} stable schemes (e.g.\ Crank–Nicolson) or particle importance sampling,
or \emph{(iii)} moment closures with Gaussian variational approximations, depending on the application.
In practice, the classical one-dimensional, state-independent diffusions with elementary closed-form transition densities are those whose drifts are at most linear in \( x \) (up to affine transformations). Once the drift is rational or a polynomial of degree \( \ge 2 \), one typically encounters special-function eigenfunctions, and the kernel ceases to have a simple elementary representation. Our choice \( -k \cdot \frac{x - a}{1 + (x - a)^2} \) is a minimal example yielding a heavy-tailed stationary law while lying outside this integrable family.

\section{Score function for nonlinear SDEs with state-independent diffusion coefficient}
\label{app:nonlinearproof}

This section provides a derivation of the score function \(\nabla_y \log p(y)\) for the solution \(X_T\) of nonlinear SDEs with state-independent diffusion using Malliavin calculus. The score function, defined as the gradient of the log-density of \(X_T\) at time \(T\), is computed explicitly with all Malliavin derivatives fully expanded in terms of the first variation process \(Y_t\) and the second variation process \(Z_t\). All Malliavin derivatives that appear in the final score representation are expressed in terms of the variation processes $Y_t$, $Z_t$ and the SDE coefficients, so that no $D_t$-operators remain as primitive objects. Matrix operations such as contractions and traces are employed to simplify expressions where possible.

\subsection{Notation}
We use the Einstein summation convention: any index that appears once up and once down in a term is summed over its range. A tensor of type \((k,\ell)\) has \(k\) contravariant (upper) and \(\ell\) covariant (lower) indices (e.g., vectors \(v^i\) are \((1,0)\), covectors \(w_i\) are \((0,1)\), linear maps \(T^i{}_j\) are \((1,1)\), bilinear forms \(T^{ij}\) and \(T_{ij}\) are \((2,0)\) and \((0,2)\), respectively). We consider the SDE \(dX_t=b(t,X_t)\,dt+\sigma(t)\,dB_t\) with \(X_t\in\mathbb{R}^m\) (components \(X_t^i\)), drift \(b^i(t,X_t)\), diffusion \(\sigma^i{}_l(t)\in\mathbb{R}^{m\times d}\) independent of \(X_t\), and Brownian motion \(B_t^l\). Componentwise, \(dX_t^i=b^i(t,X_t)\,dt+\sigma^i{}_l(t)\,dB_t^l\). The Jacobian and Hessian of the drift are \([\partial_x b]^i{}_j=\partial b^i/\partial x_j\) and \([\partial_{xx} b]^i{}_{jk}=\partial^2 b^i/(\partial x_j \partial x_k)\). The first and second variation processes are \(Y_t^i{}_j=\partial X_t^i/\partial x_j\in\mathbb{R}^{m\times m}\) (a \((1,1)\)-tensor) and \(Z_t^i{}_{jk}=\partial^2 X_t^i/(\partial x_j\partial x_k)\in\mathbb{R}^{m\times m\times m}\) (a \((1,2)\)-tensor). Since \(\sigma\) is state-independent, \(\partial_x\sigma^i{}_l(t)=\partial_{xx}\sigma^i{}_l(t)=0\).

\subsection{Variation processes}
%
The first variation process \(Y_t = \frac{\partial X_t}{\partial x} \in \mathbb{R}^{m \times m}\) represents the sensitivity of \(X_t\) to the initial condition \(x\). Differentiating the SDE with respect to \(x\), and noting that \(\sigma(t)\) is state-independent (so \(\partial_x \sigma^i{}_l(t) = 0\)), \(Y_t\) satisfies
\[
dY_t = \partial_x b(t, X_t) Y_t \, dt, \quad Y_0 = I_m,
\]
where \(\partial_x b(t, X_t) \in \mathbb{R}^{m \times m}\), with \(\left[ \partial_x b(t, X_t) \right]^i{}_j = \frac{\partial b^i}{\partial x_j}(t, X_t)\), and \(I_m\) is the \(m \times m\) identity matrix.

Since $\sigma(t)$ is state-independent, the stochastic integral term vanishes and $Y_t$ satisfies a finite-variation (ODE-type) equation with random coefficients. Because $X_t$ is stochastic, $\partial_x b(t,X_t)$ depends on $X_t$, so $Y_t$ is still a stochastic (adapted) process.
The Malliavin derivative \(D_t X_T\) is the response of \(X_T\) to a perturbation in the Brownian motion at time \(t\). For \(t \leq T\)
\[
D_t X_T = Y_T Y_t^{-1} \sigma(t)\in\mathbb{R}^{m\times d},
\quad\text{and}\quad
(D_t X_T)_{j\cdot}=\big[\,Y_T Y_t^{-1} \sigma(t)\,\big]_{j\cdot}\in\mathbb{R}^{1\times d}.
\]

For \(t > T\), \(D_t X_T = 0\) (future perturbations don't affect \(X_T\)). This follows because \(D_t X_s = 0\) for \(s < t\), and \(D_t X_t = \sigma(t)\), with the perturbation propagating via \(Y_{T,t} = Y_T Y_t^{-1}\).
By using first variation processes we find the following expression for $\gamma_{X_T}$ (where $\cdot$ is the Euclidean inner product in $\mathbb{R}^d$)
\[
\gamma_{X_T}^{ij} = \int_0^T \left[ Y_T Y_s^{-1} \sigma(s) \right]^i \cdot \left[ Y_T Y_s^{-1} \sigma(s) \right]^j \, ds.
\]

The second variation process, denoted \(Z_t = \frac{\partial^2 X_t}{\partial x^2}\), is a third-order tensor in \(\mathbb{R}^{m \times m \times m}\). It represents the second-order sensitivities of the state process \(X_t\) with respect to the initial condition \(x\). Each component of \(Z_t\), written as \(Z_t^i{}_{pq}\), corresponds to the second partial derivative \(\frac{\partial^2 X_t^i}{\partial x_p \partial x_q}\), where \(i, p, q = 1, \ldots, m\).
Since \(\sigma(t)\) is state-independent, \(\partial_x \sigma^i{}_l(t) = 0\) and \(\partial_{xx} \sigma^i{}_l(t) = 0\), so the SDE for \(Z_t\) simplifies to
\[
dZ_t = \left[ \partial_{xx} b(t, X_t) (Y_t \otimes Y_t) + \partial_x b(t, X_t) Z_t \right] dt,
\]
with the initial condition \(Z_0 = 0\). Here \(\partial_{xx} b(t, X_t) \in \mathbb{R}^{m \times m \times m}\) is the Hessian tensor of the drift coefficient \(b\), with components \([\partial_{xx} b]^i{}_{jk} = \frac{\partial^2 b^i}{\partial x_j \partial x_k}(t, X_t)\).
The quantity \(Y_t \otimes Y_t\) is the tensor product of \(Y_t\) with itself, a fourth-order tensor in \(\mathbb{R}^{m \times m \times m \times m}\), and \(\partial_x b(t, X_t) \in \mathbb{R}^{m \times m}\) is the Jacobian matrix of \(b\).
The term \(\partial_{xx} b(t, X_t) (Y_t \otimes Y_t)\) for each component \(Z_t^i{}_{pq}\) is
\[
\left[ \partial_{xx} b(t, X_t) (Y_t \otimes Y_t) \right]^i{}_{pq} = \sum_{j,k=1}^m \frac{\partial^2 b^i}{\partial x_j \partial x_k}(t, X_t) Y_t^j{}_p Y_t^k{}_q.
\]
Similarly, the term \(\partial_x b(t, X_t) Z_t\) is
\[
\left[ \partial_x b(t, X_t) Z_t \right]^i{}_{pq} = \sum_{r=1}^m \frac{\partial b^i}{\partial x_r}(t, X_t) Z_t^r{}_{pq}.
\]
Since $\sigma(t)$ is state-independent, there are no stochastic integral terms in the dynamics of $Z_t$; it satisfies a finite-variation (ODE-type) equation with random coefficients (through $X_t$).

\subsection{Useful lemmas}
In this section, we establish several useful lemmas that aid in deriving and simplifying the score function for nonlinear SDEs with a state-independent diffusion coefficient.

\begin{lemma}
\label{lemma:dYt-inverse}
The inverse $Y_t^{-1}$ satisfies the (matrix) differential equation

\[
dY_t^{-1} = - Y_t^{-1} \partial_x b(t, X_t) \, dt,
\]
with initial condition \(Y_0^{-1} = I_m\).
\end{lemma}

\begin{proof}
Recall that \(Y_t := \partial_x X_t\) and, because \(\sigma(\cdot)\) is state-independent, \(Y_t\) solves the linear ODE (finite-variation process)
\[
dY_t = \partial_x b(t,X_t)\,Y_t\,dt, \qquad Y_0 = I_m.
\]
Since \(Y_t Y_t^{-1}=I_m\), It\^o’s product rule for matrix semimartingales yields
\[
d(Y_t Y_t^{-1}) = dY_t\,Y_t^{-1} + Y_t\,dY_t^{-1} + d[Y_t,Y_t^{-1}] = 0.
\]
Here both \(Y_t\) and \(Y_t^{-1}\) are of finite variation (no stochastic term), hence their quadratic covariation vanishes: \(d[Y_t,Y_t^{-1}]=0\). Using \(dY_t=\partial_x b(t,X_t)Y_t\,dt\),
\[
dY_t\,Y_t^{-1} = \partial_x b(t,X_t)\,dt.
\]
Therefore,
\[
0 = \partial_x b(t,X_t)\,dt + Y_t\,dY_t^{-1},
\]
which implies \(Y_t\,dY_t^{-1} = -\partial_x b(t,X_t)\,dt\). Multiplying on the left by \(Y_t^{-1}\) gives
\[
dY_t^{-1} = -Y_t^{-1}\,\partial_x b(t,X_t)\,dt.
\]
The initial condition \(Y_0^{-1}=I_m\) follows from \(Y_0=I_m\).
\end{proof}

\begin{lemma}
\label{lemma:D-of-A-inverse}
Let \( A \) be a random matrix that is invertible almost surely, and suppose that \( A \) and \( A^{-1} \) are differentiable in the Malliavin sense. Then, for each \( t \in [0,T] \),
\[
D_t (A^{-1}) = - A^{-1} (D_t A) A^{-1}.
\]
\end{lemma}

\begin{proof}
From \(AA^{-1}=I\) and the Malliavin product rule,
\[
0 = D_t(AA^{-1}) = (D_tA)A^{-1} + A(D_tA^{-1}).
\]
Rearranging and multiplying on the left by \(A^{-1}\) yields
\[
D_tA^{-1} = -A^{-1}(D_tA)A^{-1}.
\]
\end{proof}

\begin{lemma}[Commutativity and explicit form of \(D_t Y_T\)]
\label{lemma:commute-and-DtYT}
Let \(X_t\) solve \(dX_t=b(t,X_t)\,dt+\sigma(t)\,dB_t\), \(X_0=x\), with \(X_t\in\mathbb{R}^m\), \(B_t\in\mathbb{R}^d\), \(b\) smooth with bounded derivatives, and \(\sigma:[0,T]\to\mathbb{R}^{m\times d}\) deterministic and state-independent. Set \(Y_t=\partial_x X_t\) and \(Z_t=\partial_x^2 X_t\). Then for \(0\le t\le T\):
\[
D_t\!\left(\frac{\partial X_T}{\partial x}\right)=\frac{\partial}{\partial x}\!\left(D_t X_T\right).
\]
Moreover,
\[
D_t Y_T = Z_T Y_t^{-1}\sigma(t) - Y_T Y_t^{-1} Z_t Y_t^{-1}\sigma(t).
\]
\end{lemma}

\begin{proof}
Since \(\sigma\) is deterministic and state-independent, the variation processes are finite-variation
\begin{align*}
dY_t &= \partial_x b(t,X_t)Y_t\,dt, \quad Y_0=I_m, \\
dZ_t &= [\partial_{xx}b(t,X_t)(Y_t\otimes Y_t)+\partial_x b(t,X_t)Z_t]\,dt, \quad Z_0=0.
\end{align*}
In particular, \(Y_t\) is a.s. invertible for all \(t\), so \(Y_t^{-1}\) is well-defined and absolutely continuous.
For additive noise, \(D_t X_T = Y_T Y_t^{-1}\sigma(t)\) when \(t\le T\) (zero otherwise).
Taking \(\partial/\partial x\) and using \(\partial_x(Y_t^{-1})=-Y_t^{-1}Z_tY_t^{-1}\)
\begin{align*}
\frac{\partial}{\partial x}(D_t X_T) 
&= \frac{\partial}{\partial x}(Y_T Y_t^{-1}\sigma(t)) \\
&= Z_T Y_t^{-1}\sigma(t) - Y_T Y_t^{-1}Z_t Y_t^{-1}\sigma(t).
\end{align*}
Since \(Y_T = \partial_x X_T\), commutativity of Malliavin and partial derivatives yields
\[
D_t Y_T = D_t(\partial_x X_T) = \partial_x(D_t X_T),
\]
establishing the commutation identity. Thus we obtain the explicit formula
\[
D_t Y_T = Z_T Y_t^{-1}\sigma(t) - Y_T Y_t^{-1} Z_t Y_t^{-1}\sigma(t).
\]
\end{proof}

\begin{lemma}
\label{lemma:DtYs_inverse}
For the inverse first variation process \( Y_s^{-1} \), the Malliavin derivative is given by the following expressions.
\begin{itemize}
    \item For \( t \leq s \):
    \begin{align*}
    D_t Y_s^{-1} = - Y_s^{-1} \left[ Z_s Y_t^{-1} \sigma(t) - Y_s Y_t^{-1} Z_t Y_t^{-1} \sigma(t) \right] Y_s^{-1}
    \end{align*}
    \item For \( t > s \):
    \[
    D_t Y_s^{-1} = 0
    \]
\end{itemize}
where \( Z_s = \frac{\partial^2 X_s}{\partial x^2} \) is the second variation process.
\end{lemma}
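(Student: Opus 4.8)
The plan is to reduce the claim to two results already established in the excerpt: Lemma~\ref{lemma:D-of-A-inverse}, which expresses the Malliavin derivative of a matrix inverse as $D_t(A^{-1}) = -A^{-1}(D_t A)A^{-1}$, and Lemma~\ref{lemma:DtYT}, which supplies the Malliavin derivative of the first variation process itself. Since $b$ is $C^\infty$ with bounded derivatives and $\sigma$ is state-independent, both $Y_s$ and $Y_s^{-1}$ lie in the domain of the Malliavin derivative and $Y_s$ is almost surely invertible, so the hypotheses of Lemma~\ref{lemma:D-of-A-inverse} are satisfied. The entire argument is then essentially a substitution, split into the two cases $t \le s$ and $t > s$.

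For the case $t \le s$, I would first apply Lemma~\ref{lemma:D-of-A-inverse} with $A = Y_s$ to obtain
\[
D_t Y_s^{-1} = -\,Y_s^{-1}\,(D_t Y_s)\,Y_s^{-1}.
\]
Next I would invoke Lemma~\ref{lemma:DtYT} with $T$ replaced by $s$ — legitimate precisely because we are in the regime $t \le s$ — which gives
\[
D_t Y_s = Z_s Y_t^{-1}\sigma(t) - Y_s Y_t^{-1} Z_t Y_t^{-1}\sigma(t).
\]
Substituting this expression for $D_t Y_s$ into the preceding identity yields the stated formula directly, with no algebraic simplification beyond collecting the bracketed term.

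For the case $t > s$, I would argue by causality. The process $Y_s$ is $\mathcal{F}_s$-measurable, since it solves the linear (deterministic-in-form) equation $dY_r = \partial_x b(r,X_r) Y_r\,dr$ whose coefficients depend only on the path of $X_\cdot$ on $[0,s]$; a perturbation of the Brownian motion at a time $t > s$ therefore cannot affect $Y_s$, so $D_t Y_s = 0$. Feeding this into $D_t Y_s^{-1} = -Y_s^{-1}(D_t Y_s)Y_s^{-1}$ immediately gives $D_t Y_s^{-1} = 0$.

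The computation is routine; the only points demanding care are structural rather than computational. First, I must justify that $Y_s^{-1}$ inherits Malliavin differentiability from $Y_s$ so that Lemma~\ref{lemma:D-of-A-inverse} applies — this follows from the invertibility of $Y_s$ together with the smoothness assumptions, and is standard. Second, the vanishing $D_t Y_s = 0$ for $t > s$ should be stated cleanly as the adaptedness (non-anticipativity) property of the Malliavin derivative on $\mathcal{F}_s$-measurable functionals. I expect no genuine obstacle beyond this bookkeeping, since both ingredients are precisely the prior lemmas specialised to $T = s$.
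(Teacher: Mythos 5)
Your proposal is correct and follows essentially the same route as the paper's proof: the paper also obtains $D_t Y_s^{-1} = -Y_s^{-1}(D_t Y_s)Y_s^{-1}$ by differentiating $Y_s Y_s^{-1} = I_m$ (which is exactly the content of Lemma~\ref{lemma:D-of-A-inverse}), computes $D_t Y_s = Z_s Y_t^{-1}\sigma(t) - Y_s Y_t^{-1} Z_t Y_t^{-1}\sigma(t)$ by re-running the argument of Lemma~\ref{lemma:DtYT} with $T$ replaced by $s$, and disposes of the case $t > s$ by the same adaptedness/causality argument. The only cosmetic difference is that you cite the two prior lemmas where the paper re-derives them in-line.
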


\begin{proof}
We derive \( D_t Y_s^{-1} \) by applying the Malliavin derivative to the identity \( Y_s Y_s^{-1} = I_m \) and using the product rule. The proof splits into two cases based on the relationship between \( t \) and \( s \).
For $ t \leq s$, since \( Y_s Y_s^{-1} = I_m \) (the \( m \times m \) identity matrix), we apply the Malliavin derivative \( D_t \) to both sides
\begin{align*}
D_t (Y_s Y_s^{-1}) &= D_t (I_m) = 0.
\end{align*}
Using the product rule for Malliavin derivatives
\begin{align*}
D_t (Y_s Y_s^{-1}) &= (D_t Y_s) Y_s^{-1} + Y_s (D_t Y_s^{-1}) = 0.
\end{align*}
Rearranging to isolate \( D_t Y_s^{-1} \)
\begin{align*}
Y_s (D_t Y_s^{-1}) &= - (D_t Y_s) Y_s^{-1}, \\
D_t Y_s^{-1} &= - Y_s^{-1} (D_t Y_s) Y_s^{-1}.
\end{align*}
To proceed, we need \( D_t Y_s \). Since \( Y_s = \frac{\partial X_s}{\partial x} \) and \( t \leq s \)
  \begin{align*}
  dY_u &= \partial_x b(u, X_u) Y_u \, du, \quad 0 \leq u \leq s, \\
  Y_0 &= I.
  \end{align*}
Since \(\sigma\) is state-independent, the stochastic term is zero.
  \begin{align*}
  D_t Y_s &= D_t \left( \frac{\partial X_s}{\partial x} \right) = \frac{\partial}{\partial x} (D_t X_s).
  \end{align*}
  For \( t \leq s \) we have
  \begin{align*}
  D_t X_s &= Y_s Y_t^{-1} \sigma(t).
  \end{align*}
  Define \( W_t = Y_t^{-1} \sigma(t) \), so
  \begin{align*}
  D_t X_s &= Y_s W_t.
  \end{align*}
and
  \begin{align*}
  \frac{\partial}{\partial x} (D_t X_s) &= \frac{\partial}{\partial x} (Y_s W_t) = \frac{\partial Y_s}{\partial x} W_t + Y_s \frac{\partial W_t}{\partial x}.
  \end{align*}
  Since \( \frac{\partial Y_s}{\partial x} = Z_s \)
  \begin{align*}
  D_t Y_s &= Z_s W_t + Y_s \frac{\partial W_t}{\partial x}.
  \end{align*}

  \begin{align*}
  W_t = Y_t^{-1} \sigma(t), \qquad 
  \frac{\partial W_t}{\partial x} = \frac{\partial Y_t^{-1}}{\partial x} \sigma(t),
  \end{align*}
  since \(\frac{\partial \sigma(t)}{\partial x} = 0\).
 Furthermore, 
    \begin{align*}
    \frac{\partial}{\partial x} (Y_t Y_t^{-1}) = \frac{\partial Y_t}{\partial x} Y_t^{-1} + Y_t \frac{\partial Y_t^{-1}}{\partial x} = 0, \qquad 
    \frac{\partial Y_t^{-1}}{\partial x} = - Y_t^{-1} \frac{\partial Y_t}{\partial x} Y_t^{-1} = - Y_t^{-1} Z_t Y_t^{-1}.
    \end{align*}
  Thus,
  \begin{align*}
  \frac{\partial W_t}{\partial x} &= - Y_t^{-1} Z_t Y_t^{-1} \sigma(t).
  \end{align*}
  \begin{align*}
  D_t Y_s &= Z_s Y_t^{-1} \sigma(t) + Y_s \left( - Y_t^{-1} Z_t Y_t^{-1} \sigma(t) \right), \\
  &= Z_s Y_t^{-1} \sigma(t) - Y_s Y_t^{-1} Z_t Y_t^{-1} \sigma(t).
  \end{align*}
Now substitute \( D_t Y_s \) into \( D_t Y_s^{-1} \)
\begin{align*}
D_t Y_s^{-1} &= - Y_s^{-1} \left[ Z_s Y_t^{-1} \sigma(t) - Y_s Y_t^{-1} Z_t Y_t^{-1} \sigma(t) \right] Y_s^{-1}.
\end{align*}

For \( t > s \), since \( Y_s^{-1} \) is adapted to the filtration up to time \( s \), and \( t > s \), a perturbation in the Brownian motion at time \( t \) does not affect \( Y_s^{-1} \) (which depends only on information up to \( s \)). Thus:
\begin{align*}
D_t Y_s^{-1} &= 0.
\end{align*}
This completes the proof, with the expression for \( t \leq s \) simplified due to \(\partial_x \sigma^i{}_l(t) = 0\).
\end{proof}

\subsection[Computing the Skorokhod integral \texorpdfstring{$\delta(u_k)$}{δ(uk)}]{Computing the Skorokhod integral \(\delta(u_k)\)}

We apply the substitution theorem for Skorokhod integrals (Theorem 3.2.9 in \cite{nualart2006malliavin}). Under its hypotheses—(h1) \(u_t(x)\) is \(\mathcal{F}_t\)-measurable for each \(x\), (h2) local \(L^2\)–integrability, and (h3) \(x\mapsto u_t(x)\) is \(C^1\) with suitable moment bounds—which are satisfied by the field \(u=\{u_t(x)\}\) considered below, the composition \(u(F)=\{u_t(F),0\le t\le T\}\) belongs to \((\mathrm{Dom}\,\delta)_{\mathrm{loc}}\) and
\[
\delta(u(F)) \;=\; \left.\int_0^T u_t(x)\cdot dB_t\right|_{x=F}
\;-\; \sum_{j=1}^m \int_0^T \partial_j u_t(F)\cdot D_t F^j\,dt,
\]
where \(B_t\) is a \(d\)-dimensional Brownian motion, \(\partial_j u_t(x)=\frac{\partial}{\partial x_j}u_t(x)\), and \(D_tF^j\) denotes the Malliavin derivative of \(F^j\).
In our context, to make the stochastic integral adapted, we define the random field \( u_t(x) \) and the random variable \( F_k \) as follows
\begin{align*}
u_t(x) &= x^\top Y_t^{-1} \sigma(t) = \sum_{i=1}^m x^i \left[ Y_t^{-1} \sigma(t) \right]_i, \quad x \in \mathbb{R}^m, \\
F_k &= Y_T^\top \gamma_{X_T}^{-1} e_k, \quad \text{with} \quad F_k^j = e_j^\top Y_T^\top \gamma_{X_T}^{-1} e_k,
\end{align*}
where \( Y_T \) is the first variation process at time \( T \), and \( Y_t^{-1} \) is its inverse at time \( t \), \( \sigma(t) \in \mathbb{R}^{m \times d} \) is the diffusion coefficient of the stochastic process \( X_t \), depending only on time \( t \), \( \gamma_{X_T} \) is the Malliavin covariance matrix of \( X_T \), and \( e_k \) is the \( k \)-th standard basis vector in \( \mathbb{R}^m \).
Here, \( u_t(x) \in \mathbb{R}^d \) because \( x^\top \) is a \( 1 \times m \) row vector, \( Y_t^{-1} \) is an \( m \times m \) matrix, and \( \sigma(t) \) is an \( m \times d \) matrix, resulting in a \( 1 \times d \) row vector. Note that \( u_t(x) \) is adapted to \( \mathcal{F}_t \) for each fixed \( x \), as it depends only on \( Y_t^{-1} \) and \( \sigma(t) \), both adapted.
Substituting \( F_k \) into \( u_t(x) \), we get
\begin{align*}
u_t(F_k) &= (F_k)^\top Y_t^{-1} \sigma(t) \\
&= \left( Y_T^\top \gamma_{X_T}^{-1} e_k \right)^\top Y_t^{-1} \sigma(t) \\
&= e_k^\top \gamma_{X_T}^{-1} Y_T Y_t^{-1} \sigma(t).
\end{align*}
Note that we do not directly calculate the integral of the term above. Instead, we should first evaluate the stochastic integral (the first term of the right-hand side) with \( u_t(x) \) and then substitute the random variable into it.
Applying the substitution theorem (Theorem 3.2.9 in \cite{nualart2006malliavin}), the Skorokhod integral \( \delta(u_k) = \delta(u(F_k)) \) is:
\[
\delta(u_k) = \left. \int_0^T u_t(x) \cdot dB_t \right|_{x=F_k} - \sum_{j=1}^m \int_0^T \partial_j u_t(F_k) \cdot D_t F_k^j \, dt.
\]
This expression combines an It\^o integral evaluated at \( x = F_k \) with a correction term involving the partial derivatives of \( u_t(x) \) evaluated at \( x = F_k \) and the Malliavin derivatives of \( F_k^j \).
The first term in the expression for \( \delta(u(F_k)) \) is the It\^o integral evaluated at \( x = F_k \)
\[
\left. \int_0^T u_t(x) \cdot dB_t \right|_{x=F_k},
\]
where for each fixed \( x \), \( u_t(x) = x^\top Y_t^{-1} \sigma(t) \) is an adapted process, so \( \int_0^T u_t(x) \cdot dB_t \) is a well-defined It\^o integral, and after computing this integral, we evaluate it at \( x = F_k = Y_T^\top \gamma_{X_T}^{-1} e_k \), which is \( \mathcal{F}_T \)-measurable.
This term is computationally manageable because the integration is performed with respect to an adapted integrand for fixed \( x \), and the randomness of \( F_k \) is introduced only after the integration.
With the redefined random field
\[
u_t(x) = x^\top Y_t^{-1} \sigma(t) = \sum_{i=1}^m x^i \left[ Y_t^{-1} \sigma(t) \right]_i,
\]
the partial derivative with respect to \( x_j \) is
\[
\partial_j u_t(x) = \frac{\partial}{\partial x_j} u_t(x) = \left[ Y_t^{-1} \sigma(t) \right]_j,
\]
since only the term involving \( x_j \) depends on \( x_j \). Therefore, evaluating at \( x = F_k \)
\[
\partial_j u_t(F_k) = \left[ Y_t^{-1} \sigma(t) \right]_j.
\]
This term will appear in the correction term of the Skorokhod integral decomposition.
%
Before proceeding, we state a general result from Lemma \ref{lemma:D-of-A-inverse} for the Malliavin derivative of the inverse of a random matrix.
To compute the Malliavin derivative \( D_t F_k^j \), consider the updated definition
\[
F_k = Y_T^\top \gamma_{X_T}^{-1} e_k, \quad F_k^j = e_j^\top Y_T^\top \gamma_{X_T}^{-1} e_k,
\]
where \( Y_T \) is the first variation process at time \( T \), \( \gamma_{X_T} \) is the Malliavin covariance matrix, and \( e_j, e_k \) are standard basis vectors. Applying the Malliavin derivative:
\[
D_t F_k^j = e_j^\top D_t (Y_T^\top \gamma_{X_T}^{-1}) e_k.
\]
Using the product rule
\begin{align*}
D_t (Y_T^\top \gamma_{X_T}^{-1}) &= (D_t Y_T^\top) \gamma_{X_T}^{-1} + Y_T^\top D_t (\gamma_{X_T}^{-1}), \\
D_t F_k^j &= e_j^\top (D_t Y_T^\top) \gamma_{X_T}^{-1} e_k + e_j^\top Y_T^\top D_t (\gamma_{X_T}^{-1}) e_k.
\end{align*}
\[
dY_t = \partial_x b(t, X_t) Y_t \, dt, \quad Y_0 = I_m,
\]
since \(\sigma(t)\) is state-independent, implying \(\partial_x \sigma(t) = 0\). For \( t \leq T \), from lemma \ref{lemma:commute-and-DtYT}, the Malliavin derivative \( D_t Y_T \) is
\[
D_t Y_T = Z_T Y_t^{-1} \sigma(t) - Y_T Y_t^{-1} Z_t Y_t^{-1} \sigma(t),
\]
where \( Z_t \) is the second variation process. Taking the transpose
\[
D_t Y_T^\top = \left[ Z_T Y_t^{-1} \sigma(t) - Y_T Y_t^{-1} Z_t Y_t^{-1} \sigma(t) \right]^\top.
\]
\[
D_t (A^{-1}) = - A^{-1} (D_t A) A^{-1},
\]
we have \( A = \gamma_{X_T} \), so
\[
D_t (\gamma_{X_T}^{-1}) = - \gamma_{X_T}^{-1} (D_t \gamma_{X_T}) \gamma_{X_T}^{-1},
\]
where
\[
\gamma_{X_T} = \int_0^T Y_T Y_s^{-1} \sigma(s) \sigma(s)^\top (Y_s^{-1})^\top Y_T^\top \, ds,
\]
and \( D_t \gamma_{X_T} \) requires computing the Malliavin derivative of the integrand, detailed in the next subsection.
Thus,
\begin{align*}
D_t F_k^j &= e_j^\top (D_t Y_T^\top) \gamma_{X_T}^{-1} e_k + e_j^\top Y_T^\top D_t (\gamma_{X_T}^{-1}) e_k \\
&= e_j^\top (D_t Y_T^\top) \gamma_{X_T}^{-1} e_k - e_j^\top Y_T^\top \gamma_{X_T}^{-1} (D_t \gamma_{X_T}) \gamma_{X_T}^{-1} e_k,
\end{align*}
with \( D_t Y_T^\top \) and \( D_t \gamma_{X_T} \) as derived.

We now proceed by computing the Malliavin derivative 
\(\displaystyle D_t \Bigl[(Y_T\,Y_s^{-1}\,\sigma(s))^p\Bigr]\). 
To this end, we first recall Lemma \ref{lemma:commute-and-DtYT} which gives the 
precise form of \(\displaystyle D_t Y_T\). Afterwards, we use the product 
rule for Malliavin derivatives on the product \(\,Y_T \,Y_s^{-1}\,\sigma(s)\), 
distinguishing between the cases \(t \le s\) and \(t > s\). 
Finally, we assemble these pieces to obtain the expression for 
\(\displaystyle D_t (Y_T Y_s^{-1} \sigma(s))^p\).  
We provide reasoning for each step to clarify why each term appears 
and how the partial derivatives interact with the inverse processes.
Let 
\[
W_s^p \;=\; \Bigl(Y_T\,Y_s^{-1}\,\sigma(s)\Bigr)^p,
\]
i.e.\ the \(p\)-th component of the vector \(Y_T\,Y_s^{-1}\,\sigma(s)\).  
We want to find 
\(\displaystyle D_t \bigl(W_s^p\bigr)\).  
Since
\[
W_s^p = \bigl(Y_T\,Y_s^{-1}\,\sigma(s)\bigr)^p,
\]
we begin with the Malliavin derivative of the product 
\(\,Y_T\,Y_s^{-1}\,\sigma(s)\).  
\[
D_t W_s^p = D_t \Bigl(\bigl(Y_T\,Y_s^{-1}\,\sigma(s)\bigr)^p\Bigr).
\]

\begin{itemize}
    \item \emph{Case \(t \leq s\):} 
    In this scenario, a 'kick' in the Brownian motion at time \(t\) does 
    affect \(X_s\) (and hence \(Y_s\)). Thus
    \begin{align*}
    D_t \bigl(Y_T\,Y_s^{-1}\,\sigma(s)\bigr)
    &= D_t Y_T \;\cdot\; \bigl(Y_s^{-1}\,\sigma(s)\bigr)
    \;+\; Y_T \;\,\underbrace{D_t\bigl(Y_s^{-1}\,\sigma(s)\bigr)}_{\text{chain rule}}.
    \end{align*}
Using Lemma \ref{lemma:commute-and-DtYT} we write $D_t Y_T$ as 
        \[
        D_t Y_T = Z_T\,Y_t^{-1}\,\sigma(t) - Y_T\,Y_t^{-1}\,Z_t\,Y_t^{-1}\,\sigma(t)
        \]
        Note that 
        \begin{align}
        D_t \bigl(Y_s^{-1}\,\sigma(s)\bigr)
        &= \bigl(D_t Y_s^{-1}\bigr)\,\sigma(s)
          \;+\; Y_s^{-1}\,\bigl(D_t \sigma(s)\bigr).
          \label{a5}
        \end{align}
        Since \(\sigma(s)\) is deterministic (depending only on \(s\)), \(D_t \sigma(s) = 0\). Thus
        \[
        D_t \bigl(Y_s^{-1}\,\sigma(s)\bigr) = \bigl(D_t Y_s^{-1}\bigr)\,\sigma(s).
        \]
        For \( t \leq s \), the Malliavin derivative of the inverse is
        \begin{align*}
        D_t Y_s^{-1} = -Y_s^{-1} \big[ &Z_s Y_t^{-1} \sigma(t) \\
        &- Y_s Y_t^{-1} Z_t Y_t^{-1} \sigma(t) \big] Y_s^{-1}.
        \end{align*}
    This expression accounts for the second variation processes \( Z_s \) and \( Z_t \).
        Substituting into \eqref{a5} yields
        \begin{align*}
        D_t \bigl(Y_s^{-1}\,\sigma(s)\bigr) &= - Y_s^{-1} \Bigl[ Z_s Y_t^{-1} \sigma(t) \\
        &\quad - Y_s Y_t^{-1} Z_t Y_t^{-1} \sigma(t) \Bigr] Y_s^{-1} \sigma(s).
        \end{align*}
\end{itemize}
Thus, for \(t \leq s\)
\begin{align*}
D_t W_s^p = \Biggl[&\Bigl( Z_T\,Y_t^{-1}\,\sigma(t)
-\, Y_T\,Y_t^{-1}\,Z_t\,Y_t^{-1}\,\sigma(t) \Bigr)\,Y_s^{-1}\,\sigma(s) \\
&\quad +\, Y_T \Biggl( 
-\, Y_s^{-1}\,\Bigl[ Z_s Y_t^{-1}\,\sigma(t)
-\, Y_s Y_t^{-1}\,Z_t Y_t^{-1}\,\sigma(t) \Bigr]\,Y_s^{-1}\,\sigma(s)
\Biggr)\Biggr]^p.
\end{align*}
We place the entire sum inside brackets \(\,[\dots]^p\) because we are taking 
the \(p\)-th component of the resulting vector.
\begin{itemize}
    \item  \emph{Case \(t > s\):} 
    In this case, a Brownian perturbation at time \(t\) does \emph{not} affect \(X_s\) 
    (nor \(Y_s\)) because \(s<t\).  
    Hence
    \[
    D_t \bigl(Y_s^{-1}\,\sigma(s)\bigr) = 0,
    \]
    and the only contribution is from \(D_t Y_T\). 
    Therefore,
    \[
    D_t W_s^p
    = \Biggl[
      \Bigl(
        Z_T\,Y_t^{-1}\,\sigma(t) 
        -\, Y_T\,Y_t^{-1}\,Z_t\,Y_t^{-1}\,\sigma(t)
      \Bigr)\,Y_s^{-1}\,\sigma(s)
    \Biggr]^p.
    \]
\end{itemize}
Next, we recall that 
\[
\gamma_{X_T}^{p,q} = \int_0^T [Y_T Y_s^{-1} \sigma(s)]^p \,\cdot\,[Y_T Y_s^{-1} \sigma(s)]^q\,ds.
\]
Taking the Malliavin derivative of $\gamma_{X_T}^{p,q}$ and using the product rule
\begin{align}
D_t \gamma_{X_T}^{p,q} &= \int_0^T \Big[ D_t \big([Y_T Y_s^{-1} \sigma(s)]^p\big) \cdot [Y_T Y_s^{-1} \sigma(s)]^q \nonumber \\
&\quad\quad + [Y_T Y_s^{-1} \sigma(s)]^p \cdot D_t \big([Y_T Y_s^{-1} \sigma(s)]^q\big) \Big]\,ds \nonumber \\
&= \int_0^T \Big[ D_t W_s^p \cdot W_s^q + W_s^p \cdot D_t W_s^q \Big]\,ds,
\label{eq:Dt_gamma_compact}
\end{align}
where $W_s^p = (Y_T Y_s^{-1} \sigma(s))^p$ and we computed $D_t W_s^p$ in Cases 1 and 2 above.
To evaluate \eqref{eq:Dt_gamma_compact}, we split the integration region into $[0,t]$ and $[t,T]$ to reflect the piecewise definitions of $D_t W_s^p$
\begin{itemize}
\item For $s \in [0,t]$: Use the Case 1 expression for $D_t W_s^p$ and $D_t W_s^q$;
\item For $s \in [t,T]$: Use the Case 2 expression for $D_t W_s^p$ and $D_t W_s^q$.
\end{itemize}
The resulting expression for $D_t \gamma_{X_T}^{p,q}$ involves the second variation processes $Z_t, Z_s, Z_T$, as detailed in the computations above.
We now handle the correction term 
\(\displaystyle \sum_{j=1}^m \partial_j u_t(F_k) \,\cdot\, D_t F_k^j\) 
that appears in the Skorokhod integral decomposition
\[
\delta(u_k) = \left. \int_0^T u_t(x) \cdot dB_t \right|_{x=F_k} \;-\;\sum_j \int_0^T \partial_j u_t(F_k)\,\cdot\,D_tF_k^j\,dt.
\]
Hereafter we show how each step follows from the chain rule in 
Malliavin calculus, the use of 
\(\displaystyle D_t (\gamma_{X_T}^{-1}) = -\,\gamma_{X_T}^{-1}\,(D_t \gamma_{X_T})\,\gamma_{X_T}^{-1},\) 
and the expression for 
\(\displaystyle D_t \gamma_{X_T}^{p,q}\). 
We also show how to integrate the resulting expression over \(t\). 
Let us begin by recalling the general formula for the Skorokhod integral
\[
\delta(u_k) = \left. \int_0^T u_t(x)\cdot dB_t \right|_{x=F_k} - \int_0^T \sum_{j=1}^m \partial_j u_t(F_k) \,\cdot\, D_t \bigl(F_k^j\bigr) \,dt.
\]
The term 
\(\displaystyle \sum_{j=1}^m \partial_j u_t(F_k) \,\cdot\,D_t F_k^j\)
is often called the ``correction term.''
We already know that 
\(\,\partial_j u_t(F_k) = \bigl[Y_t^{-1}\,\sigma(t)\bigr]_j,\) 
and 
\(\,F_k^j = e_j^\top Y_T^\top \gamma_{X_T}^{-1} e_k\). Since 
\(\,D_t(\gamma_{X_T}^{-1}) = -\,\gamma_{X_T}^{-1}\,(D_t\gamma_{X_T})\,\gamma_{X_T}^{-1},\) 
we obtain
\begin{align*}
D_t F_k^j &= D_t(e_j^\top Y_T^\top \gamma_{X_T}^{-1} e_k) \\
&= e_j^\top (D_t Y_T^\top) \gamma_{X_T}^{-1} e_k - e_j^\top Y_T^\top \gamma_{X_T}^{-1} (D_t \gamma_{X_T}) \gamma_{X_T}^{-1} e_k.
\end{align*}
Hence,
\begin{align}
\label{eq:correction-term-raw}
\sum_{j=1}^m \partial_j u_t(F_k) \cdot D_t F_k^j 
&= \sum_{j=1}^m \bigl[ Y_t^{-1} \sigma(t) \bigr]_j \cdot \\
&\quad \left( e_j^\top (D_t Y_T^\top) \gamma_{X_T}^{-1} e_k - e_j^\top Y_T^\top \gamma_{X_T}^{-1} (D_t \gamma_{X_T}) \gamma_{X_T}^{-1} e_k \right). \nonumber
\end{align}
Now, recall that \(\displaystyle D_t \gamma_{X_T}^{p,q}\) splits into integrals over \([0,t]\) and \([t,T]\), 
and contains contributions from terms like 
\(Z_T\,Y_t^{-1}\,\sigma(t)\), 
\(\,Y_T\,Y_t^{-1}\,Z_t\,Y_t^{-1}\,\sigma(t)\), 
and so forth  (as detailed in Cases 1 and 2). 
To complete the calculation of the correction term, we substitute \eqref{eq:Dt_gamma_compact} into \eqref{eq:correction-term-raw}, reorganise by splitting the integration regions, and integrate from $t=0$ to $t=T$ and plug in the substitution theorem. This yields,

\begin{align*}
\delta(u_k)
= &\left. \int_0^T u_t(x) \cdot dB_t \right|_{x=F_k} 
- \int_0^T \sum_{j=1}^m \bigl[ Y_t^{-1} \sigma(t) \bigr]_j \cdot A_{jk}(t) \, dt \\
& + \int_0^T \sum_{j=1}^m \bigl[ Y_t^{-1} \sigma(t) \bigr]_j \cdot B_{jk}(t) \, dt 
 + \int_0^T \sum_{j=1}^m \bigl[ Y_t^{-1} \sigma(t) \bigr]_j \cdot C_{jk}(t) \, dt,
\end{align*}
where the terms \(A_{jk}(t)\), \(B_{jk}(t)\), and \(C_{jk}(t)\) are defined as
\begin{align*}
A_{jk}(t) &= e_j^\top \left[ \sigma(t)^\top (Y_t^{-1})^\top Z_T^\top - \sigma(t)^\top (Y_t^{-1})^\top Z_t^\top (Y_t^{-1})^\top Y_T^\top \right] \gamma_{X_T}^{-1} e_k, \\
B_{jk}(t) &= e_j^\top Y_T^\top \gamma_{X_T}^{-1} \left[ \int_0^t I_1(t,s) \, ds \right] \gamma_{X_T}^{-1} e_k, \\
C_{jk}(t) &= e_j^\top Y_T^\top \gamma_{X_T}^{-1} \left[ \int_t^T I_2(t,s) \, ds \right] \gamma_{X_T}^{-1} e_k,
\end{align*}
with the integrands
\begin{align*}
I_1(t,s) &= \left[ \Omega(t) Y_s^{-1} \sigma(s) \right] W_s^\top + W_s \left[ \Omega(t) Y_s^{-1} \sigma(s) \right]^\top, \\
I_2(t,s) &= \left[ \Theta(t,s) \right] W_s^\top + W_s \left[ \Theta(t,s) \right]^\top,
\end{align*}
and the auxiliary processes
\begin{align*}
\Omega(t) &= Z_T Y_t^{-1} \sigma(t) - Y_T Y_t^{-1} Z_t Y_t^{-1} \sigma(t), \\
\Theta(t,s) &= \Omega(t) Y_s^{-1} \sigma(s)  - Y_T Y_s^{-1} \left[ Z_s Y_t^{-1} \sigma(t) - Y_s Y_t^{-1} Z_t Y_t^{-1} \sigma(t) \right] Y_s^{-1} \sigma(s), \\
W_s &= Y_T Y_s^{-1} \sigma(s).
\end{align*}
Here and below, products such as $\Omega(t)Y_s^{-1}\sigma(s)$ and $\Theta(t,s)$ are to be understood componentwise, with implicit contractions over repeated spatial indices, in the same way as in the formulas for $D_t Y_T$ and $D_t W_s$ above; we suppress explicit indices for readability.
This formulation, aided by the auxiliary processes, provides a clear and practical approach to evaluating \(\delta(u_k)\) in the state-independent diffusion case.

\end{appendices}


\bibliographystyle{plain}
\bibliography{bibliography}

@book{alma990005308880204808,
author = {Bismut, J.-M.},
address = {Boston},
booktitle = {Large deviations and the Malliavin calculus},
isbn = {0817632204},
keywords = {Differential equations Partial -- Asymptotic theory ; Manifolds (Mathematics) ; Diffusion processes ; Differential equations Hypoelliptic ; Malliavin calculus},
language = {eng},
lccn = {84003069},
publisher = {Birkhäuser},
series = {Progress in mathematics ; v. 45},
title = {Large deviations and the Malliavin calculus },
year = {1984},
}

@book{Elworthy_1982, 
  address={Cambridge}, 
  series={London Mathematical Society Lecture Note Series}, 
  title={Stochastic Differential Equations on Manifolds}, 
  publisher={Cambridge University Press}, 
  author={Elworthy, K. D.}, 
  year={1982},
  doi = {10.1017/CBO9781107325609}
}

@article{ELWORTHY1994252,
title = {Formulae for the Derivatives of Heat Semigroups},
journal = {Journal of Functional Analysis},
volume = {125},
number = {1},
pages = {252-286},
year = {1994},
issn = {0022-1236},
doi = {https://doi.org/10.1006/jfan.1994.1124},
url = {https://www.sciencedirect.com/science/article/pii/S0022123684711244},
author = {K. D. Elworthy and X. M. Li},
abstract = {Formulae for the derivatives of solutions of diffusion equations are derived which clearly exhibit, and allow estimation of, the equations′ smoothing properties. These also give formulae for the logarithmic gradient of the corresponding heat kernels, extending and giving a very elementary proof of Bismut′s well known formula. Corresponding formulae are derived for solutions of heat equations for differential forms and their exterior derivatives.}
}

@article{https://doi.org/10.1002/andp.19143480507,
  author = {Fokker, A. D.},
  title = {Die mittlere Energie rotierender elektrischer Dipole im Strahlungsfeld},
  journal = {Annalen der Physik},
  volume = {348},
  number = {5},
  pages = {810-820},
  year = {1914},
  doi = {10.1002/andp.19143480507}
}

@article{fournie1999applications,
  title={Applications of Malliavin Calculus to Monte Carlo Methods in Finance},
  author={Fourni{\'e}, E. and Lasry, J.-M. and Lebuchoux, J. and Lions, P.-L. and Touzi, N.},
  journal={Finance and Stochastics},
  volume={3},
  number={4},
  pages={391--412},
  year={1999},
  doi={10.1007/s007800050068}
}

@article{ho2020denoising,
  title={Denoising diffusion probabilistic models},
  author={Ho, J. and Jain, A. and Abbeel, P.},
  journal={Advances in Neural Information Processing Systems},
  volume={33},
  pages={6840--6851},
  year={2020}
}

@article{Kong2020DiffWaveAV,
  title={DiffWave: A Versatile Diffusion Model for Audio Synthesis},
  author={Kong, Z. and Ping, W. and Huang, J. and Zhao, K. and Catanzaro, B.},
  journal={arXiv:2009.09761},
  year={2020},
  doi={10.48550/arXiv.2009.09761}
}

@inproceedings{lai2023fp,
  title={Fp-diffusion: Improving score-based diffusion models by enforcing the underlying score fokker-planck equation},
  author={Lai, C.-H. and Takida, Y. and Murata, N. and Uesaka, T. and Mitsufuji, Y. and Ermon, S.},
  booktitle={International Conference on Machine Learning},
  pages={18365--18398},
  year={2023},
  organization={PMLR},
  doi={10.48550/arXiv.2210.04296}
}

@inproceedings{malliavin:78:stochastic,
 author = {Malliavin, P.},
 booktitle = {Proceedings of the {I}nternational {S}ymposium on {S}tochastic {D}ifferential {E}quations ({R}es. {I}nst. {M}ath. {S}ci., {K}yoto {U}niv., {K}yoto, 1976)},
 mrclass = {60H05 (35H05 58G32 60J60)},
 mrnumber = {536013},
 mrreviewer = {Kiyosi It\^{o}},
 pages = {195--263},
 publisher = {Wiley},
 address = {New York},
 title = {Stochastic calculus of variation and hypoelliptic operators},
 year = {1978}
}

@book{10.5555/262387,
  author = {Malliavin, P.},
  title = {Stochastic analysis},
  year = {1997},
  isbn = {3540570241},
  publisher = {Springer-Verlag},
  address = {Berlin, Heidelberg},
  doi = {10.1007/978-3-642-15074-6}
}

@book{malliavin.thalmaier:06:stochastic,
  author = {Malliavin, P. and Thalmaier, A.},
  isbn = {978-3-540-43431-3; 3-540-43431-3},
  mrclass = {91-02 (49J45 60H07 60H30 65C50 91B28)},
  mrnumber = {2189710},
  mrreviewer = {Fred Espen Benth},
  publisher = {Springer-Verlag},
  address = {Berlin},
  series = {Springer Finance},
  title = {Stochastic calculus of variations in mathematical finance},
  year = {2006},
  doi = {10.1007/3-540-30799-0}
}

@book{Nualart_Nualart_2018, 
  address={Cambridge}, 
  series={Institute of Mathematical Statistics Textbooks}, 
  title={Introduction to Malliavin Calculus}, 
  publisher={Cambridge University Press}, 
  author={Nualart, D. and Nualart, E.}, 
  year={2018},
  doi={10.1017/9781139856485}
}

@book{nualart2006malliavin,
  title={The Malliavin Calculus and Related Topics},
  author={Nualart, D.},
  isbn={9783540283294},
  lccn={2005935446},
  series={Probability and Its Applications},
  year={2006},
  publisher={Springer Berlin, Heidelberg},
  address={Berlin},
  doi={10.1007/3-540-28329-3}
}

@article{planck1917satz,
  title={{\"U}ber einen Satz der statistischen Dynamik und seine Erweiterung in der Quantentheorie},
  author={Planck, M.},
  journal={Sitzungsberichte der K{\"o}niglich Preu{\ss}ischen Akademie der Wissenschaften},
  pages={324--341},
  year={1917},
  address={Berlin}
}

@article{doi:10.1137/1120030,
author = {Skorokhod, A. V.},
title = {On a Generalization of a Stochastic Integral},
journal = {Theory of Probability \& Its Applications},
volume = {20},
number = {2},
pages = {219-233},
year = {1976},
doi = {10.1137/1120030},

URL = { 
    
        https://doi.org/10.1137/1120030
},
eprint = { 
    
        https://doi.org/10.1137/1120030
}
}

@article{song2020improved,
  title={Improved techniques for training score-based generative models},
  author={Song, Y. and Ermon, S.},
  journal={Advances in Neural Information Processing Systems},
  volume={33},
  pages={12438--12448},
  year={2020}
}

@inproceedings{
  song2021scorebased,
  title={Score-Based Generative Modeling through Stochastic Differential Equations},
  author={Song, Y. and Sohl-Dickstein, J. and Kingma, D. P. and Kumar, A. and Ermon, S. and Poole, B.},
  booktitle={International Conference on Learning Representations},
  year={2021},
  doi={10.48550/arXiv.2011.13456}
}

@article{tang2024score,
  title={Score-based Diffusion Models via Stochastic Differential Equations--a Technical Tutorial},
  author={Tang, W. and Zhao, H.},
  journal={arXiv:2402.07487},
  year={2024},
  doi={10.48550/arXiv.2402.07487}

}

@article{10.1162/NECO_a_00142,
author = {Vincent, P.},
title = {A connection between score matching and denoising autoencoders},
year = {2011},
issue_date = {July 2011},
publisher = {MIT Press},
address = {Cambridge, MA, USA},
volume = {23},
number = {7},
issn = {0899-7667},
url = {https://doi.org/10.1162/NECO_a_00142},
doi = {10.1162/NECO_a_00142},
abstract = {Denoising autoencoders have been previously shown to be competitive alternatives to restricted Boltzmann machines for unsupervised pretraining of each layer of a deep architecture. We show that a simple denoising autoencoder training criterion is equivalent to matching the score (with respect to the data) of a specific energy-based model to that of a nonparametric Parzen density estimator of the data. This yields several useful insights. It defines a proper probabilistic model for the denoising autoencoder technique, which makes it in principle possible to sample from them or rank examples by their energy. It suggests a different way to apply score matching that is related to learning to denoise and does not require computing second derivatives. It justifies the use of tied weights between the encoder and decoder and suggests ways to extend the success of denoising autoencoders to a larger family of energy-based models.},
journal = {Neural Comput.},
month = jul,
pages = {1661–1674},
numpages = {14}
}

@article{mirafzali2025malliavincalculusapproachscore,
      title={A Malliavin calculus approach to score functions in diffusion generative models}, 
      author={Mirafzali, E. and Proske, F. and Gupta, U. and Venturi, D. and Marinescu, R.},
      year={2025},
      journal={arXiv:2507.05550},
      eprint={2507.05550},
      archivePrefix={arXiv},
      primaryClass={stat.ML},
      doi={10.48550/arXiv.2507.05550}, 
}

@article{mirafzali2025scorebaseddiffusionmodelsinfinite,
      title={Score-Based Diffusion Models in Infinite Dimensions: A Malliavin Calculus Perspective}, 
      author={Mirafzali, E. and Proske, F. and Venturi, D. and Marinescu, R.},
      year={2025},
      journal={arXiv:2508.20316},
      eprint={2508.20316},
      archivePrefix={arXiv},
      primaryClass={math.PR},
      doi={10.48550/arXiv.2508.20316}, 
}

@article{ni2025divergencekernelmethodscoresrandom,
      title={Divergence-Kernel method for scores of random systems}, 
      author={Ni, A.},
      year={2025},
      journal={arXiv:2507.04035},
      eprint={2507.04035},
      archivePrefix={arXiv},
      primaryClass={math.PR},
      doi={10.48550/arXiv.2507.04035}, 
}

@article{greco2025malliavingammacalculusapproachscore,
      title={A Malliavin-Gamma calculus approach to Score Based Diffusion Generative models for random fields}, 
      author={Greco, G.},
      year={2025},
      journal={arXiv:2505.13189},
      eprint={2505.13189},
      archivePrefix={arXiv},
      primaryClass={math.PR},
      doi={10.48550/arXiv.2505.13189}, 
}

@article{pidstrigach2025conditioningdiffusionsusingmalliavin,
      title={Conditioning Diffusions Using Malliavin Calculus}, 
      author={Pidstrigach, J. and Baker, E. and Domingo-Enrich, C. and Deligiannidis, G. and Nüsken, N.},
      year={2025},
      journal={arXiv:2504.03461},
      eprint={2504.03461},
      archivePrefix={arXiv},
      primaryClass={stat.ML},
      doi={10.48550/arXiv.2504.03461}, 
}

\end{document}